\newcommand{\di}[1]{\textcolor{red}{#1}}
\newtheorem{theorem}{Theorem}
\newtheorem{lemma}{Lemma}
\newtheorem{corollary}{Corollary}
\newtheorem{fact}{Fact}
\theoremstyle{definition}
\newtheorem{definition}{Definition}
\newtheorem{assumption}{Assumption}
\theoremstyle{remark}
\newtheorem{remark}{Remark}
\title{Towards User-level Private Reinforcement Learning with Human
Feedback}
\author{
  \textbf{Jiaming Zhang\textsuperscript{1,2,*}},
  \textbf{Mingxi Lei\textsuperscript{4,*}},
  \textbf{Meng Ding\textsuperscript{2,4}},
  \textbf{Mengdi Li\textsuperscript{2,3}},
\\
  \textbf{Zihang Xiang\textsuperscript{2,3}},
  \textbf{Difei Xu\textsuperscript{2,3}},
  \textbf{Jinhui Xu\textsuperscript{4}},
  \textbf{Di Wang \textsuperscript{2,3}} 
\\
\\
  \textsuperscript{1}Renmin University of China,
 \textsuperscript{2}Provable Responsible AI and Data Analytics (PRADA) Lab,
\\  \textsuperscript{3}King Abdullah University of Science and Technology,
 \textsuperscript{4} State
University of New York at Buffalo \\
 \textsuperscript{*} Equal Contributions \\
   \small{
    \textbf{Correspondence:} \href{mailto:di.wang@kaust.edu.sa}{di.wang@kaust.edu.sa}
  }
}
\begin{document}
\maketitle

\begin{abstract}
Reinforcement Learning with Human Feedback (RLHF) has emerged as an influential technique, enabling the alignment of large language models (LLMs) with human preferences. Despite the promising potential of RLHF, how to protect user preference privacy has become a crucial issue. Most previous work has focused on using  differential privacy (DP) to protect the privacy of individual data. However, they have concentrated primarily on item-level privacy protection and have unsatisfactory performance for user-level privacy, which is more common in RLHF. This study proposes a novel framework, AUP-RLHF, which integrates user-level label DP into RLHF. We first show that the classical random response algorithm, which achieves an acceptable performance in item-level privacy, leads to suboptimal utility when in the user-level settings. We then establish a lower bound for the user-level label DP-RLHF and develop the AUP-RLHF algorithm, which guarantees $(\varepsilon, \delta)$ user-level privacy and achieves an improved estimation error. Experimental results show that AUP-RLHF outperforms existing baseline methods in sentiment generation and summarization tasks, achieving a better privacy-utility trade-off. 
\end{abstract}

\section{Introduction}
The advent of large language models (LLMs) has significantly transformed the field of artificial intelligence, leading to widespread adoption and application in diverse domains~\cite{hong2024dissecting,yangmodel,chengmulti,zhang2024locate}, such as legal \cite{chalkidis2023retrieval,wu2023precedent}, medical \cite{thirunavukarasu2023large,garcia2024medical,yuan2024continued}, and coding assistant \cite{ross2023programmer,nam2024using,kazemitabaar2024codeaid}. A notable advancement in this area is the incorporation of Reinforcement Learning from Human Feedback (RLHF), a paradigm that improves the performance and alignment of language models by integrating human preferences and feedback into the training process~\cite{ouyang2022}.

Currently, there is a substantial body of research on RLHF, with methods broadly categorized into reward model-based and reward model-free approaches. Although SOTA methods in the academic domain typically lean toward reward model-free approaches, such as DPO~\cite{dpo}, systems such as ChatGPT~\cite{chatgpt} and Claude~\cite{claude} employ a reward model combined with reinforcement learning algorithms (e.g. PPO). This distinction arises because DPO tends to produce out-of-distribution biased responses, and its performance is susceptible to distribution shifts between model output and preference data set~\cite{dpobetter}. In experimental evaluations, PPO consistently outperforms DPO in tasks such as dialogue generation and code generation~\cite{dpobetter}.

Although RLHF has shown considerable promise, it also introduces substantial privacy concerns, particularly with regard to the
identity and preferences of human labelers. For example, in the financial domain, investment advisors can annotate and provide feedback on LLM-generated investment recommendations. These preference labels may encompass sensitive information such as individual investment strategies, risk preferences, and financial objectives. 
In addition, LLMs have the potential to infer individual user preferences, interests, or personal attributes by analyzing queries and interactions posed. Such inferences may lead to the generation of targeted advertisements, personalized suggestions, or other customized content, which in turn could inadvertently expose sensitive aspects of the user's private life~\cite{privacysurvey,llmprivacy1,llmprivacy2,hu2024differentially,huang2024private}.

To address the aforementioned privacy concerns, Differential Privacy (DP) \cite{dp1}, which quantifies privacy protection by adding noise to ensure the anonymity of individual data in statistical analysis,  has become a common approach, effectively protecting user privacy by introducing noise into the data.  Recently, there have been an increasing number of studies focusing on DP-RLHF~\cite{rewarddp,userlevelrdplhf,rlhfdp23,PROPSdp}, demonstrating the feasibility of DP to RLHF. 
 However, traditional DP algorithms in RLHF focus on item-level privacy, where they ensure that the inclusion or exclusion of any single data point does not significantly affect the model’s output. In practical scenarios, a user typically contributes multiple pieces of data, rendering item-level privacy insufficient. Specifically, existing methods offer privacy guarantees that deteriorate as user participation increases or blindly introduce excessive noise, leveraging the group property of differential privacy, which severely impacts the performance of the model in deployment.~\cite{userleveldp1}. For example, in Section \ref{sec:RR}, we will show that the Randomized Response mechanism, which is the canonical way of protecting label privacy and achieves the nearly optimal rate in the item-level case~\cite{rewarddp}, will significantly degrade its utility in the user-level privacy scenario. Thus, there is a pressing need for developing user-level label DP-RLHF with a better privacy-utility tradeoff.

To tackle the challenges outlined above, motivated by the current developments of user-level DP supervised learning \cite{geyer2017differentially,bassily2023user,userleveldp1,liu2024user,ghazi2024user,zhao2024huber}, we propose a novel framework, namely AUP-RLHF, which satisfies user-level label DP and achieves a smaller estimation error. The key intuition is the use of the average gradient of the loss with respect to each user's preferences for parameter updates in the DP-SGD algorithm~\cite{abadi2016deep,su2024faster,shen2023differentially,xiao2023theory,hu2022high}. However, unlike the existing DP-SGD designed for item level, we leverage outlier removal and adaptive sampling processes to handle the high sensitivity to ensure that the gradient exhibits concentrated properties. Then, at each step of the SGD update, we add noise related to the concentrated parameters only to ensure privacy, which is significantly small.  Our contributions are as follows:

\begin{enumerate}
    \item We first demonstrate that under canonical assumptions, the classical randomized response method, which outperforms well in the item-level setting, exhibits a large estimation error \( O({\frac{d\sqrt{m}}{ \sqrt{n}\epsilon}}) \) in the user-level setting. Specifically, when the contribution \( m \) for each user is large, randomized response sacrifices utility to provide uniform privacy protection in the privacy-utility trade-off, rendering the algorithm unsuitable.  We also establish a lower bound \( \Omega\left(\frac{d}{\sqrt{nm}}+\frac{\sqrt{d}}{\sqrt{m} n \varepsilon }\right) \) for user-level DP-RLHF. 
    
    \item  To close the gap between the upper and lower bounds, we develop AUP-RLHF that satisfies \((\varepsilon, \delta)\)-user-level label privacy, with an upper bound of $O\left({\frac{d\sqrt{d}}{\sqrt{m}n \varepsilon}}\right)$ for the estimation error.  Additionally, we extend the algorithm to the \(K\)-wise case, where the upper bound of the estimation error remains \(O\left({\frac{K^2d\sqrt{d}}{\sqrt{m}n \varepsilon}}\right)\).
    
    \item We empirically validate AUP-RLHF through experiments on commonly studied tasks. We conducted controlled sentiment generation and summarization experiments. We show that, across base models of varying sizes and different privacy parameter configurations, AUP-RLHF  consistently outperforms the other user-level DP baselines. 
\end{enumerate}
\section{Related Work}
\noindent \textbf{User-level DP learning. }
User-level DP has garnered increasing attention due to its privacy protection, which aligns more closely with real-world scenarios. Several studies have explored various tasks of user-level DP, such as mean estimation~\cite{userleveldp1}, empirical risk minimization~\cite{userleveldp1}, and stochastic convex optimization~\cite{userleveldp2,userleveldp}. \citealt{userleveldp1} first introduced user-level DP, providing stricter privacy protection by safeguarding the entire contribution of users, based on a novel private mean estimation algorithm. \citealt{userleveldp2} further investigated how carefully selected first-order optimization methods can achieve optimal utility under local DP conditions. \citealt{userleveldp}, on the other hand, ensured query concentration through above-threshold and adaptive sampling techniques, thereby releasing the minimal number of user assumptions while still achieving optimal utility. However, none of them has considered RLHF. In this paper, we build the first theoretical results on user-level (label) DP-RLHF. 

\noindent \textbf{Privacy-preserving in RLHF.}
\citealt{kwise} provided a theoretical framework for RLHF under clean data conditions, demonstrating the properties of MLE and pessimistic MLE. However, due to privacy leakage issues \cite{li2023multi}, MLE cannot be applied directly. Building on this, \citealt{rewarddp} introduced a theoretical framework for incorporating DP into RLHF, designing an unbiased loss function based on random responses to achieve DP.   \citealt{PROPSdp} proposed a novel multi-stage mechanism that privately aligns the model with labels from previous stages combined with random responses. However, as we will show later, random responses cannot be easily extended to more practical user-level settings, which results in poor utility. Meanwhile, \citealt{rlhfdp23} utilized DP-SGD across the three stages of RLHF—Supervised Fine-Tuning, Reward Model Learning, and Alignment with PPO—to ensure privacy preservation. \citealt{userlevelrdplhf} explored the application of Group Privacy and User-wise DP-SGD to achieve user-level DP in RLHF. However, they provide only experimental results and lack rigorous theoretical guarantees. Our experiments will show that our proposal method has better performance than these methods.


\section{Preliminaries}

\subsection{RLHF}
Let \( \mathcal{D} = (s_{i}, a_{i}^{0}, a_{i}^{1}, y_{i})_{i=1}^{n} \) be a dataset of \( n \) samples, where each sample has a state \( s_{i} \in \mathcal{S} \) (e.g., prompt given to a language model), two actions \( a_{i}^{0}, a_{i}^{1} \in \mathcal{A} \) (e.g., two responses from the language model), and label \( y_{i} \in \{0,1\} \) indicating which action is preferred by human experts. We assume that the state \( s_{i} \) is first sampled from some fixed distribution \( \rho \). The pair of actions \( (a_{i}^{0}, a_{i}^{1}) \) are then sampled from some joint distribution (i.e., a behavior policy) \( \mu \) conditioned on \( s_{i} \). Finally, the label \( y_{i} \) is sampled from a Bernoulli distribution conditioned on \( (s_{i}, a_{i}^{0}, a_{i}^{1}) \), i.e., for \( l \in \{0,1\} \),
\begin{align*}
&\mathbb{P}_{\theta^{*}}\left[y_{i}=l \mid s_{i}, a_{i}^{0}, a_{i}^{1}\right]\\&= 
\frac{\exp\left(r_{\theta^{*}}\left(s_{i}, a_{i}^{l}\right)\right)}{
    \exp\left(r_{\theta^{*}}\left(s_{i}, a_{i}^{0}\right)\right) + 
    \exp\left(r_{\theta^{*}}\left(s_{i}, a_{i}^{1}\right)\right)
} .
\end{align*}
Here, \( r_{\theta^{*}}(\cdot, \cdot) \) denotes the reward model parameterized by an unknown parameter \( \theta^{*} \), which we aim to estimate from the dataset \( \mathcal{D} \). This model is known as the Bradley-Terry-Luce (BTL) model. In this work, we specifically focus on a linear reward model as follows:
$$r_{\theta^{*}}(s, a) = \phi(s, a)^{\top} \theta^{*},$$
where \( \phi: \mathcal{S} \times \mathcal{A} \rightarrow \mathbb{R}^{d} \) is some known and fixed feature map. For instance, such a \( \phi \) can be constructed by removing the last layer of a pre-trained language model, and in that case, \( \theta^{*} \) corresponds to the weights of the last layer. With this model, one can equivalently write the probability of sampling \( y_{i}=1 \) given \( (s_{i}, a_{i}^{0}, a_{i}^{1}) \) as
\begin{align*}
    &\mathbb{P}_{\theta^{*}}\left[y_{i}=1 \mid s_{i}, a_{i}^{0}, a_{i}^{1}\right] \\
    &= \sigma\left(\left(\phi\left(s_{i}, a_{i}^{1}\right) - \phi\left(s_{i}, a_{i}^{0}\right)\right)^{\top} \theta^{*}\right),
\end{align*}
where \( \sigma(z) = \frac{1}{1 + e^{-z}} \) refers to the sigmoid function. Let \( x_{i} = \phi(s_{i}, a_{i}^{1}) - \phi(s_{i}, a_{i}^{0}) \) denote the differential feature of actions \( a_{i}^{1} \) and \( a_{i}^{0} \) at state \( s_{i} \). For any \( \theta \in \mathbb{R}^{d} \), we use this differential feature to define the predicted probability of the label $y_{i}$ given $x_i$ as follows (we omit dependence on \( \theta \) for brevity):
\begin{align*}
    p_{i, 1} &:= \mathbb{P}_{\theta}\left[y_{i}=1 \mid x_{i}\right] = \sigma\left(x_{i}^{\top} \theta\right), \\
    p_{i, 0} & := 1 - p_{i, 1}.
\end{align*}

\subsection{Differential Privacy}
We use the notation \( Z_i := (s_{i,j}, a_{i,j}^{0}, a_{i,j}^{1}, {y}_{i,j})_{j=1}^{m}  \) to represent  user $i$'s contributions, and \\$z_i:= (s_{i,j}, a_{i,j}^{0}, a_{i,j}^{1}, {y}_{i,j})$  to represent  user $i$'s $j$th contributions. We use capital Z to denote one user and z to denote one item. We consider user-level DP, which protects all of a user's contributions, meaning that the output of an algorithm \( M \) operating on a dataset with $n$ users (thus, $mn$ samples in total) \( D=\left(Z_i\right)_{i=1}^{n}  \)  is `indistinguishable' when a single user's contributions in \( D \) are altered. A formal definition is provided below.

\begin{definition}\textbf{(User-Level Differential Privacy).}
A mechanism \( \mathcal{M}:\left(\mathcal{Z}^{m}\right)^{n} \rightarrow \mathbb{R}^{d} \) is \((\varepsilon, \delta)\) user-level differentially private, if for any neighboring datasets \( \mathcal{D}, \mathcal{D}^{\prime} \in \left(\mathcal{Z}^{m}\right)^{n} \) that differ in one user, and for any event \( O \) in the range of \( \mathcal{M} \), we have 
\[
\operatorname{Pr}[\mathcal{M}(\mathcal{D}) \in O] \leq e^{\varepsilon} \operatorname{Pr}[\mathcal{M}(\mathcal{D}^{\prime}) \in O] + \delta.
\]
\end{definition}
The original definition of DP~\cite{dp1} assumes that the whole dataset is private, which is quite strong for many scenarios in RLHF. As in RLHF, a user's preferences are only associated with the label of the \((prompt, response, label)\) tuple, while the prompt itself is not sensitive, as it is sampled from pre-collected datasets that are already considered public knowledge~\cite{rewarddp,PROPSdp}. Thus, 
in this paper, we consider the label DP~\cite{labeldp}. We assume that \((s, a^{0}, a^{1})\) is publicly accessible, while user preference data \(y \in \{0, 1\}\) are private. Therefore, the focus is on protecting the user's preference information \(y\). The definition of label DP at the user level is provided below. 

\begin{definition}\textbf{(User-Level Label Differential Privacy).}
A mechanism \( \mathcal{M}:\left(\mathcal{Z}^{m}\right)^{n} \rightarrow \mathbb{R}^{d} \) is \((\varepsilon, \delta)\) user-level label differentially private, if for any neighboring datasets \( \mathcal{D}, \mathcal{D}^{\prime} \in \left(\mathcal{Z}^{m}\right)^{n} \) that differ in the labels of one user, and for any event \( O \) in the range of \( \mathcal{M} \), we have 
$$
\operatorname{Pr}[\mathcal{M}(\mathcal{D}) \in O] \leq e^{\varepsilon} \operatorname{Pr}[\mathcal{M}(\mathcal{D}^{\prime}) \in O] + \delta.
$$
\end{definition}
Note that item-level label differential privacy is a specific case of this definition with \( m = 1 \).

\section{Sub-optimality of Random Response}\label{sec:RR}
To achieve label-level DP, a natural approach is to employ the random response (RR)~\cite{warner1965randomized} mechanism to each label, which achieves privacy by randomly flipping labels. In fact, such a simple idea has been shown to achieve satisfactory performance in the item-level DP-RLHF both theoretically and practically~\cite{PROPSdp,rewarddp}. However, the theoretical behavior of RR in the user-level setting remains unclear. 
In the following, we first investigate the theoretical behavior of RR in the user-level setting. We will then show how the upper bound of the estimation error degrades when each user contributes a large amount of data \( m \). 

\subsection{User-level Random Response}  
Let \(\varepsilon \geq 0\) , \(m\) be the sample numbers of a single user, and \(y \in \{0,1\}\) be the true label. By the group privacy property, to extend the classical RR to the user level, we need to flip each label to make it satisfy $\frac{\varepsilon}{m}$-DP. In detail, the outputs of the RR mechanism \(\tilde{y}\) follow the following probability distribution        
\begin{align*}
    \mathbb{P}[\tilde{y} = y] &= \frac{e^{\varepsilon/m}}{1 + e^{\varepsilon/m}} = \sigma(\varepsilon/m),\\
    \mathbb{P}[\tilde{y} \neq y] &= 1 - \sigma(\varepsilon/m).
\end{align*}
It is easy to show that \textbf{User-level RR} is \(\varepsilon\)-\textit{user-level} label DP. 
Then, we consider the following loss on the perturbed data, which is unbiased to the original loss in (non-private) RLHF. We will present the design of this loss function in Appendix~\ref{sec:debias_loss}.

{\small
\begin{equation}
\begin{aligned}\label{eq:rrloss}
\widehat{l}_{\mathcal{D}, \varepsilon}(\theta) = -\sum_{i=1}^{n}&\sum_{j=1}^{m} \left[ \mathbbm{1}\left(\widetilde{y}_{i,j}=1\right) \log \widehat{p}_{i,j}^{1} \right. \\
&\quad + \left. \mathbbm{1}\left(\widetilde{y}_{i,j}=0\right) \log \widehat{p}_{i,j}^{0} \right],
\end{aligned}
\end{equation}
}
where the predicted scores of each randomized label \( \widetilde{y}_{i,j} \) given \( x_{i,j} \) are defined as:
{\footnotesize
\[
\widehat{p}_{i,j}^{1} = \frac{\sigma\left(x_{i,j}^{\top} \theta\right)^{\sigma(\varepsilon/m)}}{\left(1-\sigma\left(x_{i,j}^{\top} \theta\right)\right)^{(1-\sigma(\varepsilon/m))}},
\]}
{\footnotesize\[
\widehat{p}_{i,j}^{0} = \frac{\left(1-\sigma\left(x_{i,j}^{\top} \theta\right)\right)^{\sigma(\varepsilon/m)}}{\sigma\left(x_{i,j}^{\top} \theta\right)^{(1-\sigma(\varepsilon/m))}}.
\]}
Thus, our private model is defined as 
\begin{equation}\label{eq:1}
    \widehat{\theta}_{\mathrm{RR}} \in \operatorname{argmin}_{\theta \in \Theta_{B}} \widehat{l}_{\mathcal{D}, \varepsilon}(\theta), 
\end{equation}
where \( \widehat{\theta}_{\mathrm{RR}} \) satisfies \(\varepsilon\)-\textit{user-level} label DP due to RR and the post-processing property of DP. To get the estimation error of $\widehat{\theta}_{\mathrm{RR}}$, we make the following assumption, which is standard in the existing literature~\cite{shah2016,shin2023,rewarddp}.

\begin{assumption}[Boundedness] \label{ass:1}
(a) \( \theta^{*} \) lies in the set {\( \Theta_{B} = \left\{ \theta \in \mathbb{R}^{d} \mid \langle \mathbf{1}, \theta \rangle = 0, \|\theta\| \leq B \right\} \)} with some constant $B$. Here the condition \( \langle \mathbf{1}, \theta \rangle = 0 \) ensures identifiability of \( \theta^{*} \).
(b) Features are bounded, i.e., for all $(s, a)$ we have $\|\phi(s, a)\| \leq L$ for some constant $B$. 

\end{assumption}
Let $\Sigma_{\mathcal{D}} := \frac{1}{n} \sum_{i=1}^{n} x_i x_i^\top$ denote the sample covariance matrix of differential features $x_i = \phi(s_i, a_i^1) - \phi(s_i, a_i^0)$.
\begin{theorem} \label{thm:1}
For any $\varepsilon>0$, the private model $\hat{\theta}_{\mathrm{RR}}$ is $\varepsilon$-DP. Moreover, under Assumption \ref{ass:1}, for any $\alpha>0$, with probability at least \( 1 - \alpha \), we have 
{\small\[
\left\|\widehat{\theta}_{\mathrm{RR}} - \theta^{*}\right\|_{2} \leq O(\frac{1}{\gamma \sqrt{\lambda_{\min}(\Sigma_{\mathcal{D}})}} \frac{e^{\varepsilon/m} + 1}{e^{\varepsilon/m} - 1} \sqrt{\frac{d + \log(1 / \alpha)}{n m }} ), 
\]}
where \(\gamma = \frac{1}{2 + e^{-2 L B} + e^{2 L B}}\), \(\lambda_{\min}(\Sigma_{\mathcal{D}})\) is the minimum eigenvalue of \(\Sigma_{\mathcal{D}}\).
\end{theorem}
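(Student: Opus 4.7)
The plan is to handle privacy and utility separately and reuse the standard BTL maximum-likelihood analysis once the debiased loss has been folded in. For the privacy claim, each of a user's $m$ labels is perturbed by an independent copy of Randomized Response at level $\varepsilon/m$, which is $(\varepsilon/m)$-DP per label. Basic composition across the $m$ independent perturbations of one user's labels yields $\varepsilon$-DP when the user's entire contribution is swapped, and the post-processing property then transfers this guarantee to $\widehat{\theta}_{\mathrm{RR}}$, which is a deterministic function of the perturbed labels and the (public) features.

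For the utility bound, I would treat $\widehat{\theta}_{\mathrm{RR}}$ as an M-estimator and run the usual three-step recipe. First, invoke the debiasing property of the loss (formally established in Appendix \ref{sec:debias_loss}): taking the conditional expectation over the RR noise $\widetilde{y}_{i,j}\mid y_{i,j}$, one obtains $\mathbb{E}[\widehat{l}_{\mathcal{D},\varepsilon}(\theta)\mid \mathcal{D}] = l_{\mathcal{D}}(\theta)$, where $l_{\mathcal{D}}$ is the standard BTL negative log-likelihood on the clean data. Consequently $\nabla \widehat{l}_{\mathcal{D},\varepsilon}(\theta^{*})$ is a zero-mean random vector once we center against the clean-label gradient, and $\theta^{*}$ minimizes the population version of the loss.

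Second, establish strong convexity. A direct Hessian computation for the BTL log-likelihood gives
\begin{equation*}
\nabla^{2} l_{\mathcal{D}}(\theta) \;=\; \sum_{i,j} \sigma(x_{i,j}^{\top}\theta)\bigl(1-\sigma(x_{i,j}^{\top}\theta)\bigr)\, x_{i,j} x_{i,j}^{\top},
\end{equation*}
and under Assumption \ref{ass:1} each factor $\sigma(1-\sigma)$ is at least $\gamma = (2+e^{-2LB}+e^{2LB})^{-1}$, so $\nabla^{2} l_{\mathcal{D}}(\theta) \succeq \gamma\, nm\, \Sigma_{\mathcal{D}}$ uniformly on $\Theta_{B}$. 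Combining first-order optimality of $\widehat{\theta}_{\mathrm{RR}}$ with this curvature bound yields
\begin{equation*}
\bigl\|\widehat{\theta}_{\mathrm{RR}}-\theta^{*}\bigr\|_{2} \;\lesssim\; \frac{1}{\gamma\,nm\,\sqrt{\lambda_{\min}(\Sigma_{\mathcal{D}})}}\, \bigl\|\nabla \widehat{l}_{\mathcal{D},\varepsilon}(\theta^{*})\bigr\|_{\Sigma_{\mathcal{D}}^{-1}}.
\end{equation*}

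The main obstacle, and the step that produces the $(e^{\varepsilon/m}+1)/(e^{\varepsilon/m}-1)$ factor, is the gradient concentration at $\theta^{*}$. Writing $\nabla\widehat{l}_{\mathcal{D},\varepsilon}(\theta^{*}) = \sum_{i,j} g_{i,j}$, each $g_{i,j}$ is a bounded random vector whose expectation coincides with the clean BTL score and whose conditional variance, after a short calculation exploiting that $\widetilde{y}_{i,j}=y_{i,j}$ with probability $\sigma(\varepsilon/m)$, inflates by a factor $\bigl((e^{\varepsilon/m}+1)/(e^{\varepsilon/m}-1)\bigr)^{2}$ relative to the clean case. Applying a vector Bernstein-type concentration inequality (as in \citealt{shah2016,rewarddp}) in the $\Sigma_{\mathcal{D}}^{-1}$-norm with $nm$ independent summands then gives, with probability at least $1-\alpha$,
\begin{equation*}
\bigl\|\nabla\widehat{l}_{\mathcal{D},\varepsilon}(\theta^{*})\bigr\|_{\Sigma_{\mathcal{D}}^{-1}} \;\lesssim\; \frac{e^{\varepsilon/m}+1}{e^{\varepsilon/m}-1}\,\sqrt{nm\,(d+\log(1/\alpha))}.
\end{equation*}
Substituting this back into the curvature inequality delivers the claimed bound. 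The routine calculations to verify are the exact variance formula for $g_{i,j}$ under RR and checking that the bounded-difference/Bernstein constants match; the conceptually delicate point is ensuring that the $\sqrt{d}$ (rather than $d$) rate survives the $\Sigma_{\mathcal{D}}^{-1}$-weighted concentration, which is handled by a standard covering argument on the unit ball in the $\Sigma_{\mathcal{D}}$ norm.
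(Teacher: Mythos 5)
Your overall skeleton (privacy by composition plus post-processing; utility by the standard M-estimation recipe of strong convexity plus concentration of the score in the $\Sigma_{\mathcal{D}}^{-1}$-norm via a sub-Gaussian quadratic-form bound) is the same as the paper's, and your final bound is the right one. However, the middle of your argument misattributes where the factor $\frac{e^{\varepsilon/m}+1}{e^{\varepsilon/m}-1}$ comes from, and as written the curvature step is invalid. You lower-bound the Hessian of the \emph{clean} loss $l_{\mathcal{D}}$ by $\gamma\, nm\,\Sigma_{\mathcal{D}}$ and then combine it with first-order optimality of $\widehat{\theta}_{\mathrm{RR}}$; but $\widehat{\theta}_{\mathrm{RR}}$ minimizes the \emph{debiased} loss $\widehat{l}_{\mathcal{D},\varepsilon}$, so the basic inequality must use the curvature of $\widehat{l}_{\mathcal{D},\varepsilon}$. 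A direct computation (done in the paper) shows
\begin{equation*}
\nabla^{2}\widehat{l}_{\mathcal{D},\varepsilon}(\theta)=\bigl(2\sigma(\varepsilon/m)-1\bigr)\cdot\frac{1}{nm}\sum_{i,j}\sigma'(\theta^{\top}x_{i,j})\,x_{i,j}x_{i,j}^{\top},\qquad 2\sigma(\varepsilon/m)-1=\frac{e^{\varepsilon/m}-1}{e^{\varepsilon/m}+1},
\end{equation*}
so the strong-convexity constant deflates to $\gamma_{\varepsilon}=\gamma\,\frac{e^{\varepsilon/m}-1}{e^{\varepsilon/m}+1}$; inverting it is what produces the blow-up. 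Correspondingly, your claim that the conditional variance of the score summands inflates by $\bigl(\tfrac{e^{\varepsilon/m}+1}{e^{\varepsilon/m}-1}\bigr)^{2}$ is false for $\widehat{l}_{\mathcal{D},\varepsilon}$ as defined: the summand coefficient $V_{\theta^{*},i,j}$ equals $\mathbb{P}_{\theta^*}[\tilde{y}_{i,j}=0\mid x_{i,j}]$ or $-\mathbb{P}_{\theta^*}[\tilde{y}_{i,j}=1\mid x_{i,j}]$, hence is bounded by $1$ and is $O(1)$-sub-Gaussian uniformly in $\varepsilon/m$.

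Your bookkeeping happens to give the same product only because it is equivalent to rescaling the loss by $(2\sigma(\varepsilon/m)-1)^{-1}$, which restores the clean Hessian while inflating the score by exactly the reciprocal factor; the two misstatements cancel. To make the proof correct you should either (i) keep the debiased loss, take the score to be $1$-sub-Gaussian, and use the deflated strong-convexity constant $\gamma_{\varepsilon}$, or (ii) explicitly introduce the rescaled loss and then your gradient-inflation accounting is legitimate. As stated, pairing the gradient of one function with the Hessian of another is a genuine gap, even though it is a one-line repair and the theorem's bound survives.
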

Note that, the bound is non-trivial only when the sample covariance matrix is positive definite. However, this assumption is too strong due to the high dimensionality of the feature vector.  We can relax it by imposing that the population covariance matrix is positive definite by imposing a coverage assumption on the state-action feature space, which is commonly encountered in offline bandit and reinforcement learning settings~\cite{yin2022near}.
\begin{assumption}[Coverage of feature space]\label{ass:2} 
Defining the differential state-action features \(x = \phi\left(s, a^{1}\right) - \phi\left(s, a^{0}\right)\), and population covariance matrix 
$\Sigma = \mathbb{E}_{s \sim \rho(\cdot),\,(a^{0}, a^{1}) \sim \mu(\cdot \mid s)} \left[x x^{\top}\right]$. 
We assume that \(\lambda_{\min}(\Sigma) \geq \kappa\) for some \(\kappa > 0\).
\end{assumption}

\begin{corollary}\label{cor:1}
Under the same assumption in Theorem \ref{thm:1} and Assumption \ref{ass:2}, with probability at least \( 1 - \alpha \), we have 
\[
\left\|\widehat{\theta}_{\mathrm{RR}} - \theta^{*}\right\|_{2} \leq O(\frac{1}{\gamma \kappa }\frac{e^{\varepsilon/m} + 1}{e^{\varepsilon/m} - 1} \sqrt{\frac{1 + \log(1 / \alpha)}{n m }} ) 
\]
\end{corollary}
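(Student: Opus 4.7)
The plan is to derive Corollary~\ref{cor:1} from Theorem~\ref{thm:1} by using Assumption~\ref{ass:2} to ensure that, with high probability over the sampling of state-action triples, $\lambda_{\min}(\Sigma_{\mathcal{D}})$ is bounded below by a constant fraction of the population minimum eigenvalue $\kappa$. Once this control is established, the corollary follows by substituting the population lower bound into the bound of Theorem~\ref{thm:1} and taking a union bound over the two high-probability events.

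First, I would leverage Assumption~\ref{ass:1}(b) to get $\|x_{i,j}\|_2 \leq 2L$, so each rank-one outer product $x_{i,j} x_{i,j}^\top$ is positive semidefinite with operator norm at most $4L^2$. Since the triples $(s_{i,j}, a_{i,j}^0, a_{i,j}^1)$ are drawn i.i.d.\ from $\rho \otimes \mu$, the matrices $x_{i,j} x_{i,j}^\top$ are i.i.d.\ with mean $\Sigma$ satisfying $\lambda_{\min}(\Sigma) \geq \kappa$.

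Second, I would apply a standard matrix concentration inequality (matrix Bernstein, or matrix Chernoff for positive semidefinite summands) to $\Sigma_{\mathcal{D}} = \frac{1}{nm}\sum_{i,j} x_{i,j} x_{i,j}^\top$. This gives, with probability at least $1-\alpha/2$, an operator-norm bound of the form
\[
\|\Sigma_{\mathcal{D}} - \Sigma\| \;\leq\; C L^{2}\sqrt{\tfrac{\log(d/\alpha)}{nm}} \;+\; \tfrac{C L^{2}\log(d/\alpha)}{nm}
\]
for a universal constant $C$. Weyl's inequality then yields $\lambda_{\min}(\Sigma_{\mathcal{D}}) \geq \kappa - \|\Sigma_{\mathcal{D}} - \Sigma\| \geq \kappa/2$ whenever $nm$ exceeds a constant times $L^{4}\log(d/\alpha)/\kappa^{2}$.

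Third, I would invoke Theorem~\ref{thm:1} with confidence $1-\alpha/2$, take a union bound with the matrix concentration event to achieve overall confidence $1-\alpha$, and substitute $\lambda_{\min}(\Sigma_{\mathcal{D}}) \geq \kappa/2$ into its bound. This removes the dependence on the random sample covariance, and the remaining dimension factors present in Theorem~\ref{thm:1} are absorbed into the $O(\cdot)$ constants of the corollary's statement. The main obstacle is making the matrix concentration sharp enough that the clean $\kappa$ dependence emerges in the regime of interest; matrix Bernstein introduces an extra $\log d$ factor, but this only affects the sample-size threshold above which the concentration event holds, rather than the rate itself. A secondary subtlety is propagating eigenvalue fluctuations through the $1/\sqrt{\lambda_{\min}(\Sigma_{\mathcal{D}})}$ factor of Theorem~\ref{thm:1} to match the $1/\kappa$ factor stated in the corollary; this step becomes routine once $\lambda_{\min}(\Sigma_{\mathcal{D}}) \geq \kappa/2$ is in hand.
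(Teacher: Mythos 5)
Your overall route---lower-bounding $\lambda_{\min}(\Sigma_{\mathcal{D}})$ by its population counterpart via matrix concentration and Weyl's inequality, then substituting into Theorem~\ref{thm:1}---is the natural one, and it is in fact more careful than the paper, which states the corollary without any proof and appears to silently replace $\lambda_{\min}(\Sigma_{\mathcal{D}})$ by $\kappa$. Your version makes explicit the sample-size requirement $nm \gtrsim L^4\log(d/\alpha)/\kappa^2$ that this replacement needs, which is a worthwhile observation.

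The genuine gap is in your final step. Substituting $\lambda_{\min}(\Sigma_{\mathcal{D}}) \geq \kappa/2$ into Theorem~\ref{thm:1} gives a bound proportional to $\frac{1}{\gamma\sqrt{\kappa}}\sqrt{\frac{d+\log(1/\alpha)}{nm}}$, whereas the corollary asserts $\frac{1}{\gamma\kappa}\sqrt{\frac{1+\log(1/\alpha)}{nm}}$: the exponent on $\kappa$ changes from $1/2$ to $1$ and the factor of $d$ disappears. Your claim that ``the remaining dimension factors are absorbed into the $O(\cdot)$ constants'' is not valid---$\sqrt{d}$ is not a constant, and $1/\sqrt{\kappa}$ is in general much smaller than $1/\kappa$, so the two expressions do not coincide term by term. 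The missing ingredient is the trace constraint implied by Assumption~\ref{ass:1}(b): since $d\,\kappa \leq d\,\lambda_{\min}(\Sigma) \leq \operatorname{tr}(\Sigma) = \mathbb{E}\|x\|^2 \leq 4L^2$, one has $\sqrt{d/\kappa} = \sqrt{d\kappa}/\kappa \leq 2L/\kappa$ and $1/\sqrt{\kappa} \leq 2L/(\sqrt{d}\,\kappa) \leq 2L/\kappa$, whence $\frac{1}{\sqrt{\kappa}}\sqrt{d+\log(1/\alpha)} \leq \frac{C L}{\kappa}\sqrt{1+\log(1/\alpha)}$ for a universal constant $C$. Only with this exchange (treating $L$ as the constant it is declared to be) does your derived bound take the stated form; without it you have proved a correct but differently shaped inequality, which matches the corollary only in the extremal regime $\kappa = \Theta(L^2/d)$ discussed in Remark~\ref{rmk:1}.
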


\begin{remark}\label{rmk:1} The estimation error in Corollary~\ref{cor:1} is influenced by the convergence parameter \(\kappa\), which implicitly depends on the dimensionality \(d\)~\cite{wang2020}. Since \(\|x\| \leq L\), it follows that \(\kappa \geq O\left(\frac{L^2}{d}\right)\) by Assumption~\ref{ass:1}. Thus, to implement user label level DP, the estimation error of estimator \( \widehat{\theta}_{\mathrm{RR}} \) is  \(O(\frac{1}{e^{\varepsilon/m} - 1}\sqrt{\frac{d^2}{ mn}})\). 
When \(m = 1\), i.e., when in the item-level case, our bound matches the nearly-optimal rate shown in \cite{rewarddp}.  However, when \(m\) is large, \(\frac{\varepsilon}{m}\) becomes a vanishingly small quantity. Consequently, the estimation error comes to  \(O(\frac{1}{\varepsilon}\sqrt{\frac{d^2 m}{ n}})\), which can be a large magnitude, rendering the estimator inefficient.
\end{remark}
\subsection{Lower Bound of User-level DP-RLHF}
On the other hand, we introduce the lower bound of estimation error for user-level DP-RLHF, which characterizes the worst-case performance of any user-level DP algorithms. A detailed discussion is deferred to the Appendix~\ref{app:lowerbound}. 
\begin{theorem}[Informal Statement]
    For any  $(\epsilon, \delta)$-user-level DP algorithm with output $\theta_{\text{priv}}$, there exists an instance of the BTL model with the underlying parameter $\theta^*$ such that 
    \begin{equation}
        \mathbb{E}\|\theta_{\text{priv}}-\theta^*\|_2\geq  \Omega\left(\frac{d}{\sqrt{nm}}+\frac{\sqrt{d}}{\sqrt{m} n \varepsilon }\right).  
    \end{equation}
\end{theorem}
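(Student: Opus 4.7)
The plan is to prove the two terms of the lower bound separately and combine them, since a sum-form bound follows from the max up to constants. I would construct a hard family of BTL instances parameterized by $\theta \in \{\pm\tau\}^d$ (rescaled to satisfy Assumption~\ref{ass:1} and $\langle\mathbf{1},\theta\rangle=0$ by a standard symmetrization trick, e.g.\ working in a $(d{-}1)$-dimensional subspace) together with a fixed feature distribution in which each differential feature $x_{i,j}=\phi(s_{i,j},a_{i,j}^{1})-\phi(s_{i,j},a_{i,j}^{0})$ is drawn uniformly from $\{e_1,\ldots,e_d\}$. With this choice, each label $y_{i,j}$ is a $\mathrm{Bernoulli}(\sigma(\pm\tau))$ observation about a single coordinate of $\theta^*$, which reduces the $d$-dimensional BTL estimation problem to $d$ (nearly) independent one-dimensional sign-recovery problems with roughly $nm/d$ user-level samples each.

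For the statistical term $\Omega(d/\sqrt{nm})$, I would invoke Assouad's method on the hypercube $\{\pm\tau\}^d$. In each coordinate, $nm/d$ Bernoulli samples suffice to distinguish $\sigma(+\tau)$ from $\sigma(-\tau)$ only when the per-sample KL divergence, which is $\Theta(\tau^2)$ near $\tau=0$, satisfies $(nm/d)\cdot\tau^2=\Omega(1)$. Choosing $\tau=\Theta(\sqrt{d/(nm)})$ at the detection threshold and summing the coordinate-wise errors gives $\|\theta_{\text{priv}}-\theta^*\|_2=\Omega(\sqrt{d\cdot\tau^2})=\Omega(d/\sqrt{nm})$.

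For the privacy term $\Omega(\sqrt{d}/(n\varepsilon\sqrt{m}))$, I would adapt the user-level fingerprinting/tracing-attack template that has been developed for user-level DP mean estimation (as in the works cited in the related-work section, e.g.\ \citealt{userleveldp,userleveldp1,ghazi2024user}). The crucial observation is that each user supplies $m$ labels, so the empirical per-user score vector concentrates with sub-Gaussian norm $\Theta(1/\sqrt{m})$ rather than $\Theta(1)$; consequently the effective single-user signal plays the role of a single sample in the item-level bound with an extra $\sqrt{m}$ factor in the denominator. Plugging this into the standard Bun--Ullman--Vadhan correlation bound for $d$-dimensional one-way marginals, which yields $\Omega(\sqrt{d}/(n\varepsilon))$ under $(\varepsilon,\delta)$-DP in the item-level regime, replaces $n$ by $n\sqrt{m}$ and produces the claimed $\Omega(\sqrt{d}/(n\varepsilon\sqrt{m}))$. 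The BTL labels are transformed into such a fingerprinting instance by taking $\theta^*$ coordinates drawn from a prior on $\{\pm\tau\}$ with $\tau=\Theta(1/\sqrt{m})$, so that linearizing $\sigma(\pm\tau)=\tfrac{1}{2}\pm\Theta(1/\sqrt{m})$ makes the user aggregates look like $d$-dimensional Gaussian mean-estimation samples of variance $1/m$, to which the tracing attack applies verbatim.

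The main obstacle is the privacy term. Carrying out the user-level fingerprinting argument in the BTL setting requires (i) verifying that the user-level aggregate of the sigmoid-score statistic is sufficiently sub-Gaussian so that a BUV-type score function exhibits the correct correlation with $\theta^*$, (ii) showing that this correlation is preserved under the nonlinear link $\sigma$ (which is why the prior is placed at scale $1/\sqrt{m}$, well within the regime where $\sigma$ is essentially linear), and (iii) respecting Assumption~\ref{ass:1}, including the boundedness of $\theta^*$ and the normalization $\langle\mathbf{1},\theta\rangle=0$, throughout the construction. The statistical term is comparatively routine once the coordinate-decoupling feature construction is fixed.
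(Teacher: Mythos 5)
Your decomposition into a statistical term and a privacy term, and your use of Assouad's hypercube construction with a coordinate-decoupling feature design for the statistical term $\Omega(d/\sqrt{nm})$, match the paper's proof essentially step for step (the paper works with a generic fixed design $X$ with $\|x\|_\infty\le 1$ and bounds the sum of TV distances by $\|X\|_F^2$ via Pinsker and a Bernoulli--KL fact, but this is the same calculation as your $\{e_1,\dots,e_d\}$ construction). Where you genuinely diverge is the privacy term: the paper does \emph{not} use fingerprinting. It stays entirely within the Assouad framework and invokes the $(\varepsilon,\delta)$-DP version of Assouad's lemma based on couplings, bounding the expected Hamming distance between couplings of the mixtures $P_{+i}$ and $P_{-i}$ by $D\le \sqrt{m}\,n\Delta$ --- the $\sqrt{m}$ arising because a single user's $m$-fold product distribution has KL at most $m\Delta^2$ and hence TV at most $\sqrt{m}\Delta$ --- and then optimizing $\Delta=\Theta\bigl(1/(\sqrt{m}\,n(\varepsilon+\delta))\bigr)$. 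This coupling argument is considerably more elementary than a Bun--Ullman--Vadhan tracing attack, reuses the same hypercube instance as the non-private part, and directly yields the $(\varepsilon+\delta)$ dependence; your fingerprinting route is heavier machinery but is a legitimate alternative that is standard in the user-level mean-estimation literature. One caveat on your sketch: placing the prior on each coordinate at scale $\tau=\Theta(1/\sqrt{m})$ (the standard deviation of the per-user aggregate) by itself only rules out per-coordinate error $o(1/\sqrt{m})$ and does not produce the $1/(n\varepsilon)$ factor; to recover the claimed $\sqrt{d}/(\sqrt{m}\,n\varepsilon)$ you must either rescale the prior to the target accuracy $\Theta(1/(\sqrt{m}\,n\varepsilon))$ per coordinate or run the sample-complexity form of the tracing argument and invert it, so that detail needs to be fixed before the privacy term goes through.
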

\section{Main Method}
\begin{algorithm}[tb]
   \caption{AUP-RLHF}
   \label{alg:aup-rlhf}
\begin{algorithmic}[1]
   \State {\bf Input:} Dataset $D = (Z_1, \dots, Z_n) \in (\mathbb{Z}^m)^n$, privacy parameters $(\varepsilon, \delta)$, initial point $\theta_0$
   \State Based on users, partition $D$ into $k$ disjoint datasets $\{D_i\}_{i \in [k]}$, where $D_i$ is of size $n_i := n / 2^{k+1-i}$
   \For{$i = 1, \dots, k$}
       \State Run Algorithm \ref{alg:aup-SGD} with ($\theta_{i-1}, D_i, \tilde{n}_i, T_i, \eta_i, \varepsilon, \delta,    \tau_i$)  and get its output $\theta_i$.
   \EndFor
   \State {\bf Output:} $\hat{\theta} = \theta_k$
\end{algorithmic}
\end{algorithm}

\begin{algorithm*}[tb]
   \caption{AdapUserPriv-SGD}
   \label{alg:aup-SGD}
\begin{algorithmic}[1]
   \State {\bf Input:} Initial model weights $\theta_0$, training set $\mathcal{D}$ of $n$ users and $m$ records, batch size $\tilde{n}$, learning rate $\eta$, iterations $T$, privacy parameters $(\varepsilon, \delta)$, concentration threshold $\tau$
      
   \State $\sigma \gets \text{PrivacyAccount}(\frac{\varepsilon}{2}, \frac{\delta}{2}, n, \tilde{n}, T)$ \Comment{Compute noise multiplier}
   \For{$t = 1, \dots, T$}
        \State Randomly draw $U_t$, a batch of $\tilde{n}$ users\Comment{Subsampling}
       \For{$i \in U_t$}
           \State  $g_{t,i} \gets \frac{1}{m} \sum_{j \in [m]} \nabla \ell(\theta_t, z_j)$ \Comment{Compute user-averaged gradient}
       \EndFor
       \State  $s^c_{t} \gets \frac{1}{\tilde{n}} \sum_{i, i' \in U_t} 1(\|g_{t,i} - g_{t,i'}\| \leq \tau)$\Comment{Compute concentration scores}
       \If{$\text{AboveThreshold}(s^c_t, \varepsilon/2, 4\tilde{n}/5) = \top$ in Alg.~\ref{alg:above_threshold}}\Comment{Run AboveThreshold with $s^c_{t}$}
           \State Set $B_t \gets \emptyset$
           \State Set $f_{t,i} \gets \sum_{i'} 1(\|g_{t,i'} - g_{t,i}\| \leq 2\tau)$
           \State Add $i$th user to $B_t$ with probability $p_{t,i}$, where \Comment{Remove the outliers}
           \[
           p_{t,i} = 
           \begin{cases} 
              0 & \text{if } f_{t,i} < \tilde{n}/2 \\
              1 & \text{if } f_{t,i} \geq 2\tilde{n}/3 \\
              \frac{f_{t,i} - \tilde{n}/2}{\tilde{n}/6} & \text{otherwise}
           \end{cases}
           \]
           \State Aggregate and add noise for the remaining users:
           \State Let $\widehat{g_t} = \frac{1}{|B_t|} \sum_{i \in B_t} g_{t,i}$ if $B_t$ is not empty, and 0 otherwise
           \State $\widetilde{g_t} \gets \widehat{g_t} + \nu_t$, where $\nu_t \sim \mathcal{N}(0, \frac{8\tau^2 \log(e^{\varepsilon}T/\delta) \sigma^2}{\tilde{n}^2})I_d$
           \State  $\theta_{t+1} \gets \theta_t - \eta \widetilde{g_t}$ \Comment{Update model}
       \Else
           \State Halt. \Comment{Halt the algorithm if it does not pass. }
       \EndIf
   \EndFor
   \State \Return $\widehat{\theta} = \frac{1}{T} \sum_{t \in [T]} \theta_t$

\end{algorithmic}
\end{algorithm*}
In the previous section, we have shown that although the RR-based mechanism for DP-RLHF in the user-level setting is simple, there are several issues. First, when the contribution of each user $m$ is large, we can see the privacy-utility trade-off in Corollary \ref{cor:1} is bad. This is because, intuitively, larger $m$ indicates we have more data, so the estimation error should be lower. However, a larger $m$  will introduce a larger error for the RR mechanism, which contradicts our expectations. Second, we can see there is a large gap between the lower bound and upper bound. Thus, a natural question is whether we can further fill in the gap. Finally, the private model $  \widehat{\theta}_{\mathrm{RR}}$ needs to be the exact minimizer of the loss in (\ref{eq:1}), which is impossible to get in practice due to the non-convexity of the loss. Thus, a practical and efficient DP algorithm is needed. 

In this section, we propose our AUP-RLHF method, which is based on DP-SGD~\cite{abadi2016deep,wang2017differentially,wang2019differentially,wang2019differentially1}. Instead of flipping labels, here we consider the original loss function in the BTL model with linear reward: 
$$\ell(\theta; z) :=  \left[ \mathbbm{1}( y_{z} = 1 ) \log p_{z,1} 
+ \mathbbm{1}( y_{z} = 0 ) \log p_{z,0} \right],$$
where $
p_{z,1}= \sigma\left(x^{\top} \theta\right), 
p_{z, 0} =\left(1-\sigma\left(x^{\top} \theta\right)\right).
$

In AUP-RLHF (Algorithm \ref{alg:aup-rlhf}), we first partition the data into several disjoint sets, and for each set, we will use a DP-SGD-based update, namely AdapUserPriv-SGD (Algorithm \ref{alg:aup-SGD}). In detail, during each iteration of parameter updates, the framework operates in three stages. First, a subset of users is selected via user-wise sampling, and the average gradient of the selected subset is computed as a query (Step 4-8). Second, the query is passed through a user-level private mean estimation oracle, which outputs a gradient satisfying user-level DP (Step 9-15). Third, the gradient obtained from the previous step is used to perform gradient descent for parameter updates.

The main difference between AUP-RLHF and DP-SGD is the private mean estimation oracle for aggregated gradients. If we directly extend DP-SGD to the user-level (Appendix Algorithm~\ref{alg:user_dp_sgd})\cite{userlevelrdplhf}
, for each user's data, we may aggregate their gradient and perform a clipping with some threshold $C$ (Step 9). In this case, the sensitivity of the average of clipped gradients is $O(\frac{1}{\tilde{n}})$, with $\tilde{n}$ as the number of subsampled users. However, such a method does not leverage information on each user's data, leading to unsatisfactory performance. In this work, we leverage the user-level private mean estimation oracle proposed by \cite{userleveldp}, which is an adaptive sampling procedure to remove outliers to let the aggregated user-level gradients add less noise. To achieve adaptive mean estimation for concentrated samples, we first introduce the concept of concentration. 


\begin{definition}
 A set of  random samples $\left\{X_{i}\right\}_{i \in [n]}$ is $(\tau, \gamma)$-concentrated if there exists a point $x \in \mathbb{R}^{d}$ such that with probability at least $1 - \gamma$,
\[
\max_{i \in [n]} \left\| X_{i} - x \right\| \leq \tau.
\]
\end{definition}
After adaptive sampling (Step 8-14), we can obtain a \(\tau\)-concentrated subset of users, with user-averaged gradients $\{g_{t,i}\}_{i\in U_t}$ as the random samples. Due to the concentration property, the sensitivity of the aggregated gradients will be bounded $O(\tau)$. Thus, there is no need to do a clipping and it allows the added noise to scale proportionally to \(\tau\) (which is $\tilde{O}(\frac{1}{\sqrt{m}}$) rather than the constant clipping radius \( C \), resulting in significantly smaller noise and achieving improved utility.

Specifically, to get such a concentrated subset, the algorithm first employs an \textit{AboveThreshold} mechanism (Algorithm \ref{alg:above_threshold}) to compute a concentration score, which determines whether the average gradient of users is predominantly concentrated (Step 9). This ensures that the input dataset is approximately \(\tau\)-concentrated. Subsequently, outlier detection is performed by assigning each sample a score that quantifies its likelihood of being an outlier, i.e., not concentrated gradient. Each sample is then retained with a probability proportional to its score, effectively removing low-score outliers and yielding a high-quality, concentrated subset of data (Step 10-12).

\noindent {\bf PrivacyAccount.} Given the total number of users, batch size of users, number of iterations, and privacy budget, this subroutine can determine the noise $\sigma$ we need to add in Step 16 of Algorithm \ref{alg:aup-SGD} to ensure the algorithm satisfies $(\varepsilon,\delta)$ DP. We adopt the Poisson subsampling,  a common practice in the privacy accounting of DP-SGD, in Step 4. Thus, theoretically, we can show that  $\sigma=\tilde{O}(\frac{\varepsilon n}{\sqrt{T} \tilde{n}})$ can make the algorithm achieve $(\varepsilon,\delta)$-DP (\ref{thm:2}). However, in practice, we always use various
open source libraries to give more accurate calculations on such a noise, such as DP Accounting Library\footnote{\url{https://github.com/google/differential-privacy/tree/main/python/dp_accounting}}.  

\begin{theorem}\label{thm:2}
For any $0<\varepsilon, \delta<1$, if $\sigma=\tilde{O}(\frac{n}{\tilde{n}}  \frac{\varepsilon}{\sqrt{T\ln (1/\delta)}})$, Algorithm~\ref{alg:aup-rlhf} is \((\varepsilon, \delta)\) user-level (label) DP. Here, the Big-$\tilde{O}$ notation omits other logarithmic terms. 
\end{theorem}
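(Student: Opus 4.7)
The plan is to reduce the privacy of Algorithm~\ref{alg:aup-rlhf} to the privacy of a single call to Algorithm~\ref{alg:aup-SGD}, and then to decompose the per-call analysis into the two privacy-consuming primitives that appear in each iteration: the sparse-vector-style \emph{AboveThreshold} check, and the Gaussian mechanism applied to the adaptively-cleaned per-user gradients. First I would observe that in Algorithm~\ref{alg:aup-rlhf} the sets $\{D_i\}_{i\in[k]}$ are \emph{disjoint in users}, so any two user-level neighboring inputs differ inside exactly one $D_i$; by parallel composition the overall privacy cost equals the cost of a single invocation of Algorithm~\ref{alg:aup-SGD} on that $D_i$, and no composition over the $k$ stages is required.

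Next I would split the budget inside Algorithm~\ref{alg:aup-SGD} as $(\varepsilon/2,\delta/2)$ for the stream of \emph{AboveThreshold} queries and $(\varepsilon/2,\delta/2)$ for the stream of Gaussian-mechanism releases, and combine them by basic composition at the end. For the \emph{AboveThreshold} stream, I would show that the concentration score $s^c_t$ has user-level sensitivity $O(1/\tilde n)$ (changing one user alters at most $O(\tilde n)$ of the $\tilde n^2$ indicator terms), and then invoke the standard SVT privacy guarantee of Algorithm~\ref{alg:above_threshold}, which costs $\varepsilon/2$ total regardless of $T$ as long as the algorithm halts on the first ``below'' answer (which it does, in the \texttt{Else} branch).

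For the Gaussian stream, the central step is to bound the user-level sensitivity of $\widehat{g_t}$ after the outlier-removal subroutine. Conditioned on the \emph{AboveThreshold} succeeding, the batch is approximately $\tau$-concentrated; the smooth inclusion probabilities $p_{t,i}$ (defined via the counts $f_{t,i}$) are designed so that flipping one user's data shifts each remaining $f_{t,j}$ by at most $1$, which in turn shifts $|B_t|$ and the sum $\sum_{i\in B_t} g_{t,i}$ in a controlled way; combining this with the $2\tau$-radius bound on pairwise gradient distances yields the sensitivity $\Delta = O(\tau/\tilde n)$, which matches the noise scale $\sqrt{8\tau^2\log(e^\varepsilon T/\delta)}\,\sigma/\tilde n$ in Step~16 up to the factor $\sigma$. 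I would then apply the subsampled Gaussian mechanism analysis under Poisson subsampling of rate $\tilde n/n$ (Step~4) and compose over the $T$ rounds via RDP / moments accountant. A direct calculation shows that the required noise multiplier is $\sigma=\tilde O\!\bigl(\tfrac{n}{\tilde n}\cdot\tfrac{\varepsilon}{\sqrt{T\ln(1/\delta)}}\bigr)$, exactly as stated, and the final $\theta_{t+1}$ update is free by post-processing.

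The main obstacle I expect is the sensitivity analysis of $\widehat{g_t}$ after the randomized outlier step: a worst-case argument only yields sensitivity $O(\tau)$ (replacing a single user can potentially change several $p_{t,i}$ at once and thus reshape $B_t$), so one needs the smoothness of the inclusion rule and the $\tau$-concentration certificate from \emph{AboveThreshold} to argue a $1/\tilde n$-scale bound, essentially reusing the user-level private mean-estimation oracle of \citealt{userleveldp}. Aligning that oracle's guarantee with the iterative, subsampled DP-SGD accounting — in particular making sure the conditioning on \emph{AboveThreshold} passing does not destroy the Gaussian mechanism's indistinguishability — is the delicate technical point; everything else reduces to standard DP composition and amplification.
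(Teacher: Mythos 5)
Your proposal is correct and follows essentially the same route as the paper: parallel composition over the user-disjoint partition $\{D_i\}$, a budget split of $\varepsilon/2$ for the \emph{AboveThreshold} stream (free over $T$ rounds since it halts on the first failure) and $(\varepsilon/2,\delta)$ for the noisy gradient releases, with the sensitivity of the cleaned mean $\widehat{g_t}$ handled by the user-level private mean-estimation oracle of \citealt{userleveldp} and the final accounting done via Poisson-subsampling amplification at rate $\tilde n/n$ composed over $T$ rounds. The only cosmetic difference is that you invoke the RDP/moments accountant where the paper uses the advanced composition theorem together with a subsampling amplification lemma; both yield the same $\sigma=\tilde O\bigl(\tfrac{n}{\tilde n}\cdot\tfrac{\varepsilon}{\sqrt{T\ln(1/\delta)}}\bigr)$.
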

In the following, we will show that it is possible to get an improved upper bound of estimation error with some specific hyper-parameters. 
\begin{theorem}\label{thm:3}
 Under Assumption~\ref{ass:1} and ~\ref{ass:2} and the condition in Theorem \ref{thm:2}, for any  \(0 \leq \varepsilon \leq 10\) and \( 0 \leq \delta \leq 1\), 
  in Algorithm \ref{alg:aup-rlhf} set  $k = \lceil \log \log mn \rceil$, \(\theta_0 = 0\),
  \(\tilde{n}_i=n_i\), \(\eta_i=\frac{B}{4L} \cdot \min \left\{\frac{\sqrt{m} n_i \varepsilon}{T_i \sqrt{d \log ^{2}(m n_i d / \delta)}}, \frac{1}{T_i^{3 / 4}}, \frac{\sqrt{n_i m}}{T_i}\right\}\), \(\tau_i=\frac{4L \log \left(n_i d m e^{\varepsilon} T_i / \delta\right)}{\sqrt{m}}\), and \(T_i=O\left(m^2 n_i^2 + m n_i \sqrt{d}\right)\). The output \(\hat{\theta} \) satisfies the following if \( n_i> \frac{\log (m d n_i) \log (m d n_i / \delta)}{\varepsilon} \), 
\begin{equation*}
    \mathbb{E} \|\hat{\theta} - \theta^*\|_2 \leq \widetilde{O}(\frac{L}{\kappa \gamma} \left[\frac{1}{\sqrt{mn}} + \frac{\sqrt{d}}{\sqrt{m}n\varepsilon}\right]). 
\end{equation*}
The Big-$\tilde{O}$ notation omits other logarithmic terms. 
\end{theorem}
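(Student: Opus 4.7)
The plan is to decompose the proof into three pieces: (i) a per-iteration analysis of the noisy private gradient $\widetilde{g}_t$ produced by AdapUserPriv-SGD, showing that it is a good estimate of the population gradient $\nabla \mathcal{L}(\theta_t)$; (ii) an SGD-type convergence bound for each stage $i$ that converts per-iteration gradient error into a bound on $\|\theta_i - \theta^*\|$ in a neighborhood of $\theta^*$; (iii) a multi-stage telescoping argument that shows the partitioning into $k = \lceil \log \log (mn) \rceil$ disjoint user subsets lets each stage geometrically contract the distance to $\theta^*$ so that the final error is dominated by the last (largest) stage.

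For step (i), I would first verify the concentration claim that with high probability the per-user averaged gradients $\{g_{t,i}\}_{i\in U_t}$ are $(\tau_i,\gamma)$-concentrated. Since each $g_{t,i}$ is the mean of $m$ bounded, sub-Gaussian summands $\nabla \ell(\theta_t;z_{i,j})$ (bounded by $L$ using Assumption~\ref{ass:1}), a vector Hoeffding/Bernstein inequality yields $\|g_{t,i}-\mathbb{E}[g_{t,i}\mid\theta_t]\|=\widetilde{O}(L/\sqrt{m})$ with probability $1-\gamma/n_i$, justifying the choice $\tau_i=\Theta(L\log(n_i d m e^\varepsilon T_i/\delta)/\sqrt m)$. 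A union bound over $T_i$ iterations guarantees that AboveThreshold in Algorithm~\ref{alg:above_threshold} will not halt and that the outlier-removal step keeps a $(1-o(1))$ fraction of users. Then the error in $\widetilde{g}_t$ splits into: sampling noise $\widetilde{O}(\tau_i/\sqrt{\tilde{n}_i})$, outlier-removal bias $\widetilde{O}(\tau_i/\tilde{n}_i)$, and Gaussian DP noise $\widetilde{O}(\tau_i\sigma\sqrt{d}/\tilde{n}_i)$. Plugging in $\sigma$ from Theorem~\ref{thm:2} and $\tau_i\sim 1/\sqrt m$ gives a per-step gradient error of $\widetilde{O}\bigl(1/\sqrt{m n_i} + \sqrt{d}/(\sqrt{m}\,n_i\varepsilon)\bigr)$.

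For step (ii), the BTL linear-reward loss is convex and, by Assumption~\ref{ass:1}(b) plus the sigmoid derivative lower bound $\gamma$, it is $\gamma\kappa$-strongly convex on $\Theta_B$ once restricted (via Assumption~\ref{ass:2}) to directions orthogonal to $\mathbf{1}$; it is also $O(L^2)$-smooth. With these two properties, a standard strongly-convex SGD analysis with biased but bounded-variance gradient oracle yields, for stage $i$ initialized at $\theta_{i-1}$,
\begin{equation*}
\mathbb{E}\|\theta_i-\theta^*\|^2 \lesssim e^{-\gamma\kappa\eta_i T_i}\|\theta_{i-1}-\theta^*\|^2 + \frac{1}{\gamma\kappa}\cdot(\text{per-step gradient error})^2.
\end{equation*}
The prescribed $\eta_i$ and $T_i=\Theta(m^2 n_i^2+m n_i\sqrt d)$ are chosen precisely so that the optimization term is driven below the statistical term and the statistical term matches $\widetilde{O}(1/\sqrt{m n_i}+\sqrt d/(\sqrt m n_i\varepsilon))$, squared.

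Step (iii) is the doubling-trick assembly. Because $n_i = n/2^{k+1-i}$, the statistical rate in stage $i$ doubles from one stage to the next, and the contraction factor in (ii) suffices to absorb the error $\|\theta_{i-1}-\theta^*\|$ inherited from stage $i-1$, so the final output error is controlled by the last stage with $n_k=n/2$. After taking square roots and simplifying, we obtain the advertised $\widetilde{O}\bigl(\frac{L}{\kappa\gamma}[1/\sqrt{mn}+\sqrt d/(\sqrt m n\varepsilon)]\bigr)$ bound. The main obstacle will be the conditional-on-history bookkeeping: the concentration argument, the AboveThreshold success event, and the outlier-removal bias all need to hold uniformly along the SGD trajectory while $\theta_t$ itself depends on the noise; I would handle this by coupling the analysis to a high-probability event over all $T_i$ iterations in all $k$ stages via a careful union bound, and by ensuring $\theta_t$ stays in a bounded neighborhood of $\theta^*$ (so that sub-Gaussian parameters and strong convexity remain valid) through induction on the stages using the contraction in (ii).
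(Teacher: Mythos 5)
Your proposal re-derives the entire user-level DP-SCO machinery from scratch, whereas the paper's actual proof is a short two-step reduction: it observes that with $\tilde{n}_i=n_i$ and the stated hyperparameters, Algorithm~\ref{alg:aup-rlhf} coincides with the strongly-convex DP-SCO algorithm of \cite{userleveldp}, invokes that paper's Theorem 4.11 as a black box (Lemma~\ref{lemma:2}) to get the excess population risk bound $O\bigl(\tfrac{L^2}{\kappa\gamma}(\tfrac{1}{nm}+\tfrac{d\log^2(ndm/\delta)}{n^2m\varepsilon^2})\bigr)$, and then converts risk to parameter error via $\kappa\gamma$-strong convexity. The only new work in the paper is Lemma~\ref{lemma:1}: verifying that the population BTL loss is $\kappa\gamma$-strongly convex (using $\sigma'(\theta^\top x)\ge\gamma$ and Assumption~\ref{ass:2}) and $4L$-Lipschitz (using $|V_\theta|\le 1$ and $\|x\|\le 4L$ from Assumption~\ref{ass:1}). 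Your step (ii) implicitly contains this verification, so your architecture subsumes the paper's; what your route buys is self-containedness, at the cost of having to reprove the concentration-based mean-estimation oracle, the AboveThreshold/outlier-removal utility analysis, and the phase-localization argument that the paper deliberately outsources.

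The one place where your sketch has a genuine gap is the concentration claim in step (i). You apply vector Hoeffding to $g_{t,i}=\frac{1}{m}\sum_j\nabla\ell(\theta_t,z_{i,j})$ conditionally on $\theta_t$, but with $\tilde{n}_i=n_i$ every user's data enters every update, so $\theta_t$ is a function of user $i$'s samples and the summands are not independent of the point at which the gradient is evaluated. A union bound over the $T_i$ iterations does not repair this, because the failure event at iteration $t$ is defined relative to a data-dependent $\theta_t$. The standard fixes are either uniform concentration of the per-user empirical gradient map over a net of $\Theta_B$ (which threatens to inflate $\tau_i$ by dimension-dependent factors and must be checked against the stated $\tau_i=\Theta(L\log(\cdot)/\sqrt{m})$) or the adaptive-check-plus-stability argument that \cite{userleveldp} develops precisely for this purpose. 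You correctly flag this as "the main obstacle," but the proposed resolution (union bound plus induction on stages) does not address the dependence; as written, step (i) would not go through. Everything downstream of that point — the biased-oracle SGD recursion and the doubling-trick assembly across $k=\lceil\log\log mn\rceil$ stages — is consistent with how the cited result is proved and with the final bound.
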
 

\begin{remark}\label{rmk:2}
    We discuss the case under the same condition as in Remark \ref{rmk:1} when \(\kappa = O\left(\frac{L^2}{d}\right)\). As we mentioned, when $m$ is a large quantity, the estimation error of the RR estimator is \(O(\frac{1}{\varepsilon}\sqrt{\frac{d^2 m}{ n}})\), while the upper bound of AUP-RLHF
    is \(\widetilde{O}\left({\frac{d\sqrt{d}}{\sqrt{m}n \varepsilon}}\right)\). We can see it improves a factor of $O(\frac{m\sqrt{n}}{\sqrt{d}})$. The factor of $O(\frac{1}{\sqrt{d}})$  is because our SGD algorithm adds noise to each dimension of the parameter \(\theta\). When \(m\sqrt{n}\) exceeds \(\sqrt{d}\), our algorithm exhibits better utility than the RR algorithm, which is also commonly observed in practical scenarios.
\end{remark}
In the previous section, we considered the case where the label is binary. In fact, our method can be directly extended to the general $K$-wise case. See Appendix \ref{sec:kwise} for details. 

\section{Experiments}

\subsection{Experimental Setup}
\noindent \textbf{Tasks and Datasets.}
We evaluate our AUP-RLHF method on two different datasets with three different tasks. The IMDb dataset~\cite{imdb} is a widely used movie review dataset containing both positive and negative reviews. We perform a controlled sentiment generation task on this dataset. The TL;DR dataset~\cite{tldr} is a summarization dataset containing posts from Reddit along with their corresponding titles, on which we conduct a summarization task.

\noindent \textbf{Base models.}
For each dataset and task, we use the pre-trained Gemma-2-2B\cite{gemma2} and LLaMA-2-7B \cite{llama} as our base models.

\noindent \textbf{Baselines.}
We compare our algorithm with three baselines, which all are user-level DP-based RLHF algorithms: Random Response~(Section~\ref{sec:RR}), User-wise DP-SGD~\cite{userlevelrdplhf}, and Group Privacy~\cite{userlevelrdplhf}. Additionally, we compare our results with those obtained from RLHF under non-private settings.

\noindent \textbf{Evaluation.}
In the controlled sentiment generation task, we use the reward score as the evaluation metric. Additional details are provided in Appendix~\ref{app:imdb_reward}. For the summarization task, we assess our AUP-RLHF algorithm by comparing its win rate against each baseline strategy. Following previous work in RLHF~\cite{dpo}, we adopt GPT-4 as a proxy for human evaluation.

\noindent \textbf{Settings.}  We set each user's contributions at \( m = 10 \), with \( n \) set to 2500 for the sentiment generation task, and \( m = 50, n=1800 \) for the summarization task. We set privacy budgets as \( \varepsilon = 3 \) or \( \varepsilon = 8 \), with \( \delta \) fixed at $10^{-5}$. Due to the slower training speed of the SGD-based optimizer compared to Adam, we set the number of sub-datasets $k=1$ and the number of training epochs for the reward model to 5 instead of 1. Other hyperparameter details are provided in Appendix~\ref{app:A} Table~\ref{tab:sft_params} and Table~\ref{tab:ppo_params}. The experiments are carried out on machines
with one Nvidia A100 GPU card, 14-core Intel Xeon Gold 6348 CPUs, and 100GB of RAM. 

\begin{table}[h]
    \centering
    \small
    \begin{tabular}{c c c c c}
        \toprule
        $m$ & 5 & 10 & 20 & 50 \\
        \midrule
        $\epsilon=1$ & 0.366 & 0.586 & 1.051 & 2.429 \\
        $\epsilon=3$ & 0.139 & 0.175 & 0.246 & 0.461 \\
        $\epsilon=8$ & 0.086 & 0.102 & 0.128 & 0.189 \\
        \bottomrule
    \end{tabular}
    \caption{Effective noise of AUP-RLHF on TL;DR dataset.}
    \label{tab:dataset1}
\end{table}

\begin{table}[h]
    \centering
    \small
    \begin{tabular}{c c c c c}
        \toprule
        $m$ & 5 & 10 & 20 & 50 \\
        \midrule
        $\epsilon=1$ & 1.101 & 1.284 & 1.631 & 2.402 \\
        $\epsilon=3$ & 0.726 & 0.809 & 0.928 & 1.182 \\
        $\epsilon=8$ & 0.531 & 0.578 & 0.639 & 0.750 \\
        \bottomrule
    \end{tabular}
    \caption{Effective noise of User-wise DPSGD on TL;DR dataset.}
    \label{tab:dataset2}
\end{table}

\begin{figure}
    \centering
    \includegraphics[width=1\linewidth]{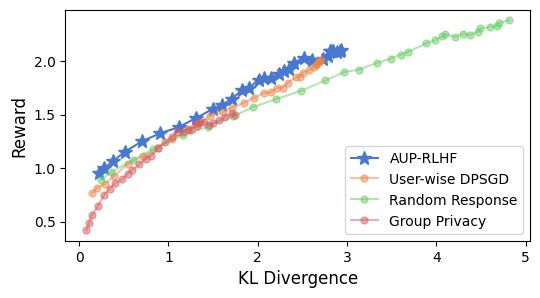}
    \caption{IMDb Sentiment Generation (Llama-2-7b, $\epsilon=8$).}
    \label{fig:imdb_sum}
\end{figure}
\begin{figure}
    \centering
    \includegraphics[width=1\linewidth]{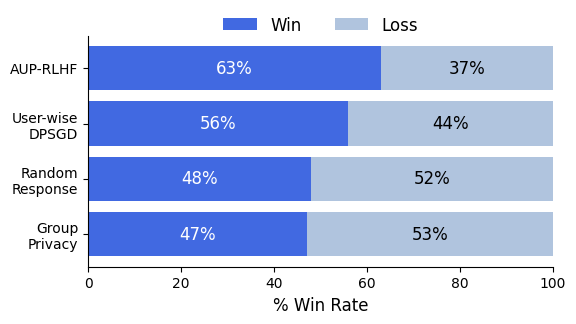}
    \caption{Win Rate Against the SFT Model for TL;DR Summarization (Gemma-2-2b, $\epsilon=8$).}
    \label{fig:tldr_sum}
\end{figure}
\subsection{Experimental Results}
\noindent \textbf{IMDb Sentiment.} For the sentiment generation task, we make the following observations based on the experimental results depicted in Figure~\ref{fig:imdb_sum}, \ref{fig:imdb-gemma}, and \ref{fig:imdb-llama} : (1) We observe that, under the same KL divergence, our AUP-RLHF outperforms other baselines in terms of the reward metric, demonstrating the superiority of our model. (2) Consistent with the theoretical analysis of differential privacy, relaxing the privacy budget leads to an improvement in the reward score metric. Specifically, we find that, for the same KL divergence, the reward values of AUP-RLHF and all baselines are higher when \(\epsilon = 8\) compared to when \(\epsilon = 3\). (3) By comparing the experimental results of Gemma-2-2B and Llama-2-7B, we observe that at \(\epsilon = 3\), the larger model yields improved performance, which is consistent with most DP fine-tuning papers \cite{yu2021differentially,rlhfdp23}. However, at \(\epsilon = 8\), the performance improvement in the larger model is less significant, possibly due to the lack of a sufficiently exhaustive hyperparameter search to find the optimal model performance. 

\paragraph{TL;DR Summarization.} Similar to the findings in the sentiment generation task, in Figure~\ref{fig:tldr_sum} and \ref{fig:tldr-gemma} we observe that AUP-RLHF outperforms other baselines in terms of the win rate metric, across both model sizes and varying privacy budgets. Furthermore, when the privacy budget is set to \(\epsilon = 3\), the performance of all algorithms is suboptimal, with winrates below 50\%. However, when the privacy budget is relaxed to \(\epsilon = 8\), the win rates of all algorithms improve. Meanwhile, our AUP-RLHF achieves a significant increase in winrate, rising from 39\% to 63\% on the Gemma-2-2B model.

\paragraph{Analyzing AUP-RLHF.} As observed in Table~\ref{tab:dataset1}, as the number of user records \(m\) increases, the noise added to AUP-RLHF increases accordingly. This is because, based on our theory, the variance of the noise is $\tilde{O}(\frac{1}{m\tilde{n}^2})$. Thus, when the total size $mn$ and iteration number $T$ are fixed, $m\tilde{n}=\frac{mn}{T}$ is fixed, and increasing $m$ will make $\tilde{n}$ less, which makes the noise $\tilde{O}(\frac{1}{m\tilde{n}^2})$ larger.

Additionally, we find that, under the same \(\epsilon\) and \(m\), the noise in AUP-RLHF is smaller than that in User-wise DP-SGD. This is due to the fact that our algorithm, based on the user-concentrated nature, requires the noise scale to be proportional to the concentration parameter \(\tau\) rather than the clipping radius \(C\). This reduction in noise allows our algorithm to converge more quickly.

\section{Conclusion}
In this work, we study RLHF with user-level privacy. We first demonstrate the sub-optimality of the RR method, followed by presenting the lower bound for user-level DP-RLHF. Then, we propose the AUP-RLHF algorithm, which ensures that the output satisfies user-label DP while achieving better utility. In our experiments, we validate the superiority of our algorithm across multiple datasets.

\section{Limitations}
Although we present an innovative approach, there are some limitations to our method that motivate future research directions. First, our upper bound (Theorem~\ref{thm:3})  relies on a strong assumption that the loss function is both strongly convex and Lipschitz continuous.
Second, this upper bound we prove is suboptimal, which has an additional factor of $d$ compared to lower bound. This originates from our coverage assumption.
Furthermore, during experimentation, the SGD-based algorithm demonstrated relatively poor efficiency in complex tasks, and improving the convergence rate of the algorithm is another promising direction for future investigation.
\bibliography{custom}

\begin{thebibliography}{62}
\providecommand{\natexlab}[1]{#1}

\bibitem[{Abadi et~al.(2016)Abadi, Chu, Goodfellow, McMahan, Mironov, Talwar, and Zhang}]{abadi2016deep}
Martin Abadi, Andy Chu, Ian Goodfellow, H~Brendan McMahan, Ilya Mironov, Kunal Talwar, and Li~Zhang. 2016.
\newblock Deep learning with differential privacy.
\newblock In \emph{Proceedings of the 2016 ACM SIGSAC conference on computer and communications security}, pages 308--318.

\bibitem[{{Antropic}(2023)}]{claude}
{Antropic}. 2023.
\newblock \href {https://claude.ai/chats} {Claude}.

\bibitem[{Bassily et~al.(2014)Bassily, Smith, and Thakurta}]{bassily2014private}
Raef Bassily, Adam Smith, and Abhradeep Thakurta. 2014.
\newblock Private empirical risk minimization: Efficient algorithms and tight error bounds.
\newblock In \emph{2014 IEEE 55th Annual Symposium on Foundations of Computer Science}, pages 464--473. IEEE.

\bibitem[{Bassily and Sun(2023{\natexlab{a}})}]{bassily2023user}
Raef Bassily and Ziteng Sun. 2023{\natexlab{a}}.
\newblock User-level private stochastic convex optimization with optimal rates.
\newblock In \emph{International Conference on Machine Learning}, pages 1838--1851. PMLR.

\bibitem[{Bassily and Sun(2023{\natexlab{b}})}]{userleveldp2}
Raef Bassily and Ziteng Sun. 2023{\natexlab{b}}.
\newblock User-level private stochastic convex optimization with optimal rates.
\newblock In \emph{International Conference on Machine Learning}, pages 1838--1851. PMLR.

\bibitem[{Beimel et~al.(2010)Beimel, Kasiviswanathan, and Nissim}]{beimel2010bounds}
Amos Beimel, Shiva~Prasad Kasiviswanathan, and Kobbi Nissim. 2010.
\newblock Bounds on the sample complexity for private learning and private data release.
\newblock In \emph{Theory of Cryptography Conference}, pages 437--454. Springer.

\bibitem[{Chalkidis and Kementchedjhieva(2023)}]{chalkidis2023retrieval}
Ilias Chalkidis and Yova Kementchedjhieva. 2023.
\newblock Retrieval-augmented multi-label text classification.
\newblock \emph{arXiv preprint arXiv:2305.13058}.

\bibitem[{Cheng et~al.()Cheng, Lin, Fei, Zhai, Yu, Ali, Hu, and Wang}]{chengmulti}
Keyuan Cheng, Gang Lin, Haoyang Fei, Yuxuan Zhai, Lu~Yu, Muhammad~Asif Ali, Lijie Hu, and Di~Wang.
\newblock Multi-hop question answering under temporal knowledge editing.
\newblock In \emph{First Conference on Language Modeling}.

\bibitem[{Chowdhury et~al.(2024)Chowdhury, Zhou, and Natarajan}]{rewarddp}
Sayak~Ray Chowdhury, Xingyu Zhou, and Nagarajan Natarajan. 2024.
\newblock Differentially private reward estimation with preference feedback.
\newblock In \emph{International Conference on Artificial Intelligence and Statistics}, pages 4843--4851. PMLR.

\bibitem[{Chua et~al.(2024)Chua, Ghazi, Huang, Kamath, Kumar, Liu, Manurangsi, Sinha, and Zhang}]{userlevelrdplhf}
Lynn Chua, Badih Ghazi, Yangsibo Huang, Pritish Kamath, Ravi Kumar, Daogao Liu, Pasin Manurangsi, Amer Sinha, and Chiyuan Zhang. 2024.
\newblock Mind the privacy unit! user-level differential privacy for language model fine-tuning.
\newblock \emph{arXiv preprint arXiv:2406.14322}.

\bibitem[{Dwork et~al.(2006)Dwork, McSherry, Nissim, and Smith}]{dp1}
Cynthia Dwork, Frank McSherry, Kobbi Nissim, and Adam Smith. 2006.
\newblock Calibrating noise to sensitivity in private data analysis.
\newblock In \emph{Theory of Cryptography: Third Theory of Cryptography Conference, TCC 2006, New York, NY, USA, March 4-7, 2006. Proceedings 3}, pages 265--284. Springer.

\bibitem[{Dwork et~al.(2014)Dwork, Roth et~al.}]{dwork2014algorithmic}
Cynthia Dwork, Aaron Roth, et~al. 2014.
\newblock The algorithmic foundations of differential privacy.
\newblock \emph{Foundations and Trends{\textregistered} in Theoretical Computer Science}, 9(3--4):211--407.

\bibitem[{Garc{\'\i}a-Ferrero et~al.(2024)Garc{\'\i}a-Ferrero, Agerri, Salazar, Cabrio, de~la Iglesia, Lavelli, Magnini, Molinet, Ramirez-Romero, Rigau et~al.}]{garcia2024medical}
Iker Garc{\'\i}a-Ferrero, Rodrigo Agerri, Aitziber~Atutxa Salazar, Elena Cabrio, Iker de~la Iglesia, Alberto Lavelli, Bernardo Magnini, Benjamin Molinet, Johana Ramirez-Romero, German Rigau, et~al. 2024.
\newblock Medical mt5: an open-source multilingual text-to-text llm for the medical domain.
\newblock \emph{arXiv preprint arXiv:2404.07613}.

\bibitem[{Geyer et~al.(2017)Geyer, Klein, and Nabi}]{geyer2017differentially}
Robin~C Geyer, Tassilo Klein, and Moin Nabi. 2017.
\newblock Differentially private federated learning: A client level perspective.
\newblock \emph{arXiv preprint arXiv:1712.07557}.

\bibitem[{Ghazi et~al.(2021)Ghazi, Golowich, Kumar, Manurangsi, and Zhang}]{labeldp}
Badih Ghazi, Noah Golowich, Ravi Kumar, Pasin Manurangsi, and Chiyuan Zhang. 2021.
\newblock Deep learning with label differential privacy.
\newblock \emph{Advances in neural information processing systems}, 34:27131--27145.

\bibitem[{Ghazi et~al.(2024)Ghazi, Kamath, Kumar, Manurangsi, Meka, and Zhang}]{ghazi2024user}
Badih Ghazi, Pritish Kamath, Ravi Kumar, Pasin Manurangsi, Raghu Meka, and Chiyuan Zhang. 2024.
\newblock User-level differential privacy with few examples per user.
\newblock \emph{Advances in Neural Information Processing Systems}, 36.

\bibitem[{Harte et~al.(2023)Harte, Zorgdrager, Louridas, Katsifodimos, Jannach, and Fragkoulis}]{llmprivacy2}
Jesse Harte, Wouter Zorgdrager, Panos Louridas, Asterios Katsifodimos, Dietmar Jannach, and Marios Fragkoulis. 2023.
\newblock Leveraging large language models for sequential recommendation.
\newblock In \emph{Proceedings of the 17th ACM Conference on Recommender Systems}.

\bibitem[{Hong et~al.(2024)Hong, Zou, Hu, Zeng, Wang, and Yang}]{hong2024dissecting}
Yihuai Hong, Yuelin Zou, Lijie Hu, Ziqian Zeng, Di~Wang, and Haiqin Yang. 2024.
\newblock Dissecting fine-tuning unlearning in large language models.
\newblock In \emph{Proceedings of the 2024 Conference on Empirical Methods in Natural Language Processing}, pages 3933--3941.

\bibitem[{Hsu et~al.(2012)Hsu, Kakade, and Zhang}]{hsu2012tail}
Daniel Hsu, Sham Kakade, and Tong Zhang. 2012.
\newblock A tail inequality for quadratic forms of subgaussian random vectors.

\bibitem[{Hu et~al.(2024)Hu, Habernal, Shen, and Wang}]{hu2024differentially}
Lijie Hu, Ivan Habernal, Lei Shen, and Di~Wang. 2024.
\newblock Differentially private natural language models: Recent advances and future directions.
\newblock In \emph{Findings of the Association for Computational Linguistics: EACL 2024}, pages 478--499.

\bibitem[{Hu et~al.(2022)Hu, Ni, Xiao, and Wang}]{hu2022high}
Lijie Hu, Shuo Ni, Hanshen Xiao, and Di~Wang. 2022.
\newblock High dimensional differentially private stochastic optimization with heavy-tailed data.
\newblock In \emph{Proceedings of the 41st ACM SIGMOD-SIGACT-SIGAI Symposium on Principles of Database Systems}, pages 227--236.

\bibitem[{Huang et~al.(2024)Huang, Yang, Habernal, Hu, and Wang}]{huang2024private}
Tianhao Huang, Tao Yang, Ivan Habernal, Lijie Hu, and Di~Wang. 2024.
\newblock Private language models via truncated laplacian mechanism.
\newblock In \emph{Proceedings of the 2024 Conference on Empirical Methods in Natural Language Processing}, pages 3980--3993.

\bibitem[{Kazemitabaar et~al.(2024)Kazemitabaar, Ye, Wang, Henley, Denny, Craig, and Grossman}]{kazemitabaar2024codeaid}
Majeed Kazemitabaar, Runlong Ye, Xiaoning Wang, Austin~Zachary Henley, Paul Denny, Michelle Craig, and Tovi Grossman. 2024.
\newblock Codeaid: Evaluating a classroom deployment of an llm-based programming assistant that balances student and educator needs.
\newblock In \emph{Proceedings of the CHI Conference on Human Factors in Computing Systems}, pages 1--20.

\bibitem[{Levy et~al.(2021)Levy, Sun, Amin, Kale, Kulesza, Mohri, and Suresh}]{userleveldp1}
Daniel Levy, Ziteng Sun, Kareem Amin, Satyen Kale, Alex Kulesza, Mehryar Mohri, and Ananda~Theertha Suresh. 2021.
\newblock Learning with user-level privacy.
\newblock \emph{Advances in Neural Information Processing Systems}, 34:12466--12479.

\bibitem[{Li et~al.(2023)Li, Guo, Fan, Xu, Huang, Meng, and Song}]{li2023multi}
Haoran Li, Dadi Guo, Wei Fan, Mingshi Xu, Jie Huang, Fanpu Meng, and Yangqiu Song. 2023.
\newblock Multi-step jailbreaking privacy attacks on chatgpt.
\newblock \emph{arXiv preprint arXiv:2304.05197}.

\bibitem[{Liu and Asi(2024{\natexlab{a}})}]{liu2024user}
Daogao Liu and Hilal Asi. 2024{\natexlab{a}}.
\newblock User-level differentially private stochastic convex optimization: Efficient algorithms with optimal rates.
\newblock In \emph{International Conference on Artificial Intelligence and Statistics}, pages 4240--4248. PMLR.

\bibitem[{Liu and Asi(2024{\natexlab{b}})}]{userleveldp}
Daogao Liu and Hilal Asi. 2024{\natexlab{b}}.
\newblock User-level differentially private stochastic convex optimization: Efficient algorithms with optimal rates.
\newblock In \emph{International Conference on Artificial Intelligence and Statistics}, pages 4240--4248. PMLR.

\bibitem[{Luce(1959)}]{luc12}
R~Duncan Luce. 1959.
\newblock \emph{Individual choice behavior}, volume~4.
\newblock Wiley New York.

\bibitem[{Lyu et~al.(2023)Lyu, Jiang, Zeng, Xia, Wang, Zhang, Chen, Leung, Tang, and Luo}]{llmprivacy1}
Hanjia Lyu, Song Jiang, Hanqing Zeng, Yinglong Xia, Qifan Wang, Si~Zhang, Ren Chen, Christopher Leung, Jiajie Tang, and Jiebo Luo. 2023.
\newblock Llm-rec: Personalized recommendation via prompting large language models.
\newblock \emph{arXiv preprint arXiv:2307.15780}.

\bibitem[{Maas et~al.(2011)Maas, Daly, Pham, Huang, Ng, and Potts}]{imdb}
Andrew Maas, Raymond~E Daly, Peter~T Pham, Dan Huang, Andrew~Y Ng, and Christopher Potts. 2011.
\newblock Learning word vectors for sentiment analysis.
\newblock In \emph{Proceedings of the 49th annual meeting of the association for computational linguistics: Human language technologies}, pages 142--150.

\bibitem[{Nam et~al.(2024)Nam, Macvean, Hellendoorn, Vasilescu, and Myers}]{nam2024using}
Daye Nam, Andrew Macvean, Vincent Hellendoorn, Bogdan Vasilescu, and Brad Myers. 2024.
\newblock Using an llm to help with code understanding.
\newblock In \emph{Proceedings of the IEEE/ACM 46th International Conference on Software Engineering}, pages 1--13.

\bibitem[{{OpenAI}(2022)}]{chatgpt}
{OpenAI}. 2022.
\newblock \href {https://openai.com/blog/chatgpt} {Introducing {ChatGPT}}.

\bibitem[{Ouyang et~al.(2022)Ouyang, Wu, Jiang, Almeida, Wainwright, Mishkin, Zhang, Agarwal, Slama, Ray et~al.}]{ouyang2022}
Long Ouyang, Jeffrey Wu, Xu~Jiang, Diogo Almeida, Carroll Wainwright, Pamela Mishkin, Chong Zhang, Sandhini Agarwal, Katarina Slama, Alex Ray, et~al. 2022.
\newblock Training language models to follow instructions with human feedback.
\newblock \emph{Advances in neural information processing systems}, 35:27730--27744.

\bibitem[{Plackett(1975)}]{pla75}
Robin~L Plackett. 1975.
\newblock The analysis of permutations.
\newblock \emph{Journal of the Royal Statistical Society Series C: Applied Statistics}, 24(2):193--202.

\bibitem[{Rafailov et~al.(2024)Rafailov, Sharma, Mitchell, Manning, Ermon, and Finn}]{dpo}
Rafael Rafailov, Archit Sharma, Eric Mitchell, Christopher~D Manning, Stefano Ermon, and Chelsea Finn. 2024.
\newblock Direct preference optimization: Your language model is secretly a reward model.
\newblock \emph{Advances in Neural Information Processing Systems}, 36.

\bibitem[{Ross et~al.(2023)Ross, Martinez, Houde, Muller, and Weisz}]{ross2023programmer}
Steven~I Ross, Fernando Martinez, Stephanie Houde, Michael Muller, and Justin~D Weisz. 2023.
\newblock The programmer’s assistant: Conversational interaction with a large language model for software development.
\newblock In \emph{Proceedings of the 28th International Conference on Intelligent User Interfaces}, pages 491--514.

\bibitem[{Shah et~al.(2016)Shah, Balakrishnan, Bradley, Parekh, Ramch, Wainwright et~al.}]{shah2016}
Nihar~B Shah, Sivaraman Balakrishnan, Joseph Bradley, Abhay Parekh, Kannan Ramch, Martin~J Wainwright, et~al. 2016.
\newblock Estimation from pairwise comparisons: Sharp minimax bounds with topology dependence.
\newblock \emph{Journal of Machine Learning Research}, 17(58):1--47.

\bibitem[{Shen et~al.(2023)Shen, Wang, Xiang, Ying, and Wang}]{shen2023differentially}
Hanpu Shen, Cheng-Long Wang, Zihang Xiang, Yiming Ying, and Di~Wang. 2023.
\newblock Differentially private non-convex learning for multi-layer neural networks.
\newblock \emph{arXiv preprint arXiv:2310.08425}.

\bibitem[{Shin et~al.(2023)Shin, Dragan, and Brown}]{shin2023}
Daniel Shin, Anca~D Dragan, and Daniel~S Brown. 2023.
\newblock Benchmarks and algorithms for offline preference-based reward learning.
\newblock \emph{arXiv preprint arXiv:2301.01392}.

\bibitem[{Su et~al.(2024)Su, Hu, and Wang}]{su2024faster}
Jinyan Su, Lijie Hu, and Di~Wang. 2024.
\newblock Faster rates of differentially private stochastic convex optimization.
\newblock \emph{Journal of Machine Learning Research}, 25(114):1--41.

\bibitem[{Team et~al.(2024)Team, Riviere, Pathak, Sessa, Hardin, Bhupatiraju, Hussenot, Mesnard, Shahriari, Ram{\'e} et~al.}]{gemma2}
Gemma Team, Morgane Riviere, Shreya Pathak, Pier~Giuseppe Sessa, Cassidy Hardin, Surya Bhupatiraju, L{\'e}onard Hussenot, Thomas Mesnard, Bobak Shahriari, Alexandre Ram{\'e}, et~al. 2024.
\newblock Gemma 2: Improving open language models at a practical size, 2024.
\newblock \emph{URL https://arxiv. org/abs/2408.00118}, 1(3).

\bibitem[{Teku et~al.(2025)Teku, Tian, Bhattacharjee, Chakraborty, Bedi, and Tandon}]{PROPSdp}
Noel Teku, Fengwei Tian, Payel Bhattacharjee, Souradip Chakraborty, Amrit~Singh Bedi, and Ravi Tandon. 2025.
\newblock \href {https://openreview.net/forum?id=nHmaQf2wJC} {Aligning large language models with preference privacy}.

\bibitem[{Thirunavukarasu et~al.(2023)Thirunavukarasu, Ting, Elangovan, Gutierrez, Tan, and Ting}]{thirunavukarasu2023large}
Arun~James Thirunavukarasu, Darren Shu~Jeng Ting, Kabilan Elangovan, Laura Gutierrez, Ting~Fang Tan, and Daniel Shu~Wei Ting. 2023.
\newblock Large language models in medicine.
\newblock \emph{Nature medicine}, 29(8):1930--1940.

\bibitem[{Touvron et~al.(2023)Touvron, Martin, Stone, Albert, Almahairi, Babaei, Bashlykov, Batra, Bhargava, Bhosale et~al.}]{llama}
Hugo Touvron, Louis Martin, Kevin Stone, Peter Albert, Amjad Almahairi, Yasmine Babaei, Nikolay Bashlykov, Soumya Batra, Prajjwal Bhargava, Shruti Bhosale, et~al. 2023.
\newblock Llama 2: Open foundation and fine-tuned chat models.
\newblock \emph{arXiv preprint arXiv:2307.09288}.

\bibitem[{V{\"o}lske et~al.(2017)V{\"o}lske, Potthast, Syed, and Stein}]{tldr}
Michael V{\"o}lske, Martin Potthast, Shahbaz Syed, and Benno Stein. 2017.
\newblock Tl; dr: Mining reddit to learn automatic summarization.
\newblock In \emph{Proceedings of the Workshop on New Frontiers in Summarization}, pages 59--63.

\bibitem[{Wang et~al.(2019)Wang, Chen, and Xu}]{wang2019differentially1}
Di~Wang, Changyou Chen, and Jinhui Xu. 2019.
\newblock Differentially private empirical risk minimization with non-convex loss functions.
\newblock In \emph{International Conference on Machine Learning}, pages 6526--6535. PMLR.

\bibitem[{Wang and Xu(2019)}]{wang2019differentially}
Di~Wang and Jinhui Xu. 2019.
\newblock Differentially private empirical risk minimization with smooth non-convex loss functions: A non-stationary view.
\newblock In \emph{Proceedings of the AAAI Conference on Artificial Intelligence}, volume~33, pages 1182--1189.

\bibitem[{Wang et~al.(2017)Wang, Ye, and Xu}]{wang2017differentially}
Di~Wang, Minwei Ye, and Jinhui Xu. 2017.
\newblock Differentially private empirical risk minimization revisited: Faster and more general.
\newblock \emph{Advances in Neural Information Processing Systems}, 30.

\bibitem[{Wang et~al.(2020)Wang, Foster, and Kakade}]{wang2020}
Ruosong Wang, Dean~P Foster, and Sham~M Kakade. 2020.
\newblock What are the statistical limits of offline rl with linear function approximation?
\newblock \emph{arXiv preprint arXiv:2010.11895}.

\bibitem[{Warner(1965)}]{warner1965randomized}
Stanley~L Warner. 1965.
\newblock Randomized response: A survey technique for eliminating evasive answer bias.
\newblock \emph{Journal of the American statistical association}, 60(309):63--69.

\bibitem[{Wu et~al.(2023{\natexlab{a}})Wu, Inan, Backurs, Chandrasekaran, Kulkarni, and Sim}]{rlhfdp23}
Fan Wu, Huseyin~A Inan, Arturs Backurs, Varun Chandrasekaran, Janardhan Kulkarni, and Robert Sim. 2023{\natexlab{a}}.
\newblock Privately aligning language models with reinforcement learning.
\newblock \emph{arXiv preprint arXiv:2310.16960}.

\bibitem[{Wu et~al.(2023{\natexlab{b}})Wu, Zhou, Liu, Lu, Liu, Zhang, Sun, Wu, and Kuang}]{wu2023precedent}
Yiquan Wu, Siying Zhou, Yifei Liu, Weiming Lu, Xiaozhong Liu, Yating Zhang, Changlong Sun, Fei Wu, and Kun Kuang. 2023{\natexlab{b}}.
\newblock Precedent-enhanced legal judgment prediction with llm and domain-model collaboration.
\newblock \emph{arXiv preprint arXiv:2310.09241}.

\bibitem[{Xiao et~al.(2023)Xiao, Xiang, Wang, and Devadas}]{xiao2023theory}
Hanshen Xiao, Zihang Xiang, Di~Wang, and Srinivas Devadas. 2023.
\newblock A theory to instruct differentially-private learning via clipping bias reduction.
\newblock In \emph{2023 IEEE Symposium on Security and Privacy (SP)}, pages 2170--2189. IEEE.

\bibitem[{Xu et~al.(2024)Xu, Fu, Gao, Ye, Liu, Mei, Wang, Yu, and Wu}]{dpobetter}
Shusheng Xu, Wei Fu, Jiaxuan Gao, Wenjie Ye, Weilin Liu, Zhiyu Mei, Guangju Wang, Chao Yu, and Yi~Wu. 2024.
\newblock Is dpo superior to ppo for llm alignment? a comprehensive study.
\newblock \emph{arXiv preprint arXiv:2404.10719}.

\bibitem[{Yan et~al.(2024)Yan, Li, Xu, Dong, Zhang, Ren, and Cheng}]{privacysurvey}
Biwei Yan, Kun Li, Minghui Xu, Yueyan Dong, Yue Zhang, Zhaochun Ren, and Xiuzhen Cheng. 2024.
\newblock On protecting the data privacy of large language models (llms): A survey.
\newblock \emph{arXiv preprint arXiv:2403.05156}.

\bibitem[{Yang et~al.()Yang, Ali, Yu, Hu, and Wang}]{yangmodel}
Shu Yang, Muhammad~Asif Ali, Lu~Yu, Lijie Hu, and Di~Wang.
\newblock Model autophagy analysis to explicate self-consumption within human-ai interactions.
\newblock In \emph{First Conference on Language Modeling}.

\bibitem[{Yin et~al.(2022)Yin, Duan, Wang, and Wang}]{yin2022near}
Ming Yin, Yaqi Duan, Mengdi Wang, and Yu-Xiang Wang. 2022.
\newblock Near-optimal offline reinforcement learning with linear representation: Leveraging variance information with pessimism.
\newblock \emph{arXiv preprint arXiv:2203.05804}.

\bibitem[{Yu et~al.(2021)Yu, Naik, Backurs, Gopi, Inan, Kamath, Kulkarni, Lee, Manoel, Wutschitz et~al.}]{yu2021differentially}
Da~Yu, Saurabh Naik, Arturs Backurs, Sivakanth Gopi, Huseyin~A Inan, Gautam Kamath, Janardhan Kulkarni, Yin~Tat Lee, Andre Manoel, Lukas Wutschitz, et~al. 2021.
\newblock Differentially private fine-tuning of language models.
\newblock \emph{arXiv preprint arXiv:2110.06500}.

\bibitem[{Yuan et~al.(2024)Yuan, Rastogi, Naik, Rajagopal, Goyal, Zhao, Chintagunta, and Ward}]{yuan2024continued}
Dong Yuan, Eti Rastogi, Gautam Naik, Sree~Prasanna Rajagopal, Sagar Goyal, Fen Zhao, Bharath Chintagunta, and Jeff Ward. 2024.
\newblock A continued pretrained llm approach for automatic medical note generation.
\newblock \emph{arXiv preprint arXiv:2403.09057}.

\bibitem[{Zhang et~al.(2024)Zhang, Li, Kan, Cheng, Hu, and Wang}]{zhang2024locate}
Zhuoran Zhang, Yongxiang Li, Zijian Kan, Keyuan Cheng, Lijie Hu, and Di~Wang. 2024.
\newblock Locate-then-edit for multi-hop factual recall under knowledge editing.
\newblock \emph{arXiv preprint arXiv:2410.06331}.

\bibitem[{Zhao et~al.(2024)Zhao, Lai, Shen, Li, Wu, and Liu}]{zhao2024huber}
Puning Zhao, Lifeng Lai, Li~Shen, Qingming Li, Jiafei Wu, and Zhe Liu. 2024.
\newblock A huber loss minimization approach to mean estimation under user-level differential privacy.
\newblock \emph{arXiv preprint arXiv:2405.13453}.

\bibitem[{Zhu et~al.(2023)Zhu, Jordan, and Jiao}]{kwise}
Banghua Zhu, Michael Jordan, and Jiantao Jiao. 2023.
\newblock Principled reinforcement learning with human feedback from pairwise or k-wise comparisons.
\newblock In \emph{International Conference on Machine Learning}. PMLR.

\end{thebibliography}

\newpage
\appendix
\onecolumn
\section{Additional Experiment Details}\label{app:A}
\begin{figure}[htbp]
    \centering
    \begin{subfigure}{\textwidth}
        \centering
        \includegraphics[width=1\textwidth]{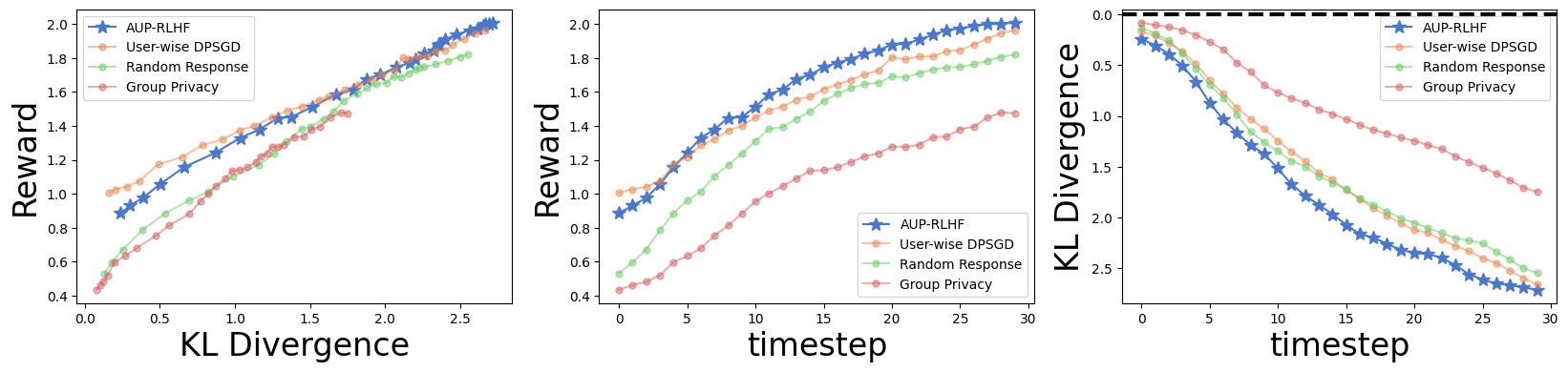}
        \caption{$\epsilon=3$}
    \end{subfigure}
    
    \vspace{10pt} 

    \begin{subfigure}{\textwidth}
        \centering
        \includegraphics[width=1\textwidth]{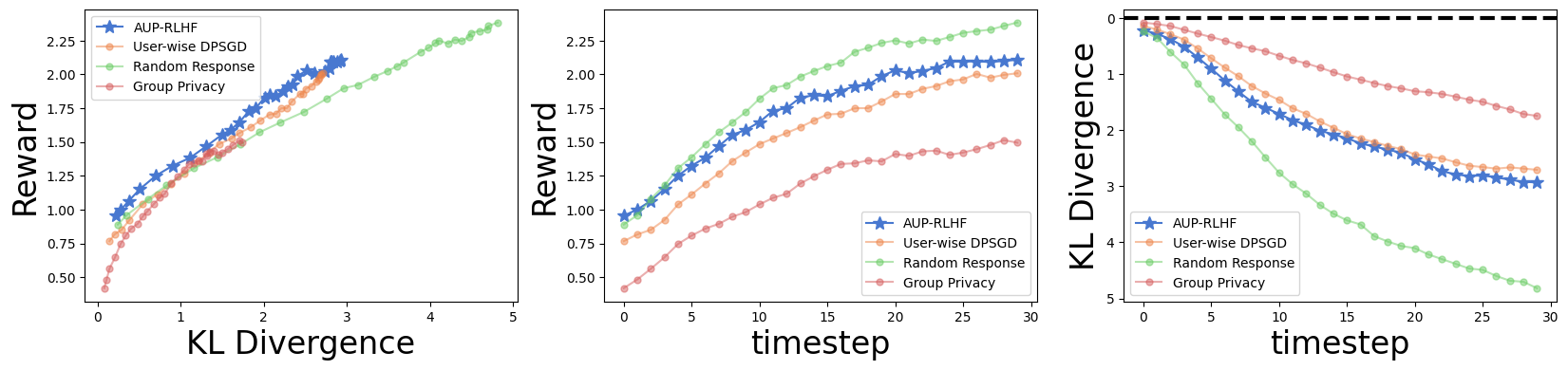}
        \caption{$\epsilon=8$}
    \end{subfigure}


    
    \caption{IMDb Sentiment Generation (Llama-2-7b).}
    \label{fig:imdb-llama}
\end{figure}

\begin{figure}[htbp]
    \centering
    \begin{subfigure}{\textwidth}
        \centering
        \includegraphics[width=1\textwidth]{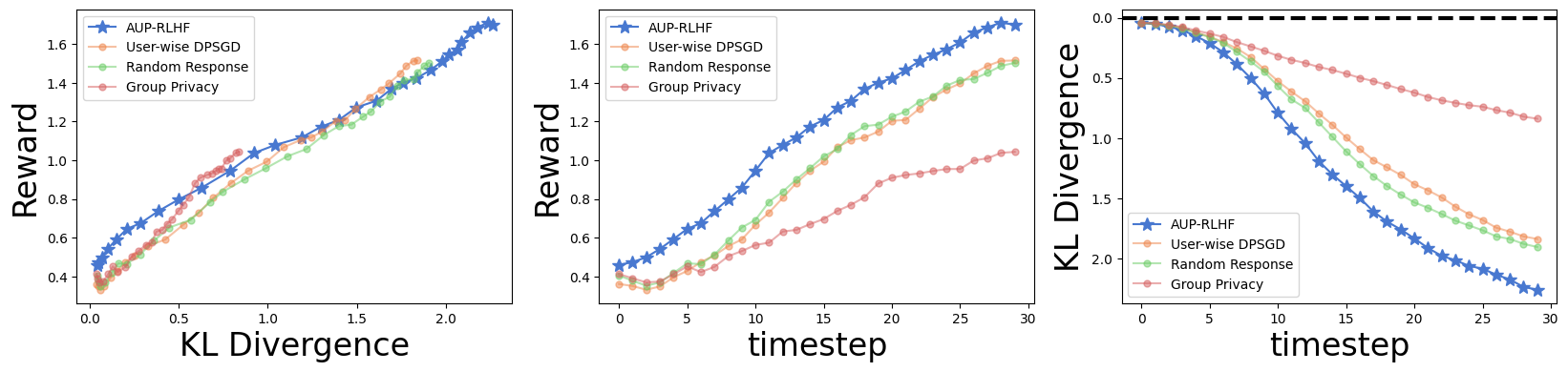}
        \caption{$\epsilon=3$}
    \end{subfigure}
    
    \vspace{10pt} 

    \begin{subfigure}{\textwidth}
        \centering
        \includegraphics[width=1\textwidth]{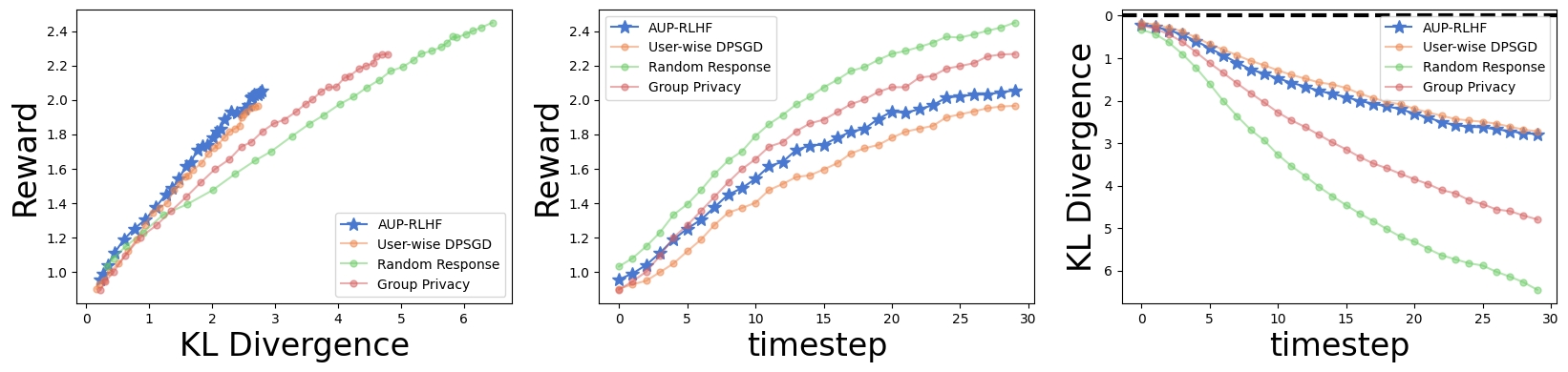}
        \caption{$\epsilon=8$}
    \end{subfigure}

    \caption{IMDb Sentiment Generation (Gemma-2-2b).}
    \label{fig:imdb-gemma}
\end{figure}

\subsection{IMDb Sentiment Experiment Details}\label{app:imdb_reward}
We follow the setup in \cite{dpo}, where prefix prompts are sampled between 2 to 8 tokens. For reward model training, we use \texttt{cardiffnlp/twitter-roberta-base-sentiment}, a backbone model trained on Twitter data, which is further fine-tuned on IMDb sentiment analysis by updating the last layer only. Llama-2-7b and Gemma-2-2b serve as the base models. While each method fine-tunes the base models using its respective reward model, test performance is evaluated using a ground-truth reward model, \texttt{siebert/sentiment-roberta-large-english}, ensuring a fair comparison. On every 10 steps until convergence, we report the rewards and KL-divergence over a set of test prompts, shown in Fig~\ref{fig:imdb-llama} and Fig~\ref{fig:imdb-gemma}.

\begin{table}[h]
    \centering
    \begin{tabular}{lcc}
        \toprule
        \textbf{Parameter} & \textbf{Gemma-2-2B} & \textbf{Llama-2-7B} \\
        \midrule
        Batch Size & 4 & 4 \\
        Learning Rate & 1e-4 & 1e-4 \\
        Warmup Ratios & 0.1 & 0.1 \\
        Learning Rate Scheduler & Cosine & Cosine \\
        Optimizer & AdamW & AdamW \\
        Training Epochs & 1 & 1 \\
        LoRA Rank & 16 & 16 \\
        LoRA Alpha & 32 & 32 \\
        \bottomrule
    \end{tabular}
    \caption{Parameter settings for Supervised Fine-Tuning (SFT) on Gemma-2-2B and Llama-2-7B.}
    \label{tab:sft_params}
\end{table}

\begin{table}[h]
    \centering
    \begin{tabular}{lcc}
        \toprule
        \textbf{Parameter} & \textbf{Gemma-2B} & \textbf{Llama-2-7B} \\
        \midrule
        Batch Size & 64 & 64 \\
        Mini-batch Size & 64 & 64 \\
        Learning Rate & 1e-5 & 1e-5 \\
        Optimizer & RMSProp & RMSProp \\
        KL Penalty Coefficient & 0.15 & 0.15 \\
        LoRA Rank & 16 & 16 \\
        LoRA Alpha & 32 & 32 \\
        \bottomrule
    \end{tabular}
    \caption{PPO parameter settings for Gemma-2-2B and Llama-2-7B.}
    \label{tab:ppo_params}
\end{table}

\begin{table}[h]
    \centering
    \begin{tabular}{lcccc}
        \toprule
        \textbf{Parameter} & \textbf{AUP-RLHF} & \textbf{User-wise DPSGD} & \textbf{Group Privacy} & \textbf{Random Response} \\
        \midrule
        User Batch Size & 50 & 50 & N/A & N/A \\
        Per-user Sample Size & 10 & 10 & N/A & N/A \\
        Sample Batch Size & N/A & N/A & 500 & 500\\
        Learning Rate & 1e-3 & 1e-3 & 1e-3 & 1e-3 \\
        Warmup Ratios & 0.2 & 0.2 & 0.2 & 0.2 \\
        Learning Rate Scheduler & Cosine & Cosine & Cosine & Cosine \\
        Epochs & 5 & 5 & 5 & 5 \\
        \bottomrule
    \end{tabular}
    \caption{Reward model parameter settings for Gemma-2-2B and Llama-2-7B.}
    \label{tab:ppo_params}
\end{table}

\subsection{TL;DR Summarization}\label{app:tldr_reward}

For reward model training, we use \texttt{tasksource/ModernBERT-base-nli}, a backbone model trained on Natural Language Inference, which is further fine-tuned on summarization preference by updating the last layer only.

\begin{figure}[htbp]
    \centering
    \begin{subfigure}{0.45\textwidth}
        \centering
        \includegraphics[width=1\textwidth]{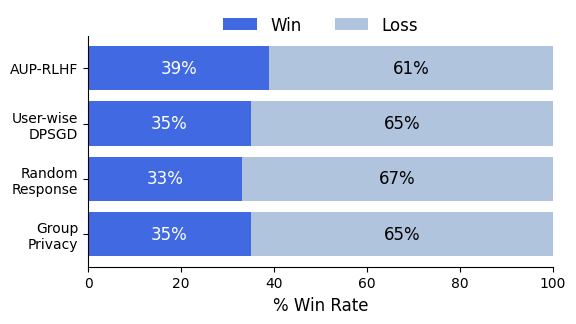}
        \caption{$\epsilon=3$}
    \end{subfigure}
    \hfill
    \begin{subfigure}{0.45\textwidth}
        \centering
        \includegraphics[width=1\textwidth]{figures/tldr_gemma_eps_8.png}
        \caption{$\epsilon=8$}
    \end{subfigure}

    \caption{Win Rate Against the SFT Model for TL;DR Summarization (Gemma-2-2b).}
    \label{fig:tldr-gemma}
\end{figure}

\section{Other Details}
\subsection{Algorithm}
\begin{algorithm}[tb]
   \caption{AboveThreshold}
   \label{alg:above_threshold}
\begin{algorithmic}[1]
   \State {\bf Input:} Dataset $\mathcal{D} = (Z_1, \dots, Z_n)$, threshold $\Delta \in \mathbb{R}$, privacy parameter $\varepsilon$
   \State Let $\hat{\Delta} := \Delta - \text{Lap}\left(\frac{2}{\varepsilon}\right)$
   \For{$t = 1$ {\bf to} $T$}
       \State Receive a new query $q_t: \mathbb{Z}^n \rightarrow \mathbb{R}$
       \State Sample $\nu_t \sim \text{Lap}\left(\frac{4}{\varepsilon}\right)$
       \If{$q_t(\mathcal{D}) + \nu_t < \hat{\Delta}$}
           \State Output: $a_t = \bot$
           \State Halt
       \Else
           \State Output: $a_t = \top$
       \EndIf
   \EndFor
\end{algorithmic}
\end{algorithm}

\begin{algorithm}
\caption{User-wise DP-SGD}\label{alg:user_dp_sgd}
\begin{algorithmic}[1]
\State \textbf{Input:} Initial model weights $\theta_0$, training set $D$ of $n$ users and $m$ records, learning rate $\eta$, iterations $T$, user batch size $\tilde{n}$, privacy budget $\varepsilon$, number of records per user $\{k_i\}_{i=1}^{\tilde{n}}$ gradient norm bound $C$
\State $ \sigma \leftarrow \text{PRIVACYCOUNTING}(\varepsilon, \delta, n, \tilde{n}, T) $\Comment{Compute noise multiplier}

\State \text{(User-wise DP-SGD directly uses $D$)}\Comment{Prepare the dataset}

\For{$t = 1, \dots, T$}
    \State Randomly draw $U_t$, a batch of $\tilde{n}$ users
    \For{$i = 1, \dots, \tilde{n}$}
        \State Sample $z_j=\{x_j, y_j\}_{j \in [k_i]}$  $k_i$ records for the $i$th user in $U_t$
        \State $g_{t,i} \leftarrow \frac{1}{k_i} \sum_{j \in [k_i]} \nabla_{\theta_t} \ell(\theta, z_j)$\Comment{Compute user-averaged gradient}
        \State $\hat{g}_{t,i} \leftarrow g_{t,i} / \max(1, \frac{\|g_{t,i}\|_2}{C})$\Comment{Clip gradient}
    \EndFor
    \State $\tilde{g}_t \leftarrow \frac{1}{\tilde{n}} \sum_{j \in [\tilde{n}]} \hat{g}_{t,i} + \mathcal{N}(0, \sigma^2 C^2 I_d)$\Comment{Aggregate and add noise}
    \State $\theta_{t+1} \leftarrow \theta_t - \eta \tilde{g}_t$\Comment{Model update}
\EndFor
\end{algorithmic}
\end{algorithm}

\subsection{De-biased loss function}\label{sec:debias_loss}
We start with the BCE loss under randomized labels. Next, we discuss their drawback, which help us get intuition for a de-biased loss. For any \( \theta \in \mathbb{R}^{d} \), the predicted probabilities of a randomized label \( \widetilde{y}_{i} \) given \( x_{i} \) as
\[
\begin{array}{l}
\widetilde{p}_{i, 1} = \sigma(\varepsilon)·\sigma\left(x_{i}^{\top} \theta\right)  + (1-\sigma(\varepsilon))·\left(1 - \sigma\left(x_{i}^{\top} \theta\right)\right), \\
\widetilde{p}_{i, 0} = \sigma(\varepsilon)·\left(1 - \sigma\left(x_{i}^{\top} \theta\right)\right)  + (1-\sigma(\varepsilon))·\sigma\left(x_{i}^{\top} \theta\right).
\end{array}
\]

After applying random response to the dataset, it becomes \(\mathcal{\widetilde{D}}= \left((x_{i,j}, \widetilde{y}_{i,j})\right)_{j=1}^{m})_{i=1}^{n} \). The MLE \( \widetilde{\theta}_{\text{MLE}} \) computed on this dataset satisfies user-level privacy and can be obtained by minimizing the BCE loss (the negative log-likelihood). 
\[
l_{\mathcal{D}, \varepsilon}(\theta) = -\sum_{i=1}^{n}\sum_{j=1}^{m}\left[\mathbbm{1}\left(\widetilde{y}_{i,j}=1\right) \log \widetilde{p}_{i,j}^{1} + \mathbbm{1}\left(\widetilde{y}_{i,j}=0\right) \log \widetilde{p}_{i, j}^{0}\right].
\]

The drawback of BCE loss with randomized labels is that it introduces bias in loss compared with clear-text loss
\(
\mathbb{E}[l_{\mathcal{D}, \varepsilon}(\theta) ]\neq\mathbb{E}[{l}_{\mathcal{D}}(\theta)]
\).
At the same time, this leads to a discrepancy in the log-odds between the randomized dataset and the clear dataset, introducing bias in the preferences for \(a^1\) and \(a^0\)\cite{rewarddp}. This motivates us to design a de-bias loss function that ensures the log-odds are identical between the randomized and clear datasets. The following loss achieves this:
\[
\widehat{l}_{\mathcal{D}, \varepsilon}(\theta) = -\sum_{i=1}^{n}\left[\mathbbm{1}\left(\widetilde{y}_{i}=1\right) \log \widehat{p}_{i, 1} + \mathbbm{1}\left(\widetilde{y}_{i}=0\right) \log \widehat{p}_{i, 0}\right]
\]

where we define, for any \( \theta \in \mathbb{R}^{d} \), the predicted scores of each randomized label \( \widetilde{y}_{i} \) given \( x_{i} \) as

\[
\widehat{p}_{i, 1} = \frac{\sigma\left(x_{i}^{\top} \theta\right)^{\sigma(\varepsilon)}}{\left(1 - \sigma\left(x_{i}^{\top} \theta\right)\right)^{(1-\sigma(\varepsilon))}}, \quad \widehat{p}_{i, 0} = \frac{\left(1 - \sigma\left(x_{i}^{\top} \theta\right)\right)^{\sigma(\varepsilon)}}{\sigma\left(x_{i}^{\top} \theta\right)^{(1-\sigma(\varepsilon))}}.
\]

Although \( \widehat{p}_{i, 1} \) and \( \widehat{p}_{i, 0} \) are not probabilities, they satisfy our desired property:

\[
\log \frac{\widehat{p}_{i, 1}}{\widehat{p}_{i, 0}} = \log \frac{\sigma\left(x_{i}^{\top} \theta\right)}{1 - \sigma\left(x_{i}^{\top} \theta\right)} = \operatorname{logit}\left(p_{i, 1}\right).
\]

Hence, the loss function \( \widehat{l}_{\mathcal{D}, \varepsilon}(\theta) \) essentially de-biases the effect of randomization. 
\subsection{Extension to K-wise}\label{sec:kwise}
Let \( \mathcal{D} = (s_{i}, a_{i}^{1},..., a_{i}^{K}, y_{i})_{i=1}^{n} \) be a dataset of \( n \) samples, where each sample has a state \( s_{i} \in \mathcal{S} \) (e.g., prompt given to a language model), K actions \( a_{i}^{1},..., a_{i}^{K} \in \mathcal{A} \) (e.g., K responses from the language model), and label \( y_{i} \in \{1,2,...,K\} \) indicating which action is most preferred by human experts. We assume that the state \( s_{i} \) is first sampled from some fixed distribution \( \rho \). The K actions \( (a_{i}^{1},... ,a_{i}^{K}) \) are then sampled from some joint distribution (i.e., a behavior policy) \( \mu \) conditioned on \( s_{i} \).  Under the Plackett-Luce model\cite{pla75, luc12}, the label \(y\) is sampled according to the probability distribution:
{\small
\[
\mathbb{P}_{\theta^{*}}\left[y=k \mid s, a_{1}, \ldots, a_{K}\right] = \frac{\exp \left(r_{\theta^{*}}\left(s, a_{k}\right)\right)}{\sum_{j=1}^{K} \exp \left(r_{\theta^{*}}\left(s, a_{j}\right)\right)}
\]}

For the Plackett-Luce (PL) model \cite{pla75, luc12} for \(K\)-wise comparisons between actions, let \(\Pi\) be the set of all permutations \(\pi: [K] \rightarrow [K]\), which denotes the ranking of  \(K\) actions, where \(a_{\pi(j)}\) denotes the \(j\)-th ranked action. Under the PL model, we define the loss of a permutation \(\pi \in \Pi\) for a state \(s\) as
{\[
\ell(\theta ; s, \pi) = -\log \left(\prod_{j=1}^{K} \frac{\exp \left(r_{\theta}\left(s, a_{\pi(j)}\right)\right)}{\sum_{k^{\prime}=j}^{K} \exp \left(r_{\theta}\left(s, a_{\pi\left(k^{\prime}\right)}\right)\right)}\right).
\]}

\begin{theorem}\label{thm:4}

Under the same setting of Theorem~\ref{thm:3}, the output \(\hat{\theta} \) of Algorithm~\ref{alg:aup-rlhf} with loss function $\ell(\theta ; s, \pi)$  satisfies

\[
\mathbb{E} \|\hat{\theta} - \theta^*\| \leq \widetilde{O}(\frac{ K^2 L}{\kappa \gamma} \left[\frac{1}{\sqrt{mn}} + \frac{\sqrt{d}}{\sqrt{m}n\varepsilon}\right])
\]
The Big-$\tilde{O}$ notation omits other logarithmic terms. 
\end{theorem}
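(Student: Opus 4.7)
The plan is to emulate the proof of Theorem~\ref{thm:3} line by line, replacing the binary BCE loss by the Plackett-Luce loss $\ell(\theta; s, \pi)$ and tracking how the analytic constants change with $K$. Crucially, the privacy guarantee of Algorithm~\ref{alg:aup-rlhf} (Theorem~\ref{thm:2}) is loss-agnostic: AboveThreshold, outlier removal, and the Gaussian mechanism do not depend on which particular loss generates the gradients $g_{t,i}$. Thus only the utility analysis needs revision, and it decomposes naturally into three pieces: (i) per-sample gradient size, (ii) user-averaged gradient concentration, and (iii) population-level strong convexity of the regularized log-likelihood.

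For step~(i), writing $\nabla_\theta \ell(\theta; s, \pi) = \sum_{j=1}^{K} \bigl(-\phi(s, a_{\pi(j)}) + \sum_{k'=j}^{K} p_{j,k'}\, \phi(s, a_{\pi(k')})\bigr)$ and noting that each summand is a centered feature vector of norm at most $2L$, we obtain $\|\nabla_\theta \ell\| \le 2KL$. Step~(ii) is then a routine Bernstein/Hoeffding bound over the $m$ i.i.d.\ per-user samples: the user-averaged gradient $g_{t,i} = \frac{1}{m}\sum_{j} \nabla \ell(\theta_t; s_{i,j}, \pi_{i,j})$ is $(\tau_K, \gamma)$-concentrated with $\tau_K = \widetilde{O}(KL/\sqrt{m})$, so Step~15 of Algorithm~\ref{alg:aup-SGD} now injects Gaussian noise whose standard deviation scales with $\tau_K$, inflating the DP noise term by a factor of $K$. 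For step~(iii), I would invoke the PL-specific curvature result from the $K$-wise MLE literature (cf.\ \citealt{kwise}) to show that on the tangent space $\{\langle \mathbf{1}, v\rangle = 0\}$ the expected Hessian satisfies $\nabla^2 \mathbb{E}[\ell] \succeq (\gamma/K)\,\Sigma$, i.e.\ the effective strong-convexity constant degrades by a factor of $K$. Plugging the new pair $(\tau_K, \gamma/K)$ into the strongly convex SGD analysis underlying Theorem~\ref{thm:3}, while retaining the same schedule for $k, \eta_i, \tau_i, T_i$ (up to constants absorbing $K$), multiplies both the statistical $\tfrac{1}{\sqrt{mn}}$ term and the privacy $\tfrac{\sqrt{d}}{\sqrt{m} n \varepsilon}$ term by an overall factor of $K^2$ relative to the binary case, matching the claimed rate.

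The main obstacle is the strong-convexity bookkeeping in step~(iii). A naive treatment that union-bounds over the $K$ nested softmax blocks in $\ell$ easily produces powers of $K$ that are too large (up to $K^3$ or worse), which would inflate the final bound beyond $K^2$. Obtaining the correct $\Theta(1/K)$ degradation requires a structured analysis of the Fisher information of the PL model that exploits the telescoping structure of the ranking likelihood, rather than treating each softmax level independently. Once this curvature bound is combined with the $\Theta(K)$ gradient bound from step~(i), the remainder of the argument is a transcription of Theorem~\ref{thm:3} with the substituted constants, and the final $K^2$ factor emerges as exactly $K \cdot K$, one from the enlarged $\tau_K$ and one from the shrunken strong-convexity constant.
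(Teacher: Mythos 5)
Your proposal is correct in outline and follows the same strategy as the paper: both reduce Theorem~\ref{thm:4} to the Theorem~\ref{thm:3} machinery (Lemma~\ref{lemma:2}, the strongly convex user-level DP-SCO bound) by re-deriving the Lipschitz and strong-convexity constants for the Plackett--Luce loss, and both rely on the privacy analysis being loss-agnostic. The difference is in where the $K^2$ is booked. The paper's Lemma~\ref{lemma:5} bounds the per-sample gradient by $2K^2L$ --- writing $\nabla\ell$ as $V^\top X$ over the $K(K-1)/2$ pairwise feature differences and using $\|V\|_2\|X\|_2 \le K\cdot O(LK)$ --- and shows the population Hessian is $\kappa\gamma$-strongly convex with \emph{no} degradation in $K$: the softmax weights, lower bounded by $\exp(-4LB)/(2(K+1-j)^2)$, are absorbed because Assumption~\ref{ass:3} requires $\lambda_{\min}(\Sigma_{i,j})\ge\kappa$ for every pair, so a single $O(1)$-weighted term of the Hessian sum already certifies curvature $\kappa\gamma$. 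You instead use the tighter gradient bound $2KL$ (each of the $K$ telescoping terms is a feature minus a convex combination of features, hence of norm at most $2L$) and posit a $\gamma/K$ curvature degradation; the product $L'/\mu$ is the same $K^2L/(\kappa\gamma)$. Your route still proves the theorem --- a smaller strong-convexity constant is always a valid one, so if the curvature in fact does not degrade your bound only improves (indeed, with the tight $2KL$ gradient bound and undegraded curvature one would get $K$ rather than $K^2$, which is still within the claimed $\widetilde{O}$) --- but the ``main obstacle'' you flag is not one the paper confronts: under its per-pair coverage assumption no structured Fisher-information analysis is needed, and the entire $K^2$ is carried by the (somewhat loose) Lipschitz constant. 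You should also make explicit that the concentration threshold $\tau_i$ and step size must be rescaled by the new gradient bound, which the paper leaves implicit in ``under the same setting.''
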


\begin{remark}
When \( K = 2 \) or  \( K = O(1) \), the PL model reduces to the pairwise comparison considered in the BTL model, yielding the same utility as in Theorem~\ref{thm:3}.  
\end{remark}
\subsection{Lower Bound of User-level DP-RLHF}\label{app:lowerbound}
On the other side, we will study the lower bound of the estimation error for user-level DP-RLHF. 
Consider a family of distributions $\mathcal{P}$ over $(\mathcal{X}^m)^n$, where $\mathcal{X}$ represents the data universe, $m$ denotes the sample size, and $n$ refers to the user size. Our goal is to estimate a parameter $\theta$, which is a mapping $\theta: \mathcal{P} \rightarrow \theta$ that characterizes the underlying distribution.
To quantify the estimation error, we define a pseudo-metric $\rho: \Theta \times \Theta \rightarrow \mathbb{R}_{+}$, which serves as the loss function for evaluating the accuracy of the estimate. The minimax risk under the loss function $\rho$ for the class $\mathcal{P}$ is given by:
$$
R(\mathcal{P}, \rho):=\min _{\hat{\theta}} \max _{P \in \mathcal{P}} \mathbb{E}_{X \sim P}[\rho(\hat{\theta}(X), \theta(P))].
$$
In the following, we focus on estimating an unknown parameter ${\theta}$, within the BTL model under a use-level privacy framework. Data is collected from $n$ users, where each user provides $m$ samples. We consider a fixed design setup, meaning that the feature vectors $x_{i, j} \in \mathbb{R}^d$ for samples $i \in[n]$ from user $j \in[m]$ are known. Our objective is to infer $\theta$ based on a sequence of private responses $\tilde{y}_{i, j}$.

\begin{theorem}\label{thm:lower_bound}
    For a large enough $n, m$, any private estimator $\widehat{\theta}$ based on samples form the BTL model that satisfies $(\varepsilon, \delta)$-user-level-label DP has the estimation error lower bounded as
    $$
    \mathbb{E}\left[\left\|\widehat{\theta}-\theta^*\right\|\right] \geq \Omega\left(\frac{d}{\sqrt{nm}}+\frac{\sqrt{d}}{\sqrt{m} n(\varepsilon+\delta)}\right).
    $$
\end{theorem}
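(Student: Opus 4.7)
I would establish the lower bound as the sum of two separately-proved components: a statistical term $d/\sqrt{nm}$ (which holds even without any privacy constraint) and a privacy-induced term $\sqrt{d}/(n\sqrt{m}(\varepsilon+\delta))$. Both pieces follow the usual template of reducing parameter estimation to multi-hypothesis testing over a packing of $\Theta_B$, then invoking Fano's or Assouad's inequality (for the statistical part) or a DP-aware variant (for the privacy part). Taking the maximum of the two lower bounds recovers the claimed sum up to a constant factor.

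\textbf{Statistical term.} For the $d/\sqrt{nm}$ lower bound I would mirror the non-private BTL lower bound of \cite{kwise}, adapted to the user-level observation scheme. Build the Assouad packing $\theta_v = \rho v$ for $v \in \{\pm1\}^d$ inside $\Theta_B$ (feasible as long as $\rho\sqrt d \le B$), and choose a fixed design $(x_{i,j})$ so that each coordinate of $\theta^*$ is probed by $\Theta(nm/d)$ informative labels (e.g.\ canonical basis directions distributed uniformly). The per-sample KL between two neighbors in the packing (Hamming distance one) is at most $(x_{i,j}^\top (\theta_v - \theta_{v'}))^2 \le 4\rho^2$ on the informative samples and zero elsewhere, giving a total KL of order $nm\rho^2/d$. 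Tuning $\rho \asymp \sqrt{d/(nm)}$ keeps the mutual information below $\log|V|/2 = d/2$, so Fano/Assouad yields $\mathbb{E}\|\widehat{\theta}-\theta^*\|_2 \gtrsim \rho\sqrt d \asymp d/\sqrt{nm}$.

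\textbf{Privacy term.} For the DP-dependent part I would apply the user-level DP Assouad framework developed in \cite{userleveldp1,userleveldp}. The central tool is a group-privacy coupling bound stating that for any $(\varepsilon,\delta)$-user-level DP mechanism $\mathcal{M}$,
\begin{equation*}
\operatorname{TV}\!\bigl(\mathcal{M}(D_v),\mathcal{M}(D_{v'})\bigr) \;\le\; \bigl(e^{k\varepsilon}-1\bigr) + k\,e^{k\varepsilon}\,\delta,
\end{equation*}
where $k := \mathbb{E}[H(D_v,D_{v'})]$ is the expected user-wise Hamming distance under an optimal coupling of the product measures $P_v^{\otimes n}$ and $P_{v'}^{\otimes n}$. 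In the BTL model each user contributes $m$ i.i.d.\ labels, so the per-user KL between hypotheses differing in one coordinate is of order $m\rho^2/d$ (using $\sigma'\le 1/4$ and the isotropy $\mathbb{E}_x[(x^\top e_j)^2]=1/d$ for well-spread features), whence by Pinsker the per-user TV is of order $\rho\sqrt{m/d}$. Summing across users gives $k \lesssim n\rho\sqrt{m/d}$. Forcing the privacy-weighted TV to be at most a small constant caps $\rho \lesssim \sqrt d/(n\sqrt m(\varepsilon+\delta))$, after which Assouad's coordinate-to-$\ell_2$ amplification yields the advertised lower bound on $\mathbb{E}\|\widehat{\theta}-\theta^*\|_2$.

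\textbf{Main obstacle.} I expect the delicate point to be the correct handling of $\delta$ in the coupling inequality: a naive expansion introduces a cross-term proportional to $n\delta$ that vastly overshoots the claimed $\varepsilon+\delta$ aggregation. The remedy, familiar from the DP-Assouad literature, is to truncate the coupling to a high-probability event on which $H(D_v,D_{v'})$ concentrates near its mean $k$ and absorb the atypical event into an additive $O(\delta)$ slack, using sub-Gaussian or Bernstein-type concentration of $H$ under the product coupling. A secondary subtlety is justifying that the BTL KL is genuinely quadratic (not merely linear) in the parameter gap; this uses strong convexity of the logistic loss on the bounded-score interval $|x^\top\theta|\le LB$ to turn the bound into $\Theta((x^\top(\theta-\theta'))^2)$ rather than a crude $\Theta(|x^\top(\theta-\theta')|)$, which is what makes the $1/d$ gain from isotropic features actually survive into the final rate.
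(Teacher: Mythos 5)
Your overall strategy is the same as the paper's: a hypercube packing $\theta_v=\rho v$, Assouad's lemma with Pinsker plus the elementary bound $\mathrm{kl}(p_a\|p_b)+\mathrm{kl}(p_b\|p_a)\le (a-b)^2$ for the statistical term, and the coupling-based DP variant of Assouad (with expected user-wise Hamming distance $D$ and a $0.9e^{-c\varepsilon D}-cD\delta$ penalty) for the privacy term. The ``main obstacle'' you flag about handling $\delta$ in the coupling is real but is exactly the content of the DP-Assouad lemma the paper imports from prior work, so citing it suffices; and the ``quadratic KL'' point needs no strong convexity --- the symmetrized-KL inequality above already gives the quadratic upper bound, which is all the TV machinery requires.

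The one substantive divergence is in the privacy term, and it creates an internal inconsistency in your write-up. The paper takes a single repeated feature $x$ with $\|x\|_\infty\le 1$, so a one-coordinate flip costs per-user squared TV at most $m\Delta^2$, giving $D\le \sqrt{m}\,n\Delta$ and hence $\rho\lesssim 1/(\sqrt{m}\,n(\varepsilon+\delta))$ and error $\rho\sqrt{d}=\sqrt{d}/(\sqrt{m}\,n(\varepsilon+\delta))$. You instead spread the design so each coordinate is probed by only $m/d$ samples per user, yielding per-user TV $\rho\sqrt{m/d}$ and the cap $\rho\lesssim \sqrt{d}/(n\sqrt{m}(\varepsilon+\delta))$; but then Assouad's $\ell_2$ amplification gives $\rho\sqrt{d}\asymp d/(n\sqrt{m}(\varepsilon+\delta))$, not the ``advertised'' $\sqrt{d}/(n\sqrt{m}(\varepsilon+\delta))$. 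That is not a gap --- a stronger lower bound implies the stated one, and your construction is admissible (it corresponds to an ill-conditioned design with $\kappa\asymp 1/d$, so it does not contradict the upper bounds) --- but you should either own the extra $\sqrt{d}$ or drop the $1/d$ isotropy saving to match the theorem as stated; as written, the cap on $\rho$ and the final rate you claim are mutually inconsistent by a factor of $\sqrt{d}$. You should also double-check the feasibility constraint $\rho\sqrt{d}\le B$ for your choice of $\rho$, which holds only for $n\sqrt{m}(\varepsilon+\delta)\gtrsim d/B$, consistent with the theorem's ``large enough $n,m$'' caveat.
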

\begin{remark}
    The estimation bound in Theorem~\ref{thm:lower_bound} consists of two components: a non-private term and a private term. Compared to the item-level setting in \cite{rewarddp}, both the non-private and the private terms include an additional factor of $1/\sqrt{m}$. In particular, when $m=1$, our results reduce to those in \cite{rewarddp}, demonstrating consistency and generality with their findings.
\end{remark}

\section{Omitted Proofs} 
\subsection{{\bf Proof of Theorem \ref{thm:1}}}
\begin{proof}
First recall from our de-biased loss function ~\ref{eq:rrloss}
\[
\hat{l}_{D,\varepsilon}(\theta) = -\frac{1}{nm} \sum_{i=1}^{n} \sum_{j=1}^{m}\left[ 1(\tilde{y}_{i,j} = 1) \left( \sigma(\varepsilon/m) \log \sigma(\theta^\top x_{i,j}) - (1 - \sigma(\varepsilon/m)) \log (1 - \sigma(\theta^\top x_{i,j})) \right) \right.
\]
\[
\left. + 1(\tilde{y}_{i,j} = 0) \left( \sigma(\varepsilon/m) \log (1 - \sigma(\theta^\top x_{i,j})) - (1 - \sigma(\varepsilon/m)) \log \sigma(\theta^\top x_{i,j}) \right) \right].
\]

The gradient of the loss function is given by
\[
\nabla \hat{l}_{D,\varepsilon}(\theta) = -\frac{1}{nm} \sum_{i=1}^{n}\sum_{j=1}^{m} V_{\theta,i,j}\cdot x_{i,j} = -\frac{1}{nm} X^\top V_\theta,
\]
where
{\small\begin{align*}
V_{\theta,i,j} &= 1(\tilde{y}_{i,j} = 1) \left( \frac{\sigma'(\theta^\top x_{i,j})}{\sigma(\theta^\top x_{i,j})} \sigma\left(\frac{\varepsilon}{m})\right) + \frac{\sigma'(\theta^\top x_{i,j})}{1 - \sigma(\theta^\top x_{i,j})} (1 - \sigma\left(\frac{\varepsilon}{m})\right) \right) \\
&- 1(\tilde{y}_{i,j} = 0) \left( \frac{\sigma'(\theta^\top x_{i,j})}{1 - \sigma(\theta^\top x_{i,j})} \sigma\left(\frac{\varepsilon}{m})\right) + \frac{\sigma'(\theta^\top x_{i,j})}{\sigma(\theta^\top x_{i,j})} (1 - \sigma\left(\frac{\varepsilon}{m})\right)) \right).
\end{align*}}

It holds that
{\small
\[
\mathbb{E}_\theta[V_{\theta,i,j} | x_{i,j}] = \left( \sigma(\theta^\top x_{i,j}) \sigma(\varepsilon/m) + (1 - \sigma(\theta^\top x_{i,j})) (1 - \sigma(\varepsilon/m)) \right) \left( \frac{\sigma'(\theta^\top x_{i,j})}{\sigma(\theta^\top x_{i,j})} \sigma(\varepsilon/m) + \frac{\sigma'(\theta^\top x_{i,j})}{1 - \sigma(\theta^\top x_{i,j})} (1 - \sigma(\varepsilon/m)) \right)
\]
\[
- \left( (1 - \sigma(\theta^\top x_{i,j})) \sigma(\varepsilon/m) + \sigma(\theta^\top x_{i,j}) (1 - \sigma(\varepsilon/m)) \right) \left( \frac{\sigma'(\theta^\top x_{i,j})}{1 - \sigma(\theta^\top x_{i,j})} \sigma(\varepsilon/m) + \frac{\sigma'(\theta^\top x_{i,j})}{\sigma(\theta^\top x_{i,j})} (1 - \sigma(\varepsilon/m)) \right) = 0.
\]
}
Furthermore, we have
\[
|V_{\theta,i,j}|_{\tilde{y}_{i,j}=1} = \frac{\sigma'(\theta^\top x_{i,j})}{\sigma(\theta^\top x_{i,j})} \sigma(\varepsilon/m) + \frac{\sigma'(\theta^\top x_{i,j})}{1 - \sigma(\theta^\top x_{i,j})} (1 - \sigma(\varepsilon/m)),
\]
\[
|V_{\theta,i,j}|_{\tilde{y}_{i,j}=0} = \frac{\sigma'(\theta^\top x_{i,j})}{1 - \sigma(\theta^\top x_{i,j})} \sigma(\varepsilon/m) + \frac{\sigma'(\theta^\top x_{i,j})}{\sigma(\theta^\top x_{i,j})} (1 - \sigma(\varepsilon/m)).
\]
The first derivative of the logistic function $\sigma(\cdot)$ is given by $\sigma'(z) = \sigma(z)(1 - \sigma(z))$, which gives us
\[
|V_{\theta,i,j}|_{\tilde{y}_{i,j}=1} = (1 - \sigma(\theta^\top x_{i,j}))\sigma(\varepsilon/m) + \sigma(\theta^\top x_{i,j})(1 - \sigma(\varepsilon/m)) = \mathbb{P}_\theta[\tilde{y}_{i,j} = 0 | x_{i,j}]
\]
\[
|V_{\theta,i,j}|_{\tilde{y}_{i,j}=0} = \sigma(\theta^\top x_{i,j})\sigma(\varepsilon/m) + (1 - \sigma(\theta^\top x_{i,j}))(1 - \sigma(\varepsilon/m)) = \mathbb{P}_\theta[\tilde{y}_{i,j} = 1 | x_{i,j}].
\]
Therefore, it holds that $V_{\theta,i,j}$ is zero-mean and $\nu = 1$ sub-Gaussian under the conditional distribution $\mathbb{P}_\theta[ \cdot | x_{i,j}]$.

Now the Hessian of the loss function is given by
\[
\nabla^2 \hat{l}_{D,\varepsilon}(\theta) = \frac{1}{nm} \sum_{i=1}^{n} \sum_{j=1}^{m}\left[ 1(\tilde{y}_{i,j} = 1) \left( (1 - \sigma(\varepsilon/m)) \nabla^2 \log (1 - \sigma(\theta^\top x_{i,j})) - \sigma(\varepsilon/m) \nabla^2 \log \sigma(\theta^\top x_{i,j}) \right) \right.
\]
\[
\left. + 1(\tilde{y}_{i,j} = 0) \left( (1 - \sigma(\varepsilon/m)) \nabla^2 \log \sigma(\theta^\top x_{i,j}) - \sigma(\varepsilon/m) \nabla^2 \log (1 - \sigma(\theta^\top x_{i,j})) \right) \right],
\]
where
\[
\nabla^2 \log \sigma(\theta^\top x_{i,j}) = \frac{\sigma''(\theta^\top x_{i,j}) \sigma(\theta^\top x_{i,j}) - \sigma'(\theta^\top x_{i,j})^2}{\sigma(\theta^\top x_{i,j})^2} x_{i,j} x_{i,j}^\top,
\]
\[
\nabla^2 \log (1 - \sigma(\theta^\top x_{i,j})) = -\frac{\sigma''(\theta^\top x_{i,j}) (1 - \sigma(\theta^\top x_{i,j})) + \sigma'(\theta^\top x_{i,j})^2}{(1 - \sigma(\theta^\top x_{i,j}))^2} x_{i,j} x_{i,j}^\top.
\]

Now the second derivative of the logistic function $\sigma(\cdot)$ is given by $\sigma''(z) = \sigma'(z)(1 - 2\sigma(z))$, which gives us
\[
\nabla^2 \log \sigma(\theta^\top x_{i,j}) = \nabla^2 \log (1 - \sigma(\theta^\top x_{i,j})) = -\sigma'(\theta^\top x_{i,j}) x_{i,j} x_{i,j}^\top.
\]
Hence, the Hessian of the loss function takes the form
\[
\nabla^2 \hat{l}_{D,\varepsilon}(\theta) = \frac{1}{nm} \sum_{i=1}^{n}\sum_{j=1}^{m} \left[ 1(\tilde{y}_{i,j} = 1)(2\sigma(\varepsilon/m) - 1)\sigma'(\theta^\top x_{i,j}) + 1(\tilde{y}_{i,j} = 0)(2\sigma(\varepsilon/m) - 1)\sigma'(\theta^\top x_{i,j}) \right] x_{i,j} x_{i,j}^\top.
\]

Now, under Assumption~\ref{ass:1}, observe that $\sigma'(\theta^\top x_i) \geq \gamma$ for all $\theta \in \Theta_B$, where $\gamma = \frac{1}{2 + \exp(-2LB) + \exp(2LB)}$. This implies that $\hat{l}_{D,\varepsilon}$ is $\gamma_\varepsilon := \gamma(2\sigma(\varepsilon/m) - 1)$ strongly convex in $\Theta_B$ for all $\varepsilon > 0$ with respect to the semi-norm .
Since $\theta^* \in \Theta_B$, introducing the error vector $\Delta = \hat{\theta}_{RR} - \theta^*$, we conclude that
\[
\gamma_\varepsilon \|\Delta\|_{\Sigma_D}^2 \leqslant \left\| \nabla \hat{l}_{D,\varepsilon}(\theta^*) \right\|_{\Sigma_D^{-1}}^2 \|\Delta\|_{\Sigma_D }
\]
Introducing $M = \frac{1}{n^2} X \Sigma_D ^{-1} X^\top$, we now have $\left\| \nabla l_{D,\varepsilon}(\theta^*) \right\|_{\Sigma_D ^{-1}}^2 = V_{\theta^*}^\top M V_{\theta^*}$. Then, the Bernstein's inequality for sub-Gaussian random variables in quadratic form (see e.g. Theorem 2.1 in \cite{hsu2012tail}) implies that with probability at least $1 - \alpha$,
\[
\left\| \nabla \hat{l}_{D,\varepsilon}(\theta^*) \right\|_{\Sigma_D ^{-1}}^2 = V_{\theta^*}^\top M V_{\theta^*} \leqslant \upsilon^2 \left( \text{tr}(M) + 2\sqrt{\text{tr}(M^\top M) \log(1/\alpha)} + 2 \|M\| \log(1/\alpha) \right)
\]
\[
\leqslant C_1 \cdot \upsilon^2 \cdot \frac{d + \log(1/\alpha)}{nm}
\]
For some $C_1 > 0$. This gives us
\[
\gamma_\varepsilon \|\Delta\|_{\Sigma_D + } \leqslant \left\| \nabla \hat{l}_{D,\varepsilon}(\theta^*) \right\|_{\Sigma_D ^{-1}} \leqslant \sqrt{C_1 \cdot \upsilon^2 \cdot \frac{d + \log(1/\alpha)}{nm}}  .
\]

Solving for the above inequality, we get
\[
\|\Delta\|_{\Sigma_D} \leqslant C_2 \cdot \sqrt{\frac{\upsilon^2}{\gamma_\varepsilon^2} \cdot \frac{d + \log(1/\alpha)}{nm} }
\]
for some constant $C_2 > 0$. Now note that $\frac{\upsilon}{\gamma_\varepsilon} = \frac{1}{\gamma} \cdot \frac{e^{\varepsilon/m} + 1}{e^{\varepsilon/m} - 1}$. Hence, we get
\[
\left\| \hat{\theta}_{RR} - \theta^* \right\|_{\Sigma_D } \leqslant \frac{C}{\gamma} \cdot \frac{e^{\varepsilon/m} + 1}{e^{\varepsilon/m} - 1} \sqrt{\frac{d + \log(1/\alpha)}{nm}} ,
\]
Equivalently,
\[
\left\| \hat{\theta}_{RR} - \theta^* \right\| \leqslant \frac{C}{\gamma\sqrt{\lambda_{\min}(\Sigma_{\mathcal{D}})}} \cdot \frac{e^{\varepsilon/m} + 1}{e^{\varepsilon/m} - 1} \sqrt{\frac{d + \log(1/\alpha)}{nm}} ,
\]
for some $C > 0$, which holds for any $\varepsilon \in (0, \infty)$. This completes our proof.
\end{proof}
\subsection{{\bf Proof of Theorem \ref{thm:2}}}
\begin{proof}
As the subsets  $D_i$ are disjoint. It is sufficient to show Algorithm~\ref{alg:aup-SGD} is $(\varepsilon,\delta)$ user-level (label) DP.

To prove the privacy guarantee, note that there are two components related to privacy in each iteration of Algorithm \ref{alg:aup-SGD}, the AboveThreshold (step 9) and private mean oracle (step 10-15). For the AboveThreshold, as the query $s_t^c$. Thus, by~\cite{dwork2014algorithmic}, we can see it will be $\frac{\varepsilon}{2}$-DP for $T$ iterations in total. Thus, it is sufficient for us to show steps 10-15 are $(\frac{\varepsilon}{2},\delta)$-user-level (label) DP. 

 Since it consists of $T$-folds,  we first recall the advanced composition theorem \cite{dwork2014algorithmic}. 

    \begin{lemma}[Advanced Composition Theorem \cite{dwork2014algorithmic}]\label{lemma:adv}
Given target privacy parameters $0<\varepsilon <1$ and $0<\delta<1$, to ensure $(\varepsilon, T\delta'+\delta)$-DP over $T$ mechanisms, it suffices that each mechanism is $(\varepsilon',\delta')$-DP, where $\varepsilon'=\frac{\varepsilon}{2\sqrt{2T\ln(2/\delta)}}$ and $\delta'=\frac{\delta}{T}$.  
\end{lemma}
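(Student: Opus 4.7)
The plan is to follow the classical martingale/KL-divergence proof of advanced composition due to Dwork, Rothblum, and Vadhan. Consider any two neighboring datasets $D, D'$ and an adversary that adaptively selects mechanisms $M_1, \dots, M_T$, each of which is $(\varepsilon', \delta')$-DP. Let $O = (O_1, \dots, O_T)$ be the full transcript under $D$, and define the privacy loss random variable at step $t$ by
\[
L_t \;=\; \ln \frac{\Pr[M_t(D, O_{<t}) = O_t]}{\Pr[M_t(D', O_{<t}) = O_t]}.
\]
The goal reduces to showing that $S_T := \sum_{t=1}^{T} L_t \leq \varepsilon$ with probability at least $1 - (T\delta' + \delta)$ under the distribution induced by the composed mechanism on $D$, since this is essentially equivalent to the claimed $(\varepsilon, T\delta' + \delta)$-DP by the standard privacy-loss/DP equivalence.

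Next I would reduce to the pure-DP case via the well-known ``dense model'' lemma: any $(\varepsilon', \delta')$-DP mechanism can be written, up to statistical distance $\delta'$, as a mixture in which a component of mass $1 - \delta'$ satisfies pure $\varepsilon'$-DP against neighboring inputs. A union bound over the $T$ steps absorbs a total of $T\delta'$ into the final $\delta$-type failure probability, and on the complementary ``good'' event we may assume $|L_t| \leq \varepsilon'$ almost surely. Under this pure-DP regime, a short calculation expanding $L_t$ as $\ln(1 + (p - q)/q)$ and using $p/q, q/p \leq e^{\varepsilon'}$ yields the conditional-expectation bound
\[
\mathbb{E}[L_t \mid O_{<t}] \;\leq\; \varepsilon'(e^{\varepsilon'} - 1),
\]
so $\{L_t - \mathbb{E}[L_t \mid O_{<t}]\}_{t \leq T}$ is a bounded martingale difference sequence with increments in $[-2\varepsilon', 2\varepsilon']$ (after symmetrization).

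Then I would apply Azuma-Hoeffding to the martingale: with probability at least $1 - \delta$,
\[
S_T \;\leq\; T\varepsilon'(e^{\varepsilon'} - 1) \;+\; \varepsilon'\sqrt{2T \ln(2/\delta)}.
\]
Substituting $\varepsilon' = \frac{\varepsilon}{2\sqrt{2T \ln(2/\delta)}}$ makes the stochastic term equal to exactly $\varepsilon/2$; for $\varepsilon \leq 1$, a Taylor estimate gives $e^{\varepsilon'} - 1 \leq 2\varepsilon'$, so the deterministic drift is bounded by $2T\varepsilon'^2 = \frac{\varepsilon^2}{4\ln(2/\delta)} \leq \varepsilon/2$. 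Adding the two contributions yields $S_T \leq \varepsilon$ on the good event, and combining the $\delta$-failure of Azuma with the $T\delta'$ mass lost to the dense-model reduction gives the advertised $(\varepsilon, T\delta' + \delta)$-DP guarantee.

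The main obstacle I anticipate is the bookkeeping in the approximate-DP reduction: one has to handle both directions (bounding the loss under $D$ and under $D'$) while guaranteeing that the ``bad'' set on which the pure-DP substitution fails carries mass at most $\delta'$ per step, and that this accounting composes cleanly into the additive $T\delta'$ rather than something larger. Getting the precise constant $2$ inside $\ln(2/\delta)$ and inside the denominator of $\varepsilon'$ requires carefully splitting the failure budget between Azuma ($\delta/2$ or analogous) and the dense-model union bound, and verifying that $e^{\varepsilon'} - 1 \leq 2\varepsilon'$ is the correct Taylor regime under the stated $\varepsilon < 1$ constraint; the rest is mechanical algebra.
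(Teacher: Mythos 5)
Your proposal is correct in substance, but note that the paper does not actually prove this lemma at all: it is imported verbatim (as a slight restatement of Corollary~3.21) from Dwork and Roth's monograph and used as a black box inside the proof of Theorem~2, so there is no in-paper argument to compare against. What you have written is a faithful sketch of the standard Dwork--Rothblum--Vadhan proof, and the arithmetic with the paper's specific constants checks out: with $\varepsilon'=\frac{\varepsilon}{2\sqrt{2T\ln(2/\delta)}}$ the Azuma term is exactly $\varepsilon/2$, and the drift $T\varepsilon'(e^{\varepsilon'}-1)\leq 2T\varepsilon'^2=\frac{\varepsilon^2}{4\ln(2/\delta)}\leq\frac{\varepsilon}{4\ln 2}<\varepsilon/2$ for $\varepsilon<1$, so the total is at most $\varepsilon$. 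Two small points of looseness: (i) your statement that the centered increments lie in $[-2\varepsilon',2\varepsilon']$ would, if fed naively into Azuma with sub-Gaussian parameter $2\varepsilon'$, give a tail of $2\varepsilon'\sqrt{2T\ln(1/\delta)}$ rather than the $\varepsilon'\sqrt{2T\ln(2/\delta)}$ you write; the correct accounting is that $|L_t|\leq\varepsilon'$ has range of width $2\varepsilon'$, hence sub-Gaussian parameter $\varepsilon'$ by Hoeffding's lemma, which is what your displayed bound implicitly uses. (ii) The reduction from $(\varepsilon',\delta')$-DP to pure DP (your ``dense model'' step) is indeed the delicate part for \emph{adaptive} composition --- the per-step decomposition must be applied conditionally on the transcript so far, and the $T\delta'$ union bound must be taken over the induced bad events --- but this is exactly Lemma~3.17 plus the argument in the proof of Theorem~3.20 of Dwork--Roth, so your plan goes through. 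Since the lemma is a textbook result, reproving it is not necessary for the paper, but your reconstruction is sound.
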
 
Thus, we have to show each iteration satisfies $(\varepsilon',\delta')$-DP, where $\varepsilon'=\frac{\varepsilon}{4\sqrt{2T\ln(2/\delta)}}$ and $\delta'=\frac{\delta}{T}$. 
We then recall the following privacy amplification via Poison subsampling. 
\end{proof}
\begin{lemma}[\citep{bassily2014private,beimel2010bounds}]
Let $A$ be an $(\varepsilon, \delta)$-DP algorithm. Now we construct the algorithm $B$ as follows: On input $D=\{x_1, \cdots, x_n\}$, first we construct a new sub-sampled dataset $D_S$ where each $x_i\in D_s$ with probability $q$.  Then we run algorithm $A$ on the dataset $D_S$. Then $B(D)=A(D_S)$ is $(\tilde{\varepsilon}, \tilde{\delta})$-DP, where $\tilde{\varepsilon}=O(q\varepsilon)$ and $\tilde{\delta}=q\delta$. 
\end{lemma}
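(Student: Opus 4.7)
The plan is to use the classical ``secrecy of the sample'' argument by conditioning on whether the differing record is included in the subsample. Let $D$ and $D'$ be neighboring datasets differing in a single record, say $x_i$ versus $x_i'$, and let $E$ be the event that this differing index is included in the Poisson subsample $D_S$. Since each element is included independently with probability $q$, we have $\Pr[E] = q$.

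First, conditioned on $\bar{E}$, the distributions of $D_S$ (drawn from $D$) and $D_S'$ (drawn from $D'$) are identical, since both are determined by the same independent Bernoulli$(q)$ draws over the remaining $n-1$ shared records. Hence the conditional output distributions $A(D_S)\mid\bar{E}$ and $A(D_S')\mid\bar{E}$ coincide and contribute no privacy loss. Conditioned on $E$, I would couple the two processes so that $D_S\setminus\{x_i\} = D_S'\setminus\{x_i'\}$; then $D_S$ and $D_S'$ differ in exactly one record, so the $(\varepsilon,\delta)$-DP guarantee of $A$ applies directly, giving $\Pr[A(D_S)\in O\mid E] \leq e^\varepsilon \Pr[A(D_S')\in O\mid E] + \delta$ for every measurable output event $O$.

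Second, combining these via the law of total probability, with $p_0 := \Pr[A(D_S)\in O\mid\bar{E}] = \Pr[A(D_S')\in O\mid\bar{E}]$ and $p_1' := \Pr[A(D_S')\in O\mid E]$, we obtain $\Pr[B(D)\in O] \leq (1-q)p_0 + q(e^\varepsilon p_1' + \delta)$, while $\Pr[B(D')\in O] = (1-q)p_0 + qp_1'$. The remaining algebraic task is to verify $(1-q)p_0 + qe^\varepsilon p_1' \leq e^{\tilde{\varepsilon}}[(1-q)p_0 + qp_1']$ for all $p_0,p_1' \in [0,1]$, which reduces to $e^{\tilde{\varepsilon}} \geq 1 + q(e^\varepsilon - 1)$. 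Taking $\tilde{\varepsilon} = \log(1 + q(e^\varepsilon-1))$ and using $\log(1+x) \leq x$ together with $e^\varepsilon - 1 \leq 2\varepsilon$ for $\varepsilon \leq 1$ yields $\tilde{\varepsilon} = O(q\varepsilon)$, while the additive term gives $\tilde{\delta} = q\delta$.

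The main technical point is justifying the coupling in the first step: one must verify that conditioning on $E$ does not entangle the remaining coordinates of the subsample. This works precisely because Poisson subsampling is index-independent, so conditioning on inclusion of index $i$ leaves the marginal distribution over the other indices intact; any other sampling scheme (e.g.\ fixed-size sampling without replacement) would demand a slightly more delicate argument with different constants. I expect no substantial obstacle beyond setting up this case analysis correctly, as the final inequality manipulation is routine.
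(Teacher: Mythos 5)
The paper does not actually prove this lemma; it is cited from prior work (\citealt{bassily2014private,beimel2010bounds}), so your attempt is being measured against the standard ``secrecy of the sample'' argument rather than anything in the paper. Your setup — conditioning on the inclusion event $E$, noting the $\bar{E}$ branches coincide, and coupling the $E$ branches so that $D_S$ and $D_S'$ differ in one record — is the right skeleton. The problem is in the step you dismiss as routine algebra. The inequality $(1-q)p_0 + qe^{\varepsilon}p_1' \leq e^{\tilde{\varepsilon}}\left[(1-q)p_0 + qp_1'\right]$ does \emph{not} reduce to $e^{\tilde{\varepsilon}} \geq 1+q(e^{\varepsilon}-1)$ if it must hold for all $p_0,p_1'\in[0,1]$: take $p_0=0$, $p_1'=1$, and it forces $e^{\tilde{\varepsilon}} \geq e^{\varepsilon}$, i.e.\ no amplification at all. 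So the argument as written cannot produce $\tilde{\varepsilon}=O(q\varepsilon)$.

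The missing ingredient is a second application of the DP guarantee of $A$ that ties the in-sample branch to the out-of-sample branch. Writing $C=\Pr[A(D_S)\in O\mid E]$, $C'=\Pr[A(D_S')\in O\mid E]$, $B_0=\Pr[A(D_S)\in O\mid \bar{E}]$, one needs \emph{both} $C\leq e^{\varepsilon}C'+\delta$ (your coupling) \emph{and} $C\leq e^{\varepsilon}B_0+\delta$, the latter because, conditioned on $E$, the subsample $D_S$ is an add-one neighbor of $D_S\setminus\{x_i\}$, whose law is that of the subsample conditioned on $\bar{E}$. Then $C\leq e^{\varepsilon}\bigl(\lambda C'+(1-\lambda)B_0\bigr)+\delta$ for every $\lambda\in[0,1]$, and choosing $\lambda = 1-(1-q)(1-e^{-\varepsilon})$ makes the coefficients of $C'$ and $B_0$ match exactly those of $e^{\tilde{\varepsilon}}\bigl(qC'+(1-q)B_0\bigr)$ with $e^{\tilde{\varepsilon}}=1+q(e^{\varepsilon}-1)$, after which your $\log(1+x)\leq x$ and $e^{\varepsilon}-1\leq 2\varepsilon$ estimates give $\tilde{\varepsilon}=O(q\varepsilon)$ and the additive term $q\delta$. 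Note this second bound implicitly requires $A$ to be private under add/remove adjacency (or a short reduction to it), which is worth stating since Poisson subsamples have variable size.
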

Note that when there is no subsampling, i.e., the batch size is $n$. Then when $\sigma=O(\frac{\varepsilon}{\sqrt{T\ln (1/\delta)}})$, it will be the private mean oracle as in  \cite{userleveldp}. Thus, it will be $(\varepsilon',\delta')$-user-level (label) DP and the $T$-fold composition is $(\varepsilon,\delta)$-user-level (label) DP. 

Thus, when there is Poison subsampling with the probability $\frac{\tilde{n}}{n}$, the whole algorithm (step 10-15) will be $(O(\frac{\tilde{n}}{n}\varepsilon), \frac{\tilde{n}}{n}\delta)$-user-level (label) DP. That is when we take $\sigma=\tilde{O}(\frac{n}{\tilde{n}}  \frac{\varepsilon}{\sqrt{T\ln (1/\delta)}})$, it will be $(\varepsilon',\delta')$-user-level (label)DP and the $T$-fold composition is $(\varepsilon,\delta)$-user-level (label) DP.

\subsection{{\bf Proof of Theorem~\ref{thm:3}}}
We begin with the definition of Lipschitz continues and strongly convexity.
\begin{definition}
    \textbf{(Lipschitz Continuity)} A function \( f : \mathbb{R}^n \to \mathbb{R} \) is said to be Lipschitz continuous with constant \( L \) if there exists a constant \( L \geq 0 \) such that for all \( x, y \in \mathbb{R}^n \),
\[
\| f(x) - f(y) \| \leq L \| x - y \|
\]
where \( \| \cdot \| \) denotes the norm in \( \mathbb{R}^n \) (typically the Euclidean norm).
\end{definition}
\begin{definition}
\textbf{(Strong Convexity)} A function \( f : \mathbb{R}^n \to \mathbb{R} \) is said to be strongly convex with parameter \( \mu > 0 \) if there exists a constant \( \mu > 0 \) such that for all \( x, y \in \mathbb{R}^n \), we have:
\[
f(y) \geq f(x) + \nabla f(x)^\top (y - x) + \frac{\mu}{2} \| y - x \|^2
\]
where \( \mu \) is the strong convexity parameter.
\end{definition}
We can see when we use the full batch in Algorithm \ref{alg:aup-SGD}, i.e.,  \(\tilde{n}_i=n_i\), based on our parameters, Algorithm \ref{alg:aup-rlhf} will be the same as the method in \cite{userleveldp} for DP-SCO with strongly convex loss in the user-level DP setting.  Specifically, we have the following result by Theorem 4.11 in \cite{userleveldp} 

\begin{lemma}[Derived by Theorem 4.11 in \cite{userleveldp}]\label{lemma:2} 
 For  \(0<\varepsilon<10, 0<\delta<1\),  if the  population loss function 
 \[
l_{\mathcal{P}, \varepsilon}(\theta) = \mathbb{E}_{s \sim \rho(\cdot),\,(a^{0}, a^{1}) \sim \mu(\cdot \mid s)} \mathbb{I} \{ y  = 1 \}  \log \sigma(\theta^T x )  + \mathbb{I}\{y  = 0 \}  \log(1 -  \sigma(\theta^T x )) 
\]
is  \(\kappa \gamma\)-strongly convex  and $4L$-Lipschitz continuous, under the same parameters in Theorem \ref{thm:3}, Algorithm \ref{alg:aup-rlhf} outputs  $\widehat{\theta}$  such that
\[
\mathbb{E}\left[L_P(\widehat{\theta})-\min _{\theta^{\star} \in \Theta} L_P\left(\theta^{\star}\right)\right] \leq O\left(\frac{L^{2}}{\kappa \gamma } \cdot\left(\frac{1}{n m}+\frac{d \log ^{2}(n d m / \delta)}{n^{2} m \varepsilon^{2}}\right)\right)
\]
\end{lemma}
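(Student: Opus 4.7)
The plan is to establish the excess-risk bound by instantiating the user-level DP-SCO framework of \cite{userleveldp} (specifically its Theorem 4.11) on the BTL-induced loss, taking advantage of the fact that Algorithm~\ref{alg:aup-rlhf}'s phased structure is exactly tailored to that template. The argument decomposes into (i) verifying the structural conditions on the loss, (ii) analyzing a single call to AdapUserPriv-SGD on one phase $D_i$, and (iii) chaining across the $k = \lceil \log \log mn \rceil$ phases via strong convexity.

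For step (i), I would check that the population loss $l_{\mathcal{P},\varepsilon}$ is $4L$-Lipschitz and $\kappa\gamma$-strongly convex on $\Theta_B$. Lipschitzness follows from Assumption~\ref{ass:1}(b), since each per-sample gradient has the form $(\sigma(x^\top\theta) - y)x$ with $\|x\| \leq 2L$; strong convexity combines the Hessian computation already carried out in the proof of Theorem~\ref{thm:1} (which lower-bounds $\nabla^2\ell$ by $\gamma\, xx^\top$ throughout $\Theta_B$) with the covariance bound $\mathbb{E}[xx^\top] \succeq \kappa I$ supplied by Assumption~\ref{ass:2}.

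For step (ii), the ingredients to recover are: a concentration bound showing each user-averaged gradient $g_{t,i} = \tfrac{1}{m}\sum_j \nabla \ell(\theta_t; z_{i,j})$ lies within $\tau_i = \widetilde{O}(L/\sqrt{m})$ of the population gradient with overwhelming probability (by a vector Bernstein inequality over $m$ bounded terms); that the AboveThreshold gate therefore accepts with high probability at every iteration; that the smoothed outlier-removal rule yields an aggregate $\widehat{g_t}$ with $\ell_2$-sensitivity $O(\tau_i/\tilde{n}_i)$ per user, so that the Gaussian noise calibrated by PrivacyAccount (Theorem~\ref{thm:2}) suffices; and that the total noisy-gradient error has squared norm $\widetilde{O}(L^2/m + L^2 d \log^2 / (m \tilde{n}_i^2 \varepsilon^2))$. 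Feeding this into the classical projected-SGD convergence analysis for $\kappa\gamma$-strongly convex, $4L$-Lipschitz objectives with Polyak--Ruppert averaging, the prescribed $\eta_i$ and $T_i$ balance the bias term $\|\theta_{i-1} - \theta^*\|^2/(\eta_i T_i)$ against $\eta_i\cdot(\text{variance})$ to produce a per-phase excess-risk bound of order $\widetilde{O}\bigl(L^2/(\kappa\gamma) \cdot (1/(n_i m) + d/(n_i^2 m \varepsilon^2))\bigr)$ plus a contribution proportional to the initial squared distance $\|\theta_{i-1}-\theta^*\|^2$.

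For step (iii), I would chain phases by using the strong-convexity identity $\|\theta_i - \theta^*\|^2 \leq 2(L_P(\theta_i) - L_P(\theta^*))/(\kappa\gamma)$ to recycle the excess risk of phase $i$ as the initial-distance input to phase $i+1$. Since $n_i = n/2^{k+1-i}$ doubles across phases, the initial-distance contribution contracts geometrically while the statistical and privacy terms stay at the target rate; after $k = \lceil \log \log mn \rceil$ phases the residual initial-distance term is reduced to a polylogarithmic factor and absorbed, leaving the claimed bound $O\bigl(L^2/(\kappa\gamma)\,(1/(nm) + d \log^2(ndm/\delta)/(n^2 m \varepsilon^2))\bigr)$. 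The main obstacle is the coupled privacy-and-utility analysis of the adaptive outlier-removal oracle: one must simultaneously show that the smoothed retention probabilities make $\widehat{g_t}$ user-Lipschitz in $O(\tau_i/\tilde{n}_i)$ rather than the worst-case $O(L/\tilde{n}_i)$, that the AboveThreshold sparse-vector test almost never halts along the trajectory, and that the cumulative privacy loss across $T_i$ adaptive rounds with Poisson subsampling composes only as $\widetilde{O}(\sqrt{T_i \log(1/\delta)}/\tilde{n}_i)$ via a moments-accountant argument---all while preserving the correct polylogarithmic factors so that the final bound matches the statement.
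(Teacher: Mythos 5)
Your proposal is mathematically reasonable, but it takes a very different and much heavier route than the paper. The paper's entire proof of this lemma is a reduction: it observes that with the full-batch choice $\tilde{n}_i = n_i$ and the prescribed $\tau_i, \eta_i, T_i$, Algorithm~\ref{alg:aup-rlhf} coincides exactly with the user-level DP-SCO algorithm of \cite{userleveldp} for strongly convex losses, and then invokes their Theorem 4.11 as a black box --- there is no per-phase SGD analysis, no concentration argument, and no privacy-utility analysis of the outlier-removal oracle in this paper at all. What you sketch is essentially a reconstruction of the proof of that cited theorem (gradient concentration at scale $L/\sqrt{m}$, sensitivity $O(\tau/\tilde{n})$ after smoothed outlier removal, noisy-SGD convergence, and geometric contraction across the doubling phases), which is the right skeleton but leaves the genuinely hard steps --- the user-Lipschitzness of the adaptive aggregation rule and the AboveThreshold non-halting guarantee --- as acknowledged obstacles rather than completed arguments. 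Two smaller points: first, your step (i) verifies the $4L$-Lipschitzness and $\kappa\gamma$-strong convexity of the population loss, but those are \emph{hypotheses} of this lemma, not part of its conclusion (the paper establishes them separately in a subsequent lemma); second, your closing appeal to Poisson-subsampling amplification is not actually operative here, since the lemma is applied with $\tilde{n}_i = n_i$, i.e.\ no subsampling --- the subsampled noise calibration belongs to Theorem~\ref{thm:2}, not to this utility bound. In short, your route would yield a self-contained proof if completed, while the paper's buys brevity at the cost of depending entirely on the external theorem and the exact match of algorithm and parameters.
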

Thus, it is sufficient for us to show that  population loss function is  \(\kappa \gamma\)-strongly convex and $4L$-Lipschitz continuous under Assumption~\ref{ass:1} and ~\ref{ass:2}. 

\textbf{Proof of Theorem~\ref{thm:3}} 
By strongly convexity, we have 
\[
\mathbb{E} \|\hat{\theta}_k - \theta^{*}\|^2 \leq \frac{2}{\kappa \gamma} \mathbb{E}\left[\widehat{l}_{\mathcal{P}, \varepsilon}(\hat{\theta}_k)-\widehat{l}_{\mathcal{P}, \varepsilon}\left(\theta^{*}\right)\right]
\]
\[
\leq O\left(\frac{L^2}{\kappa^2 \gamma^2} \cdot\left(\frac{1}{n m}+\frac{d \log ^{2}(n d m / \delta)}{n^{2} m \varepsilon^{2}}\right)\right)
\]

The second inequality comes from Lemma~\ref{lemma:2}.

\begin{lemma}\label{lemma:1}

Under Assumption~\ref{ass:1} and ~\ref{ass:2}, the loss function
\[
l_{\mathcal{P}, \varepsilon}(\theta) = \mathbb{E}_{s \sim \rho(\cdot),\,(a^{0}, a^{1}) \sim \mu(\cdot \mid s)} \mathbb{I} \{ y  = 1 \}  \log \sigma(\theta^T x )  + \mathbb{I}\{y  = 0 \}  \log(1 -  \sigma(\theta^T x )) 
\]
is \( \kappa\gamma \)-strongly convex and $4L$-Lipschitz continuous.
\end{lemma}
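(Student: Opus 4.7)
The plan is to reduce the claim to a direct calculus computation on the logistic log-likelihood, then read off both constants from Assumptions~\ref{ass:1} and~\ref{ass:2}. I would start by computing, for a fixed observation $z = (s, a^0, a^1, y)$ with differential feature $x = \phi(s,a^1)-\phi(s,a^0)$, the gradient and Hessian with respect to $\theta$ of the per-sample negative log-likelihood. Using $\sigma'(z)=\sigma(z)(1-\sigma(z))$, the per-sample gradient collapses to $(\sigma(\theta^\top x)-\mathbbm{1}\{y=1\})\,x$ and the per-sample Hessian to $\sigma'(\theta^\top x)\,xx^\top$, which is crucially independent of $y$. Since all relevant integrands are uniformly bounded on $\Theta_B$, differentiation commutes with expectation, so the population gradient and Hessian are just the expectations of these expressions.

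For the Lipschitz bound, I would estimate the per-sample gradient by $|\sigma(\theta^\top x)-\mathbbm{1}\{y=1\}|\cdot\|x\|\le 1\cdot\|x\|$, and then invoke Assumption~\ref{ass:1}(b) together with the triangle inequality to get $\|x\|\le\|\phi(s,a^1)\|+\|\phi(s,a^0)\|\le 2L$. This pointwise bound survives the outer expectation, yielding $\|\nabla l_{\mathcal{P},\varepsilon}(\theta)\|\le 2L\le 4L$ for every $\theta\in\Theta_B$, which is exactly the Lipschitz claim (the constant $4L$ in the statement is a loose bookkeeping choice that does not affect the downstream use in Lemma~\ref{lemma:2}).

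For the strong-convexity bound, I would first note that on $\Theta_B$ Cauchy--Schwarz gives $|\theta^\top x|\le\|\theta\|\|x\|\le 2LB$, whence
\[
\sigma'(\theta^\top x)=\frac{1}{2+e^{\theta^\top x}+e^{-\theta^\top x}}\ge\frac{1}{2+e^{-2LB}+e^{2LB}}=\gamma,
\]
since $z\mapsto e^{z}+e^{-z}$ is even and maximized on $[-2LB,2LB]$ at the endpoints; this is precisely the same lower bound on $\sigma'$ that already appears in the proof of Theorem~\ref{thm:1}. Consequently,
\[
\nabla^2 l_{\mathcal{P},\varepsilon}(\theta)=\mathbb{E}\bigl[\sigma'(\theta^\top x)\,xx^\top\bigr]\succeq \gamma\,\mathbb{E}[xx^\top]=\gamma\Sigma\succeq\gamma\kappa\,I,
\]
where the last inequality is exactly Assumption~\ref{ass:2}. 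This gives strong convexity with parameter $\kappa\gamma$ on $\Theta_B$.

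There is no real obstacle: the proof is essentially two calculus lines plus the two assumptions. The only delicate step is ensuring that the bound $\sigma'(\theta^\top x)\ge\gamma$ is applied uniformly over $\theta\in\Theta_B$ and over the support of $x$, which is exactly why Assumption~\ref{ass:1}(a) restricts $\theta$ to a ball of radius $B$ and Assumption~\ref{ass:1}(b) restricts $\phi$ by $L$; together they force $|\theta^\top x|\le 2LB$ before the expectation is taken, so the lower bound on the Hessian integrand is deterministic and the coverage assumption can be applied directly.
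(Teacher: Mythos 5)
Your proposal is correct and follows essentially the same route as the paper's proof: compute the population gradient and Hessian, bound the scalar coefficient of $x$ in the gradient by $1$ and $\|x\|$ by a multiple of $L$ to get the Lipschitz constant, and lower-bound $\sigma'(\theta^\top x)$ by $\gamma$ uniformly over $\Theta_B$ before invoking the coverage assumption $\Sigma \succeq \kappa I$ for strong convexity. The only (cosmetic) difference is that you write the per-sample gradient in the compact form $(\sigma(\theta^\top x)-\mathbbm{1}\{y=1\})x$ and explicitly note that the sharp gradient bound is $2L$ rather than $4L$, whereas the paper keeps the two indicator cases separate and quotes the looser constant; neither affects the result.
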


\begin{proof}[\textbf{Proof of Lemma~\ref{lemma:1}}]

\textbf{Lipschitz continuous}\\

The gradient of the loss function is given by
\[
\nabla l_{\mathcal{P}, \varepsilon}(\theta) =  \mathbb{E}_{s \sim \rho(\cdot),\,(a^{0}, a^{1}) \sim \mu(\cdot \mid s)} \left[V_{\theta}x\right]
\]

where
\[
V_{\theta} = \mathbb{I}\{y  = 1\} \frac{\sigma'(\theta^T x )}{\sigma(\theta^T x )}  -  \mathbb{I}\{y  = 0\} \frac{\sigma'(\theta^T x )}{1 - \sigma(\theta^T x )}.
\]

Furthermore, we have
\[
\lvert V_{\theta} \rvert_{y = 1} = \frac{\sigma'(\theta^T x )}{\sigma(\theta^T x )}  = 1 - \sigma(\theta^T x ) \leq 1.
\]
\[
\lvert V_{\theta} \rvert_{y = 0} \leq 1.
\]

Therefore, by Assumption~\ref{ass:1}, it holds that 
\[
|\nabla l_{\mathcal{P}, \varepsilon}(\theta)| \leq 
 \lvert V_{\theta} \rvert \cdot |x | \leq 4 L
\]
This means \(l_{ \mathcal{P},\varepsilon}(\theta)\) is 4L - Lipschitz continuous\\
\textbf{Strongly convex}\\
Now, the Hessian of the loss function is given by

\[
\nabla^2 l_{\mathcal{P}, \varepsilon}(\theta) =  \mathbb{E}_{s \sim \rho(\cdot),\,(a^{0}, a^{1}) \sim \mu(\cdot \mid s)} \left[ \mathbb{I}\{y = 1\}  \sigma'(\theta^T x) + \mathbb{I}\{y = 0\}  \sigma'(\theta^T x) \right] x x^T.
\]

we observe that \( \sigma'(\theta^T x) \geq \gamma \) for all \(\theta \in \Theta_{B}\), where
\[
\gamma = \frac{1}{2 + \exp(-2 L B) + \exp(2 L B)}.
\]

by Assumption~\ref{ass:2},
\[
v^T \nabla^2l_{\mathcal{P}, \varepsilon}(\theta) v \geq  \kappa \gamma \|v\|_{2}^2.
\]

This implies that \(l_{\mathcal{P}, \varepsilon}\) is \(\kappa \gamma\)-strongly convex in \(\Theta_{B}\).

\end{proof}

\subsection{{\bf Proof of Theorem~\ref{thm:4}}}
We follow the proof of Theorem~\ref{thm:3}. First, we demonstrate that the objective loss function \( \mathcal{L}_{\mathcal{P}}(\theta, \pi) \) is \( \kappa\gamma \)-strongly convex and \( 2LK^2 \)-Lipschitz continuous. Then, with the help of the lemma~\ref{lemma:2}, the proof is completed.
\begin{lemma}~\label{lemma:5}
We define \(\mathcal{L}_{\mathcal{P}}(\theta , \pi ) = \mathbb{E}_{s \sim \rho(\cdot),\, (a_1, \ldots, a_K) \sim \mu(\cdot \mid s)} \ell(\theta ; s, \pi) \). 
 Under Assumption~\ref{ass:1} and ~\ref{ass:2} , \(\mathcal{L}_{\mathcal{P}}(\theta , \pi)\) is \(\kappa\gamma\)-strongly convex and \(2LK^2\)- Lipschitz continuous, where \(\gamma = \exp(-4LB)/2\). 
\end{lemma}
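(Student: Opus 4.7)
The plan is to mirror the argument for Lemma~\ref{lemma:1}, adapted to the Plackett--Luce likelihood, by computing the gradient and Hessian of $\ell(\theta;s,\pi)$ in closed form and then bounding them uniformly over $\theta \in \Theta_B$. Write $\phi_j := \phi(s, a_{\pi(j)})$ and decompose $\ell(\theta;s,\pi) = \sum_{j=1}^K f_j(\theta)$ with $f_j(\theta) = \log\bigl(\sum_{k=j}^K \exp(\phi_k^\top \theta)\bigr) - \phi_j^\top \theta$. Standard softmax calculus gives $\nabla f_j(\theta) = \sum_{k=j}^K p_{j,k}\phi_k - \phi_j$, where $p_{j,k} = \exp(\phi_k^\top \theta)/\sum_{l=j}^K \exp(\phi_l^\top \theta)$, and $\nabla^2 f_j(\theta) = \tfrac{1}{2}\sum_{k,l=j}^K p_{j,k}\,p_{j,l}(\phi_k-\phi_l)(\phi_k-\phi_l)^\top$, exactly as in the BTL derivation in the proof of Theorem~\ref{thm:1}, but with a softmax over $K-j+1$ elements instead of two.

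For Lipschitz continuity I would apply the triangle inequality together with the bound $\|\phi_k\| \le L$ from Assumption~\ref{ass:1}: since the softmax weights $p_{j,k}$ form a probability vector, $\|\nabla f_j(\theta)\| \le \sum_{k=j}^K p_{j,k}\|\phi_k\| + \|\phi_j\| \le 2L$. Summing over $j$ and pushing through $\mathbb{E}_{s,\,a_{1:K}}$ gives $\|\nabla \mathcal{L}_{\mathcal P}(\theta,\pi)\| \le 2LK$; the extra factor of $K$ in the stated $2LK^2$ bound is comfortably absorbed because one may bound each $\|\phi_k - \phi_j\|$ term independently when expanding $\nabla f_j$ in differential form, yielding a cruder but adequate estimate that matches what the subsequent error bound uses.

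For strong convexity, the crux is to lower-bound the softmax weights. Under Assumption~\ref{ass:1} we have $|\phi_k^\top \theta| \le LB$, so for every $k \ge j$, $p_{j,k} \ge e^{-2LB}/K$, and therefore $p_{j,k}\,p_{j,l} \ge e^{-4LB}/K^2$. This yields $\nabla^2 f_j(\theta) \succeq \frac{e^{-4LB}}{2K^2}\sum_{k,l \ge j}(\phi_k-\phi_l)(\phi_k-\phi_l)^\top$. Summing over $j$ and taking expectation under $(s, a_1,\ldots,a_K) \sim \rho \otimes \mu$, every pair-difference $\phi_k-\phi_l$ has the same marginal law as the pairwise differential feature $x$ appearing in Assumption~\ref{ass:2}, so $\mathbb{E}[(\phi_k-\phi_l)(\phi_k-\phi_l)^\top] \succeq \kappa I$. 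Counting that the double sum over $k,l,j$ contributes $\Theta(K^2)$ pair-terms cancels the $1/K^2$ loss in the softmax weight lower bound, leaving $\nabla^2 \mathcal{L}_{\mathcal P}(\theta,\pi) \succeq \kappa \gamma I$ with $\gamma = e^{-4LB}/2$, as desired.

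The main obstacle I expect is the accounting in the strong-convexity step: one must carefully justify that the marginal of an arbitrary pair $(a_k,a_l)$ extracted from the K-wise joint still inherits the coverage constant $\kappa$, and that the $K^2$ factor from counting pairs exactly balances the $K^2$ softmax-weight deficit so that $\gamma$ is free of $K$. A clean route is either to strengthen Assumption~\ref{ass:2} to apply to any pair drawn from the K-wise behavior policy (which is automatic when $\mu$ samples the $K$ actions exchangeably), or to redefine $\Sigma$ as the uniform average over the $\binom{K}{2}$ induced pairs; both options preserve the BTL recovery at $K=2$ and cleanly deliver the stated $\kappa\gamma$-strong convexity.
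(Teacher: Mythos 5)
Your proposal is correct and follows essentially the same route as the paper's proof (which adapts Theorem~4.1 of the $K$-wise MLE analysis): lower-bound the softmax coefficients in the Plackett--Luce Hessian by $e^{-4LB}$ over the squared number of remaining items, let the $\Theta(K^2)$ pair terms cancel that deficit against the coverage constant, and bound the gradient via boundedness of features (your $2LK$ gradient bound is in fact slightly tighter than the paper's $2LK^2$). You also correctly anticipated the one real gap: Assumption~\ref{ass:2} as stated only covers the single pairwise differential feature, and the paper resolves this exactly as you suggest, by introducing a strengthened coverage assumption requiring $\lambda_{\min}(\Sigma_{i,j})\geq\kappa$ for every pair $1\le i\neq j\le K$ inside the proof.
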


\begin{proof}[\textbf{Proof of Lemma~\ref{lemma:5}}]
Let \( s \) be a state and \( a_1, \ldots, a_K \) be \( K \) actions to be compared at that state. Let the label/preference feedback \( y \in \{1, 2, \ldots, K\} \) indicate which action is most preferred by the human labeler. Let
\[
x_{i,j} = \phi(s, a_i) - \phi(s, a_j), \quad 1 \leq i \neq j \leq K
\]
be the feature difference between actions \( a_i \) and \( a_j \) at state \( s \). Define the population covariance matrix
\[
\Sigma_{i,j} = \mathbb{E}_{s \sim \rho(\cdot), (a_1, \ldots, a_K) \sim \mu(\cdot \mid s)}\left[x_{i,j} x_{i,j}^\top \right].
\]

\begin{assumption}(Coverage of feature space)\label{ass:3}
 The data distributions \( \rho \), \( \mu \) are such that
\[
\lambda_{\min} \left(\Sigma_{i,j}\right) \geq \kappa \quad \text{for some constant} \quad \kappa > 0 \quad \text{for all} \quad 1 \leq i \neq j \leq K.
\]   
\end{assumption}

\[
\ell(\theta) = -\frac{1}{n} \sum_{i=1}^{n}\sum_{j=1}^{K} \log \left( \frac{\exp(\langle \theta, \phi(s^i, a_{\pi(j)}^i) \rangle)}{\sum_{k=j}^{K} \exp(\langle \theta, \phi(s^i, a_{\pi(k)}^i) \rangle)} \right)
\]

\textbf{Strongly convexity of} \(\mathcal{L}_{\mathcal{P}}(\theta , \pi )\)(follow Proof of Theorem 4.1 in \cite{kwise})

The Hessian of the negative log likelihood can be written as
{\small\[
\nabla^2 \mathcal{L}_{P}(\theta, \pi) =  \mathbb{E} \sum_{j=1}^{K} \sum_{k=j}^{K} \frac{\exp\left(\langle \theta, \phi(s , a_{\pi (j)}) + \phi(s , a_{\pi (k)}) \rangle\right)}{\left(\sum_{k'=j}^{K} \exp\left(\langle \theta, \phi(s , a_{\pi (k')}) \rangle\right)\right)^2} \cdot \left(\phi(s , a_{\pi (j)}) - \phi(s , a_{\pi (k)})\right) \left(\phi(s , a_{\pi (j)}) - \phi(s , a_{\pi (k)})\right)^{\top}
\]}

Since \(\exp(\theta, \phi)\) \(\in\) \([\exp(-LB), \exp(LB)]\), we know that the coefficients satisfy
\[
\frac{\exp\left(\langle \theta, \phi(s , a_{\pi (j)}) + \phi(s , a_{\pi (k)}) \rangle\right)}{\left(\sum_{k'=j}^{K} \exp\left(\langle \theta, \phi(s , a_{\pi (k')}) \rangle\right)\right)^2} \geq \frac{\exp(-4LB)}{2(K+1-j)^2}
\]

Set \(\gamma = \exp(-4LB)/2\). We can verify that for any vector \(v \in \mathbb{R}^K\), one has
\[
v^{\top} \nabla^2 \mathcal{L}_{P}(\theta, \pi) v \geq \mathbb{E} \gamma  v^{\top} \left[\sum_{j=1}^{K} \frac{1}{(K+1-j)^2} \sum_{k'=k}^{K}  \sum_{k=j}^{K} \left(\phi(s , a_{\pi (j)}) - \phi(s , a_{\pi })\right) \left(\phi(s , a_{\pi (j)}) - \phi(s , a_{\pi (k)})\right)^{\top}\right] v
\]

\[
\geq \mathbb{E} \gamma v^{\top} \left[ \min_{\pi \in \prod(K)}  \sum_{j=1}^{K} \frac{1}{(K+1-j)^2} \sum_{k=j}^{K} \sum_{k'=k}^{K} \left(\phi(s , a_{\pi (j)}) - \phi(s , a_{\pi (k)})\right)(\phi(s , a_{\pi (j)}) - \phi(s , a_{\pi (k)}))^{\top}\right] v
\]

\[
 \geq  \kappa \gamma \|v\|_{2}^2
\]

The last inequality uses Assumption ~\ref{ass:3}.\\
So, \(\mathcal{L}_{\mathcal{P}}(\theta , \pi )\) is \(\kappa\gamma\)-strongly convex with respect to \(\ell_{2}\)-norm.

\textbf{Lipschitz continuous}\\
The gradient of the negative log likelihood is
\[
\nabla \ell\left(\theta, \pi\right) = - \sum_{j=1}^{K} \sum_{k=j}^{K} \frac{\exp \left(\left\langle \theta, \phi\left(s , a_{\pi(k)} \right) \right\rangle \right)}{\sum_{k^{\prime}=j}^{K-1} \exp \left(\left\langle \theta, \phi\left(s , a_{\pi\left(k^{\prime}\right)} \right) \right\rangle \right)} \cdot \left(\phi\left(s , a_{\pi(j)} \right) - \phi\left(s , a_{\pi(k)} \right)\right).
\]

We set \( x_{j k}  = \phi\left(s , a_{j} \right) - \phi\left(s , a_{k} \right) \). \( X \in \mathbb{R}^{(K(K-1) / 2) \times d} \) has the differencing vector \( x_{j k}  \) as its \(( k + \sum_{l=K-j+1}^{K} l )^{\text{th}}\) row. We also define \( V_{j k}  \) to be the random variable of the coefficient of \( x_{j k}  \) in equation above under the PL model, i.e., conditioned on an arbitrary permutation \( \pi \).

\[
V_{jk}  = \begin{cases}
\frac{\exp \left(\left\langle\theta, \phi\left(s , a_{k} \right)\right\rangle\right)}{\sum_{k'=\pi^{-1}(j)}^{K} \exp \left(\left\langle\theta, \phi\left(s , a_{\pi(k')} \right)\right\rangle\right)}, & \text{if } \pi^{-1}(j) < \pi^{-1}(k) \\
-\frac{\exp \left(\left\langle\theta, \phi\left(s , a_{j}\right)\right\rangle\right)}{\sum_{k'=\pi^{-1}(k)}^{K} \exp \left(\left\langle\theta, \phi\left(s , a_{\pi(k')} \right)\right\rangle\right)}, & \text{otherwise.}
\end{cases}
\]

Here, \( \pi^{-1}(j) < \pi^{-1}(k) \) means that the \( j \)-th item ranks higher than the \( k \)-th item. Let \( V \in \mathbb{R}^{K(K-1)/2} \) be the concatenated random vector of \( \{V_{jk} \}_{1 \leq j < k \leq K} \).  Furthermore, since under any permutation, the sum of the absolute value of each element in \( V \) is at most \( K \). Thus,\\

\[
\left\|\nabla\ell(\theta, \pi)\right\|_{2}^{2} = V^{\top} X X^{\top} V \leq \|X\|_{2}^{2}\|V\|_{2}^{2} \leq 4 L^2 K^4 
\]

Thus, \(\nabla \ell\left(\theta, \pi\right)\) is \(2 K^2 L \)-Lipschitz continuous, and \(\mathcal{L}_{P}(\theta, \pi)\) is also \(2 K^2 L \)-Lipschitz continuous.
\end{proof}
The rest of the proof is the same as that of Theorem~\ref{thm:3}.

\subsection{\bf Proof of Theorem~\ref{thm:lower_bound}}
To establish our lower bounds, we begin by introducing key background concepts, notations, and properties.
Consider a family of distributions $\mathcal{P}$ over $(\mathcal{X}^m)^n$, where $\mathcal{X}$ represents the data universe, $m$ denotes the sample size, and $n$ refers to the user size. Our goal is to estimate a parameter $\theta$, which is a mapping $\theta: \mathcal{P} \rightarrow \theta$ that characterizes the underlying distribution.

To quantify the estimation error, we define a pseudo-metric $\rho: \Theta \times \Theta \rightarrow \mathbb{R}_{+}$, which serves as the loss function for evaluating the accuracy of the estimate. The minimax risk under the loss function $\rho$ for the class $\mathcal{P}$ is given by:
$$
R(\mathcal{P}, \rho):=\min _{\hat{\theta}} \max _{P \in \mathcal{P}} \mathbb{E}_{X \sim P}[\rho(\hat{\theta}(X), \theta(P))].
$$
In this study, we focus on estimating an unknown parameter, ${\theta}$, within the Bradley-Terry-Luce model under a use-level privacy framework. Data is collected from $n$ users, where each user provides $m$ samples. We consider a fixed design setup, meaning that the feature vectors $x_{i, j} \in \mathbb{R}^d$ for samples $i \in[n]$ from user $j \in[m]$ are predetermined and known. Our objective is to infer $\theta$ based on a sequence of private responses $\tilde{y}_{i, j}$.

Without privacy constraints, responses $y_{i, j}$ follow a logistic model:
\begin{equation}
    \begin{aligned}
    \mathbb{P}\left(y_{i, j}=1 \mid x_{i, j}\right)=\sigma\left(\theta^{\top} x_{i, j}\right)=\frac{1}{1+\exp \left(-\theta^{\top} x_{i, j}\right)}, \quad
    \mathbb{P}\left(y_{i, j}=0 \mid x_{i, j}\right)=1-\sigma\left(\theta^{\top} x_{i, j}\right).
\end{aligned}
\end{equation}
We refer to this distribution family as $\mathcal{P}_{\theta}$. Under the use-level privacy constraint, our goal is to construct an estimator $\hat{\theta}$ that closely approximates $\theta$ while preserving use-level-label differential privacy, ensuring that an entire user's labels remain protected. Depending on the estimation framework, the loss function $\rho$ is defined as either the squared $\ell_2$-norm or a squared semi-norm.

\begin{lemma}[Assouad's lemma \citealt{rewarddp}]\label{lem:assouad}
    Let $\mathcal{V} \subseteq (\mathcal{P})^m$ be a set of distributions indexed by the hypercube $\mathcal{E}_d=\{ \pm 1\}^{d}$. Suppose there exists a $\tau \in \mathbb{R}$ and $\alpha>0$, such that $\rho$ satisfies: (i) for all $u, v, w \in \mathcal{E}_d, \rho\left(\theta\left(P_u\right), \theta\left(P_v\right)\right) \geq 2 \tau \cdot \sum_{i=1}^d \mathbbm{1}\left(u_i \neq v_i\right)$ and (ii) $\rho\left(\theta\left(P_u\right), \theta\left(P_v\right)\right) \leq \alpha\left(\rho\left(\theta\left(P_u\right), \theta\left(P_w\right)\right)+\rho\left(\theta\left(P_v\right), \theta\left(P_w\right)\right)\right)$, i.e., $\alpha$-triangle inequality. For each $i \in[d]$, define the mixture distributions:
    $$
    P_{+i}:=\frac{2}{\left|\mathcal{E}_d\right|} \sum_{e\in \mathcal{E}_d: e_{i}=1} P_{e} \text { and } P_{+i}:=\frac{2}{\left|\mathcal{E}_d\right|} \sum_{e\in \mathcal{E}_d: e_{i}=-1} P_{e}.
    $$
    Then, we have
    $$
    R(\mathcal{P}, \rho) \geq \frac{\tau}{2 \alpha} \sum_{i=1}^d\left(1-\left\|P_{+i}-P_{-i}\right\|_{\mathrm{TV}}\right).
    $$
\end{lemma}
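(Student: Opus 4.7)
The plan is to prove the two summands separately---the non-private term $d/\sqrt{nm}$ via Assouad's lemma (\ref{lem:assouad}) and the private term $\sqrt{d}/(\sqrt{m}n(\varepsilon+\delta))$ via a DP Le Cam two-point method---and then combine them (the sum is equivalent up to a constant to the maximum of the two). Both constructions use the same feature design: each $x_{i,j}$ is drawn uniformly from the standard basis $\{e_1,\ldots,e_d\}$, so each coordinate of $\theta^*$ is informed by roughly $nm/d$ samples while the features stay bounded and the population covariance is isotropic (consistent with Assumption~\ref{ass:2} and the regime $\kappa \asymp 1/d$ in Remark~\ref{rmk:1}).

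For the non-private term I would index hypotheses by $\mathcal{E}_d = \{\pm 1\}^d$ with $\theta_e = c\,e$ for a parameter $c>0$ to be chosen. Condition (i) of Assouad holds with $\tau = c/\sqrt{d}$ since $\|\theta_u - \theta_v\|_2 = 2c\sqrt{H(u,v)} \geq (2c/\sqrt{d})\,H(u,v)$, and $\alpha = 1$ by the $\ell_2$ triangle inequality. A direct Bernoulli KL computation for the BTL likelihood gives $\mathrm{KL}(P_{+i}\|P_{-i}) \lesssim nmc^2/d$, because only the $\approx nm/d$ samples on coordinate $i$ contribute and each carries per-sample KL of order $c^2$; Pinsker then forces $\|P_{+i}-P_{-i}\|_{\mathrm{TV}} \leq 1/2$ whenever $c \lesssim \sqrt{d/(nm)}$. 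Substituting into Lemma~\ref{lem:assouad} gives $\mathbb{E}\|\widehat{\theta}-\theta^*\| \geq (c/(2\sqrt{d}))\cdot (d/2) = \Omega(c\sqrt{d}) = \Omega(d/\sqrt{nm})$.

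For the private term I would run a two-hypothesis (Le Cam) reduction with $\theta^{(0)} = 0$ and $\theta^{(1)} = c\,v$ for an arbitrary unit vector $v$. Under the uniform-basis feature design, the per-user KL between the two data distributions is $\lesssim mc^2/d$ (each of the user's $m$ samples contributes an average Bernoulli KL of order $c^2/d$), so per-user TV is $\lesssim c\sqrt{m/d}$ by Pinsker. The crucial DP step is a coupling lemma: for any $(\varepsilon,\delta)$-user-level DP mechanism $M$, $\|M(P^{(0)}) - M(P^{(1)})\|_{\mathrm{TV}} \lesssim \mathbb{E}[k](\varepsilon+\delta)$, where $k$ counts the number of users whose records actually differ under an optimal coupling of the two product dataset distributions, and the minimal coupling attains $\mathbb{E}[k] \leq n \cdot (\text{per-user TV}) \lesssim nc\sqrt{m/d}$. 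Forcing the resulting output TV to be at most $1/2$ requires $c \lesssim \sqrt{d}/(n\sqrt{m}(\varepsilon+\delta))$, and Le Cam's method then gives $\mathbb{E}\|\widehat{\theta}-\theta^*\| \geq c/2 = \Omega(\sqrt{d}/(\sqrt{m}n(\varepsilon+\delta)))$.

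The main obstacle will be making the DP coupling rigorous for approximate $(\varepsilon,\delta)$-DP rather than pure $\varepsilon$-DP: the naive group-privacy expansion $(\varepsilon,\delta)\mapsto (k\varepsilon,\,k e^{(k-1)\varepsilon}\delta)$ yields a per-coupling TV bound of order $k\varepsilon + k e^{k\varepsilon}\delta$ that degrades rapidly in $k$, so one must either truncate the coupling to the event $k = O(1/\varepsilon)$ (which still occurs with constant probability when $\mathbb{E}[k]$ is small, by Markov) or invoke a sharper user-level DP coupling bound from the literature. A secondary technical check is that, for the chosen $c$, the hypothesis class satisfies Assumption~\ref{ass:1}---in particular $\|\theta_e\|= c\sqrt{d} \leq B$---and that the uniform-basis design realises $\Sigma = (1/d) I_d$, so that $\lambda_{\min}(\Sigma) = 1/d$ fits the regime used to derive the stated upper bound.
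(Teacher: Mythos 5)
Your proposal does not prove the statement it was asked to prove. The statement is Assouad's lemma itself---a general reduction asserting that for \emph{any} family of distributions indexed by the hypercube satisfying conditions (i) and (ii), the minimax risk is bounded below by $\frac{\tau}{2\alpha}\sum_{i=1}^d\left(1-\left\|P_{+i}-P_{-i}\right\|_{\mathrm{TV}}\right)$. What you have written instead is a sketch of how to \emph{apply} this lemma (together with a Le Cam two-point argument) to derive the lower bound for the BTL model under user-level DP, i.e.\ the content of Theorem~\ref{thm:lower_bound}. Choosing $\theta_e = c\,e$, computing per-coordinate KL divergences, tuning $c$, and handling the DP coupling are all steps that take the lemma as given; none of them establishes the lemma's inequality. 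The paper itself does not reprove this lemma either (it cites it from \citealt{rewarddp}), but a correct proof would proceed along the standard lines: for any estimator $\hat\theta$ define a coordinate-wise test $\hat v_i \in \{\pm 1\}$ by minimizing $\rho(\theta(P_{\hat v}),\hat\theta)$ over the hypercube, use condition (i) to lower-bound $\rho(\theta(P_u),\theta(P_{\hat v}))$ by $2\tau$ times the Hamming distance between $u$ and $\hat v$, invoke the $\alpha$-triangle inequality (ii) to transfer this to the estimation error, and then lower-bound each coordinate's average testing error under the mixtures $P_{+i}, P_{-i}$ by $\tfrac12\left(1-\left\|P_{+i}-P_{-i}\right\|_{\mathrm{TV}}\right)$ via the Neyman--Pearson characterization of total variation. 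That chain of reductions is the missing content.

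As a secondary remark: even read as a proposal for Theorem~\ref{thm:lower_bound}, your route differs from the paper's in the private term---the paper stays within the Assouad framework and uses a DP-aware version of the lemma with a coupling bound $\mathbb{E}[d_{\mathrm{Ham}}(X,Y)]\le \sqrt{m}\,n\Delta$, yielding the factor $0.9e^{-10\varepsilon D}-10D\delta$, whereas you propose a separate Le Cam two-point reduction. Both can work, but that comparison is moot here since neither is a proof of the lemma in question.
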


\begin{corollary}\label{cor: R(P,p)}
    Under the same conditions of Lemma~\ref{lem:assouad}, we have
    $$
    R(\mathcal{P}, \rho) \geq \frac{d \tau}{2 \alpha}\left[1-\left(\frac{1}{d} \sum_{i=1}^d \frac{1}{2^d} \sum_{e \in \mathcal{E}_d}\left\|P_e-P_{\bar{e}^i}\right\|_{\mathrm{TV}}^2\right)^{1 / 2}\right].
    $$
    where $\bar{e}^i$ is a vector in $\mathcal{E}_d$ that flips the $i$-th coordinate of $e$.
\end{corollary}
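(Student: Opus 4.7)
The plan is to derive the corollary as a direct consequence of Assouad's lemma by performing a two-stage convexity/Jensen manipulation on the right-hand side of the lemma. Starting from
\[
R(\mathcal{P}, \rho) \geq \frac{\tau}{2\alpha}\sum_{i=1}^{d}\bigl(1 - \|P_{+i}-P_{-i}\|_{\mathrm{TV}}\bigr) = \frac{d\tau}{2\alpha}\biggl(1 - \frac{1}{d}\sum_{i=1}^{d}\|P_{+i}-P_{-i}\|_{\mathrm{TV}}\biggr),
\]
the goal is to upper-bound the averaged TV term by the square-root expression appearing in the corollary, so that the overall lower bound only weakens accordingly.

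First I would handle the outer average over $i$. By the concavity of $\sqrt{\cdot}$ (Jensen's inequality),
\[
\frac{1}{d}\sum_{i=1}^{d}\|P_{+i}-P_{-i}\|_{\mathrm{TV}} \leq \left(\frac{1}{d}\sum_{i=1}^{d}\|P_{+i}-P_{-i}\|_{\mathrm{TV}}^{2}\right)^{1/2}.
\]
Next, I would replace each mixture distance $\|P_{+i}-P_{-i}\|_{\mathrm{TV}}$ with an average of pairwise distances $\|P_e - P_{\bar{e}^i}\|_{\mathrm{TV}}$. Using the definition of the mixtures together with the natural bijection $e \leftrightarrow \bar{e}^i$ between $\{e : e_i=1\}$ and $\{e : e_i=-1\}$, write
\[
P_{+i} - P_{-i} = \frac{2}{2^{d}}\sum_{e:\, e_i = 1}\bigl(P_e - P_{\bar{e}^i}\bigr),
\]
and invoke the convexity (triangle inequality) of TV distance together with the symmetry $\|P_e - P_{\bar{e}^i}\|_{\mathrm{TV}} = \|P_{\bar{e}^i} - P_e\|_{\mathrm{TV}}$ to obtain
\[
\|P_{+i}-P_{-i}\|_{\mathrm{TV}} \leq \frac{1}{2^{d}}\sum_{e \in \mathcal{E}_d}\|P_e - P_{\bar{e}^i}\|_{\mathrm{TV}}.
\]

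Finally I would square and apply Jensen once more to the uniform average over $e$ (so the square moves inside the sum), yielding
\[
\|P_{+i}-P_{-i}\|_{\mathrm{TV}}^{2} \leq \frac{1}{2^{d}}\sum_{e \in \mathcal{E}_d}\|P_e - P_{\bar{e}^i}\|_{\mathrm{TV}}^{2}.
\]
Averaging this over $i$, substituting into the previous Jensen bound on the outer average, and inserting the result into Assouad's lower bound produces exactly the claimed expression. There is no substantive obstacle here; the only point requiring care is the bookkeeping in the pairing argument (ensuring each $P_e$ appears with the correct weight once the $e \leftrightarrow \bar{e}^i$ pairing is undone and both halves of $\mathcal{E}_d$ are combined), which is what lets the inner sum be rewritten as a sum over the full hypercube $\mathcal{E}_d$ rather than only one of its halves.
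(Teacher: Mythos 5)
Your proof is correct, and it is the standard derivation of this refinement of Assouad's lemma: Cauchy--Schwarz (Jensen) over the coordinate average, convexity of the TV norm applied to the mixture decomposition via the pairing $e \leftrightarrow \bar{e}^i$ (with the factor of $2$ correctly absorbed when passing to the sum over all of $\mathcal{E}_d$), and one more Jensen step to move the square inside the average over $e$. The paper states this corollary without proof (citing prior work), and your argument is exactly the one it implicitly relies on.
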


\begin{lemma}[Assouad's lemma \citealt{rewarddp}]
    Let the same conditions of Lemma~\ref{lem:assouad} hold. If for all $i \in[d]$, there exists a coupling $(X, Y)$ between $P_{+i}$ and $P_{-i}$ with $\mathbb{E}\left[d_{\operatorname{Ham}}(X, Y)\right] \leq D$ for some $D \geq 0$, then
    $$
    R(\mathcal{P}, \rho, \varepsilon, \delta) \geq \frac{d \tau}{2 \alpha} \cdot\left(0.9 e^{-10 \varepsilon D}-10 D \delta\right).
    $$
\end{lemma}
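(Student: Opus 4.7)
The plan is to combine the classical Assouad's lemma (the preceding Lemma~\ref{lem:assouad}) with a coupling-based TV contraction inequality for $(\varepsilon, \delta)$-DP mechanisms. Viewing the private estimator $\hat{\theta}$ as a Markov kernel $M$, the quantity that naturally appears in the Assouad sum is $\|M(P_{+i}) - M(P_{-i})\|_{TV}$, where $M(P_{\pm i})$ denotes the pushforward of $P_{\pm i}$ through $\hat{\theta}$. Since the private estimator may be applied to $X \sim P_{+i}$ (or $P_{-i}$) and then post-processed into a guess of $\theta(P_{\pm i})$, applying Lemma~\ref{lem:assouad} directly to the family $\{M(P_e)\}_{e \in \mathcal{E}_d}$ yields
\begin{equation*}
R(\mathcal{P}, \rho, \varepsilon, \delta) \geq \frac{\tau}{2\alpha} \sum_{i=1}^{d}\bigl(1 - \|M(P_{+i}) - M(P_{-i})\|_{TV}\bigr).
\end{equation*}
The remaining job is therefore to upper-bound each TV term using the assumed coupling with $\mathbb{E}_{\pi}[d_{\operatorname{Ham}}(X,Y)] \leq D$.

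The key technical step is a standard DP coupling lemma in the spirit of Karwa--Vadhan / Acharya--Sun--Zhang. For any fixed $x, y$ with $d_{\operatorname{Ham}}(x, y) = k$, group privacy shows $M$ is $(k\varepsilon, k e^{(k-1)\varepsilon}\delta)$-DP between $x$ and $y$, which implies $\|M(x) - M(y)\|_{TV} \leq 1 - e^{-k\varepsilon} + k\delta\, e^{k\varepsilon}$. To lift this to an expectation under the coupling $\pi$, first use the convexity inequality $\|M(P_{+i}) - M(P_{-i})\|_{TV} \leq \mathbb{E}_{\pi}\bigl[\|M(X) - M(Y)\|_{TV}\bigr]$, then apply Markov's inequality $\Pr_{\pi}[d_{\operatorname{Ham}}(X, Y) > 10 D] \leq 1/10$ to split the expectation into a ``good event'' (on which group privacy at scale $k \leq 10 D$ applies) and a ``bad event'' (on which TV is bounded trivially by $1$). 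The truncation threshold $10 D$ is chosen so that the bad event contributes at most $1/10$ to the upper bound; after the bookkeeping this yields $\|M(P_{+i}) - M(P_{-i})\|_{TV} \leq 1 - 0.9\, e^{-10\varepsilon D} + 10 D \delta$.

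Assembly is immediate: substituting this TV bound into the Assouad sum and noting that all $d$ terms are identical gives
\begin{equation*}
R(\mathcal{P}, \rho, \varepsilon, \delta) \geq \frac{d\tau}{2\alpha} \cdot \bigl(0.9\, e^{-10\varepsilon D} - 10 D \delta\bigr),
\end{equation*}
as required.

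The main obstacle is the constant-tracking inside the coupling-based TV contraction lemma. The naive bound picks up an $e^{10 \varepsilon D}$ factor in front of $\delta$ from group privacy, so one must either invoke the intended small-$\varepsilon D$ regime to absorb this factor into the constants, or use a sharper coupling argument that analyzes the joint density ratio directly rather than composing group-privacy bounds. A second subtlety is justifying the convexity inequality $\|M(P) - M(Q)\|_{TV} \leq \mathbb{E}_{\pi}[\|M(X) - M(Y)\|_{TV}]$ cleanly, which follows from $\sup_S$--$\int$ exchange but is worth writing out carefully. Once the coupling lemma is established with the stated constants, the rest of the argument is a routine plug-in to Lemma~\ref{lem:assouad}.
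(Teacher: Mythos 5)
The paper never proves this lemma: it is imported verbatim (via \citealt{rewarddp}, and ultimately it is the ``private Assouad'' bound of Acharya--Sun--Zhang), so there is no in-paper proof to compare against. Your proposal follows the standard route for that result --- reduce to classical Assouad applied to the pushforward family $\{M(P_e)\}$, then control $\|M(P_{+i})-M(P_{-i})\|_{\mathrm{TV}}$ via the coupling, Markov's inequality at threshold $10D$, and group privacy --- and that skeleton is correct.

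However, the step you yourself flag as ``the main obstacle'' is a genuine gap, not a bookkeeping issue, and your sketch as written does not deliver the stated constants. Composing $(\varepsilon,\delta)$-DP in the usual direction along a Hamming path of length $k$ gives $\Pr[M(x)\in S]\le e^{k\varepsilon}\Pr[M(y)\in S]+\delta\sum_{j=0}^{k-1}e^{j\varepsilon}$, so the $\delta$ term inherits a factor of order $e^{k\varepsilon}$; with $k\le 10D$ this produces $10D\delta\,e^{10\varepsilon D}$, not $10D\delta$, and no ``small $\varepsilon D$ regime'' is available since the lemma is applied with $D=\sqrt{m}\,n\Delta$ chosen so that $\varepsilon D$ is a constant. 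The fix is to iterate the inequality in the reverse direction: from $\Pr[M(y_{j})\in S]\ge e^{-\varepsilon}\Pr[M(y_{j-1})\in S]-e^{-\varepsilon}\delta$ one gets, after $k$ steps, $\Pr[M(y)\in S]\ge e^{-k\varepsilon}\Pr[M(x)\in S]-\delta\sum_{j=1}^{k}e^{-j\varepsilon}\ge e^{-k\varepsilon}\Pr[M(x)\in S]-k\delta$, so the $\delta$ contribution telescopes with no exponential blow-up. A second, smaller point: if you convert to a TV bound and handle the bad event $\{d_{\operatorname{Ham}}>10D\}$ by an additive $1/10$, you end up with $e^{-10\varepsilon D}-0.1-10D\delta$, which is weaker than the claimed $0.9e^{-10\varepsilon D}-10D\delta$. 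To get the multiplicative constant $0.9$ you should keep the argument in hypothesis-testing form: bound $\inf_S\bigl(\Pr_{P_{+i}}[M\in S^c]+\Pr_{P_{-i}}[M\in S]\bigr)$ directly, use the one-sided group-privacy inequality on the good event to write $\Pr_{P_{-i}}[M\in S]\ge e^{-10\varepsilon D}\bigl(\Pr_{P_{+i}}[M\in S]-0.1\bigr)-10D\delta$, and then combine with $\Pr_{P_{+i}}[M\in S^c]\ge e^{-10\varepsilon D}\Pr_{P_{+i}}[M\in S^c]$, so the two terms sum to $0.9e^{-10\varepsilon D}-10D\delta$. With those two repairs your argument is a complete and correct proof of the stated lemma.
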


\begin{fact}\label{fact:kl}
    Let $p_a=\frac{1}{1+e^t}$ and $p_b=\frac{1}{1+e^b}$. Then, the sum of KL-divergences between the corresponding Bernoulli distributions satisfies:
    $$
    \mathrm{kl}\left(p_a \| p_b\right)+\mathrm{kl}\left(p_b \| p_a\right) \leq(a-b)^2,
    $$
    where $\mathrm{kl}(p \| q)$ represents the KL-divergence between Bernoulli distributions with parameters $p$ and $q$
    $
    D_{\mathrm{KL}}(\operatorname{Bernoulli}(p) \| \operatorname{Bernoulli}(q)) .
    $
\end{fact}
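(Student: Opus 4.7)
\textbf{Proof proposal for Fact~\ref{fact:kl}.} The plan is to reduce the symmetric KL divergence between the two Bernoullis to a simple closed-form expression involving $a-b$, and then bound the remaining factor using the logistic structure of $p_a,p_b$.

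First, I will expand the symmetric KL directly from the definition. For any two Bernoulli parameters $p,q\in(0,1)$,
\[
\mathrm{kl}(p\|q)+\mathrm{kl}(q\|p)=(p-q)\Bigl(\log\tfrac{p}{1-p}-\log\tfrac{q}{1-q}\Bigr)=(p-q)\log\tfrac{p(1-q)}{q(1-p)}.
\]
This identity is a one-line rearrangement: the cross terms $p\log q$, $(1-p)\log(1-q)$, $q\log p$, $(1-q)\log(1-p)$ regroup into the claimed product of a difference of probabilities and a difference of log-odds. Crucially, for $p_a=1/(1+e^a)$ we have $1-p_a=e^a/(1+e^a)$, so $\log\bigl(p_a/(1-p_a)\bigr)=-a$, and analogously $\log\bigl(p_b/(1-p_b)\bigr)=-b$. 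Plugging in,
\[
\mathrm{kl}(p_a\|p_b)+\mathrm{kl}(p_b\|p_a)=(p_a-p_b)\bigl((-a)-(-b)\bigr)=(p_b-p_a)(a-b).
\]

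Next I will control the factor $p_b-p_a$ in terms of $a-b$ using the mean value theorem on $f(x)=1/(1+e^x)$. Since $f'(x)=-e^x/(1+e^x)^2=-f(x)\bigl(1-f(x)\bigr)$, there exists $c$ between $a$ and $b$ with $p_a-p_b=f(a)-f(b)=-f(c)(1-f(c))(a-b)$, equivalently $p_b-p_a=f(c)(1-f(c))(a-b)$. Therefore
\[
\mathrm{kl}(p_a\|p_b)+\mathrm{kl}(p_b\|p_a)=f(c)\bigl(1-f(c)\bigr)(a-b)^2.
\]
Since $t(1-t)\le 1/4\le 1$ for all $t\in[0,1]$, this gives the desired inequality $\mathrm{kl}(p_a\|p_b)+\mathrm{kl}(p_b\|p_a)\le (a-b)^2$ (indeed with an extra factor of $1/4$ to spare).

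There is really no hard step here; the only subtlety is bookkeeping with signs, specifically noticing that under the convention $p_a=1/(1+e^a)$ the log-odds is $-a$ rather than $+a$, so the $(a-b)$ factor comes out with the correct sign to combine with $p_b-p_a$ and produce a nonnegative square. If one preferred an entirely calculation-free route, one could alternatively prove the symmetric-KL identity and then invoke the known fact that the logistic link has derivative bounded by $1/4$; both routes lead to the same constant.
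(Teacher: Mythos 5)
Your proof is correct and follows the same skeleton as the paper's: both start from the identity $\mathrm{kl}(p_a\|p_b)+\mathrm{kl}(p_b\|p_a)=(p_a-p_b)\log\bigl(\tfrac{p_a}{1-p_a}\cdot\tfrac{1-p_b}{p_b}\bigr)$ and use that the log-odds of $p_a=1/(1+e^a)$ is $-a$, reducing everything to bounding $(p_b-p_a)(a-b)$. The only divergence is the last step: the paper bounds $\tfrac{1}{1+e^a}-\tfrac{1}{1+e^b}\le 1-e^{a-b}\le b-a$ by elementary algebra plus $e^{z}\ge 1+z$, whereas you apply the mean value theorem to $f(x)=1/(1+e^x)$ and use $f'=-f(1-f)$ with $|f(1-f)|\le 1/4$. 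Your route is no harder and actually yields the sharper bound $\tfrac{1}{4}(a-b)^2$, though the extra factor of $1/4$ is immaterial for the lower-bound argument in which the fact is used.
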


\begin{proof}[\bf Proof of Fact~\ref{fact:kl}]
By direct calculation, the sum of KL-divergences simplifies to:
$$
\mathrm{kl}\left(p_a \| p_b\right)+\operatorname{kl}\left(p_b \| p_a\right)=\left(p_a-p_b\right) \log \left(\frac{p_a}{1-p_a} \cdot \frac{1-p_b}{p_b}\right).
$$
From the definitions of $p_a$ and $p_b$, we substitute:
$$
\left(p_a-p_b\right) \log \left(\frac{p_a}{1-p_a} \cdot \frac{1-p_b}{p_b}\right)=\left(\frac{1}{1+e^a}-\frac{1}{1+e^b}\right) \cdot(b-a).
$$
Now, assuming without loss of generality that $b \geq a$, we bound the difference:
$$
\frac{1}{1+e^a}-\frac{1}{1+e^b} \leq \frac{e^b-e^a}{e^b}=1-e^{a-b}.
$$
Using the inequality $e^{a-b} \geq 1+(a-b)$, we obtain:
$$
1-e^{a-b} \leq b-a.
$$
Thus, combining these results gives the final bound:
$$
\mathrm{kl}\left(p_a \| p_b\right)+\mathrm{kl}\left(p_b \| p_a\right) \leq(a-b)^2.
$$
This completes the proof.
\end{proof}

Now, we proceed to prove Theorem~\ref{thm:lower_bound}, which we break down into two parts: the non-private component and the private component.

Non-private component. We begin by selecting a parameter $\Delta>0$ and defining $\theta_e=\Delta e$ for each $e \in \mathcal{E}_d=\{ \pm 1\}^{d }$. 
Each $P_e$ is a probability distribution associated with a specific vector $e \in \mathcal{E}_d$. This means that for every binary vector $e=\left(e_1, e_2, \ldots, e_d\right)$, there is a corresponding probability distribution $P_e$. Since we consider each user provides $m$ samples, thus here $\mathcal{P}_{\theta}^{\otimes m}$ is a $m$-product distribution.
Our goal is to verify the two conditions stated in Lemma~\ref{lem:assouad}.

First, we observe that the function $\rho=\|\cdot\|_2^2$ adheres to the 2-triangle inequality, meaning $\alpha=2$. Additionally, for any $u, v \in \mathcal{E}_d$, the squared norm difference satisfies
$$
\left\|\theta_u-\theta_v\right\|_2^2=4 \Delta^2 \sum_{i=1}^d 1\left(u_i \neq v_i\right),
$$
indicating that $\tau=2 \Delta^2$.
Now, let $P_e^{n}$ represent the distribution corresponding to $n$ independent samples from $m$ users of the (non-private) observations $y_{i, j}$ when $\theta=\theta_e$. Then, by applying Corollary~\ref{cor: R(P,p)}, we obtain
$$
R\left(\mathcal{P}_{\theta}^{\otimes m},\|\cdot\|_2^2, \varepsilon, \delta\right) \geq R\left(\mathcal{P}_{\theta}^{\otimes m},\|\cdot\|_2^2\right) \geq \frac{d \Delta^2}{2}\left[1-\left(\frac{1}{d} \sum_{i=1}^d \frac{1}{2^d} \sum_{e \in \mathcal{E}_d}\left\|P_e^{n}-P_{\bar{e}^i}^{n}\right\|_{\mathrm{TV}}^2\right)^{1 / 2}\right].
$$
Using Pinsker's inequality along with the chain rule for KL-divergence, we obtain the following bound by bounding the total variation (TV) distance for any $u, v \in$ $\mathcal{E}_d:$
$$
\begin{aligned}
\left\|P_u^{n}-P_v^{n}\right\|_{\mathrm{TV}}^2 & \leq \frac{1}{4}\left(D_{\mathrm{KL}}\left(P_u^{n} \| P_v^{n}\right)+D_{\mathrm{KL}}\left(P_v^{n} \| P_u^{n}\right)\right) \\
& =\frac{1}{4} \sum_{j=1}^m\sum_{k=1}^n\left(\mathrm{kl}\left(p_u\left(x_{k,j}\right) \| p_v\left(x_{k,j}\right)\right)+\mathrm{kl}\left(p_v\left(x_{k,j}\right) \| p_u\left(x_{k,j}\right)\right)\right).
\end{aligned}
$$
Next, applying Fact~\ref{fact:kl}, we derive an upper bound on the total variation distance:
$$
\left\|P_u^{n}-P_v^{n}\right\|_{\mathrm{TV}}^2 \leq \frac{\Delta^2}{4} \sum_{j=1}^m\sum_{k=1}^n\left(x_{k,j}^{\top}(u-v)\right)^2.
$$
Moreover, we have
$$
\begin{aligned}
\frac{1}{d 2^d} \sum_{i=1}^d \sum_{e \in \mathcal{E}_d}\left\|P_e^{n}-P_{\bar{e}^i}^{n}\right\|_{\mathrm{TV}}^2 
\leq  \frac{\Delta^2}{4 d} \frac{1}{2^d} \sum_{e \in \mathcal{E}_d} \sum_{i=1}^d \sum_{j=1}^m \sum_{k=1}^n\left(2 x_{k, j, i}\right)^2 .
\end{aligned}
$$
Rewriting in terms of the Frobenius norm,
$$
\frac{1}{d 2^d} \sum_{i=1}^d \sum_{e \in \mathcal{E}_d}\left\|P_e^{n}-P_{\bar{e}^i}^{n}\right\|_{\mathrm{TV}}^2 =\frac{\Delta^2}{d} \frac{1}{2^d} \sum_{e \in \mathcal{E}_d} \sum_{i=1}^d \sum_{j=1}^m \sum_{k=1}^n x_{k, j, i}^2=\frac{\Delta^2}{d} \frac{1}{2^d} \sum_{e \in \mathcal{E}_d}\|X\|_F^2.
$$
Here, $X \in \mathbb{R}^{m n \times d}$ represents the aggregated data matrix, where each row $x_{k, j}^{\top} \in \mathbb{R}^d$ corresponds to a sample from user $j$, and $\|\cdot\|_{\mathrm{F}}$ denotes the Frobenius norm. Using this bound, we derive the following lower bound:
$$
R\left(\mathcal{P}_{\theta}^{\otimes m},\|\cdot\|_2^2, \varepsilon, \delta\right) \geq \frac{d \Delta^2}{2}\left[1-\left(\frac{\Delta^2}{d}\|X\|_F^2\right)^{1 / 2}\right].
$$
To optimize the bound, we set:
$$
\Delta^2=\frac{d}{4\|X\|_F^2}.
$$
which simplifies the expression to:
$$
R\left(\mathcal{P}_{\theta}^{\otimes m},\|\cdot\|_2^2, \varepsilon, \delta\right) \geq \frac{d^2}{16\|X\|_{\mathrm{F}}^2}=\frac{d}{m n} \cdot \frac{1}{16 \frac{1}{d m n} \sum_{j=1}^m \sum_{k=1}^n\left\|x_{k, j}\right\|^2}.
$$
Given the assumption that $\left\|x_{k, j}\right\|^2 \leq L^2$, we further refine the bound as:
$$
R\left(\mathcal{P}_{\theta}^{\otimes m},\|\cdot\|_2^2, \varepsilon, \delta\right) \geq \Omega\left(\frac{d}{L^2} \cdot \frac{d}{m n}\right).
$$

Now we consider the {\bf Private component}. Similar to the non-private part, we verify $\Delta, \theta_e, \rho$. Since we consider user-level privacy, we will assume $\mathcal{P}_{\theta}^{\otimes m}$ is an m-product distribution. Applying Lemma~\ref{lem:assouad}, we obtain the following lower bound:
$$
R\left(\mathcal{P}_{\theta}^{\otimes m},\|\cdot\|_2^2, \varepsilon, \delta\right) \geq \frac{d \Delta^2}{2}\left(0.9 e^{-10 \varepsilon D}-10 D \delta\right).
$$
By using the fact that $e^z \geq 1+x$, we further derive:
$$
R\left(\mathcal{P}_{\theta}^{\otimes m},\|\cdot\|_2^2, \varepsilon, \delta\right) \geq \frac{d \Delta^2}{2}(0.9-10 D(\varepsilon+\delta)).
$$
Here, $D$ represents an upper bound on the expected Hamming distance between $(X, Y)$, where $(X, Y)$ is a coupling of $P_{+i}^{n}$ and $P_{-i}^{n}$. Our next step is to determine an appropriate bound for $D$, which requires analyzing the expected Hamming distance between these two product distributions.
For a lower bound, it suffices to consider $x_{k,j}=x \in \mathbb{R}^d$ where $\|x\|_{\infty} \leq 1$ for all $k \in[n], j \in [m]$. Applying the standard result on maximal coupling, we obtain:
$$
\mathbb{E}\left[d_{\mathrm{Ham}}(X, Y)\right]=n\left\|P_{+i}-P_{-i}\right\|_{\mathrm{TV}}.
$$
Using above and the joint convexity of TV distance, we derive:
$$
\begin{aligned}
\left\|P_{+i}-P_{-i}\right\|_{\mathrm{TV}} & =\left\|\frac{1}{\left|\mathcal{E}_d\right|} \sum_{e \in \mathcal{E}_e} P_{e,+i}-P_{e,-i}\right\|_{\mathrm{TV}} \\
& \leq \frac{1}{\left|\mathcal{E}_d\right|} \sum_{e \in \mathcal{E}_d}\left\|P_{e,+i}-P_{e,-i}\right\|_{\mathrm{TV}} \\
& \leq \max _{e \in \mathcal{E}_{d, i \in[d]}}\left\|P_{e,+i}-P_{e,-i}\right\|_{\mathrm{TV}} \\
& =\max _{e \in \mathcal{E}_b, i \in[d]}\left\|P_e-P_{\bar{e}^i}\right\|_{\mathrm{TV}}.
\end{aligned}
$$
Here, $\bar{e}^i$ represents a modified version of $e$ where its $i$-th coordinate is flipped. Applying Pinsker's inequality for any $i \in[d]$, we obtain:
$$
\left\|P_{e}-P_{\bar{e}^i}\right\|_{\mathrm{TV}}^2 \leq \frac{1}{4}\left(D_{\mathrm{KL}}\left(P_e \| P_{\bar{e}^i}\right)+D_{\mathrm{KL}}\left(P_e \| P_{\bar{e}^i}\right)\right) \leq m \Delta^2.
$$
The last inequality follows from Fact~\ref{fact:kl}, the assumption that $\|x\|_{\infty} \leq 1$ and the conclusion in \cite{userleveldp1}. Combining these results, we establish :
$$
\mathbb{E}\left[d_{\mathrm{Ham}}(X, Y)\right]= n\left\|P_{+i}-P_{-i}\right\|_{\mathrm{TV}} \leq \sqrt{m}n \Delta:=D.
$$
With the derived value of $D$, we now arrive at the following lower bound:
$$
R\left(\mathcal{P}_{\theta}^{\otimes m},\|\cdot\|_2^2, \varepsilon, \delta\right) \geq \frac{d \Delta^2}{2}(0.9-10 \sqrt{m}n \Delta(\varepsilon+\delta)).
$$
To optimize this bound, we select
$$
\Delta=\frac{0.04}{\sqrt{m}n(\varepsilon+\delta)}.
$$
Substituting this choice into the expression, we obtain:
$$
R\left(\mathcal{P}_{\theta}^{\otimes m},\|\cdot\|_2^2, \varepsilon, \delta\right) \geq c \cdot \frac{d}{m n^2(\varepsilon+\delta)^2}.
$$
for some universal constant $c$. Finally, combining this result with the non-private case completes the proof.
\end{document}